\documentclass[11pt,pmlr,oneside]{jmlr}
\usepackage[T1]{fontenc}
\usepackage{mathtools,booktabs,array,microtype,orcidlink,needspace,flafter,placeins}
\usepackage{xspace,graphicx}
\allowdisplaybreaks[1]
\hypersetup{hidelinks,
 pdftitle={Bandits with Probing: Optimal Regret and the Limits of Winner Feedback},
 pdfauthor={Yongjie Guan},
 pdfsubject={Optimal budget laws, signed contrasts, and the dependence boundary of winner feedback},
 pdfkeywords={online learning, probing, winner feedback, Tsallis mirror descent, minimax regret}}
\makeatletter
\renewcommand{\ps@jmlrtps}{\let\@oddhead\@empty\let\@evenhead\@empty\def\@oddfoot{\hfill\thepage\hfill}\let\@evenfoot\@oddfoot}
\makeatother
\title[Bandits with Probing]{Bandits with Probing:\\Optimal Regret and the Limits of Winner Feedback}
\ifdefined\anonymousversion
\author[Anonymous]{\Name{Anonymous Author}\\
 \addr Anonymous Affiliation}
\hypersetup{pdfauthor={}}
\else
\author[Guan]{\Name{Yongjie Guan\,\orcidlink{0009-0002-1071-2741}}\\
 \addr Zhejiang University of Technology}
\fi
\newcommand{\E}{\mathbb E}
\newcommand{\Prb}{\mathbb P}
\newcommand{\ind}{\mathbf 1}
\newcommand{\BP}{\ensuremath{\mathsf{BestProbe}}\xspace}
\newcommand{\CT}{\ensuremath{\mathsf{Contrast}}\xspace}
\newcommand{\AP}{\ensuremath{\mathsf{AllProbe}}\xspace}
\newcommand{\cV}{\mathcal V}
\newcommand{\Fbk}{\mathsf F}
\DeclareMathOperator{\Var}{Var}
\DeclareMathOperator{\Bern}{Bernoulli}
\numberwithin{equation}{section}
\LinesNumbered
\begin{document}
\maketitle
\begin{abstract}
A learner probes at most $k$ of $n$ arms each round, receives the maximum of their rewards in $[0,1]$, and competes with the best fixed arm. When does the probing advantage pay for learning? We determine two minimax laws. Under independent stochastic rewards with winner feedback---the maximum and a winning label---or on arbitrary fixed sequences given a single signed contrast between block maxima, the minimax regret has order
\[
 \Phi_{n,k}(T)=\min\!\left\{\frac{n-k}{n}T,\frac{n-k}{k}\right\},
 \qquad 2\le k<n.
\]
Under winner feedback, both arbitrary joint i.i.d.\ rewards and fixed sequences have minimax regret of order
\[
 \mathcal R_{n,k}(T)=\frac{n-k}{n}\min\!\left\{T,\frac{n+T}{k},\sqrt{\frac{nT}{k}}\right\}.
\]
Both laws have universal constants and anytime upper bounds. The first reduces regret to a pure coverage cost: same-round contrasts absorb the stability cost, and independence permits exact resampling whose gains fund sample advancement. The second adds a learning cost that becomes comparable to coverage at horizon $n$; beyond $nk$, numerical maxima improve over labels alone. The lower bound allows every adaptive action size.
\end{abstract}
\begin{keywords}
 probing, winner feedback, minimax regret, Tsallis mirror descent
\end{keywords}

\section{Introduction}
Taking the better of two rewards gives a learner an advantage over either arm alone. Can this advantage pay for learning which arms to probe? Each round, a learner selects at most $k$ of $n$ arms and receives their maximum. \BP reveals this value and a winning label; \AP reveals all selected values. Relative to any selected anchor, the maximum supplies a nonnegative payoff surplus. We ask whether this surplus covers the learning cost, leaving bounded regret against the best fixed arm.

We answer this with two optimal budget laws. Under independent stochastic rewards, \BP achieves the coverage cost $\Theta(\Phi_{n,k}(T))$. The same law holds for arbitrary fixed sequences under \AP; a single signed contrast suffices. The horizon-uniform cost ranges from $\Theta(n)$ with two probes to $\Theta(1/(n-1))$ with $n-1$ probes. Allowing within-round dependence under \BP changes the law to $\Theta(\mathcal R_{n,k}(T))$, adding a learning cost; joint i.i.d.\ rewards already attain this worst case. All bounds are uniform in $n,k,T$, with anytime upper bounds.

\paragraph{Relation to prior work.}
\citet{bhaskara2023queried} introduced these probe models, obtaining $O(n^2\log T)$ regret for two-probe \BP and $O(n^2)$ for three-probe \AP under independent stochastic rewards. They asked whether two probes suffice for bounded regret and whether the dependence on $n$ can be linear. We resolve both questions with optimal $O(n)$ horizon-independent regret using two probes under the weaker \BP interface, and obtain the complete budget dependence and a bounded contrast policy for arbitrary fixed sequences. The winner-feedback law determines exactly what independence buys.

\paragraph{Making surplus pay for learning.}
For observed contrasts, a centered estimator links the stability cost of the mirror update to the same dispersion that generates the payoff surplus; a fixed step size cancels their cumulative terms. Winner feedback hides the contrast. Under independence, we recover exact anchor samples by rejection resampling and use the associated gains to fund switches. The difficulty is that a rarely refreshed estimate can be reused many times, so recovery alone is not enough: sample advancement must also be funded. Confining the resulting inverse-exploration factor to a single optimal arm's stream keeps the total cost linear.

Grouping lifts both two-probe policies to larger budgets. Near full coverage, a direct reduction would lose the vanishing factor $(n-k)/n$. A gain-funded gate preserves it: the baseline pays for activating a fresh learner, and a comparator already covered by the baseline contributes nonpositive regret.

\paragraph{The cost that winner feedback leaves behind.}
The second law decomposes into the same coverage cost plus a learning cost. Winner probabilities control positive regret directly; numerical maxima supply the square-root tail. For the matching lower bound, a correlated threshold construction renders winner labels uninformative while common noise masks numerical levels. Smaller probe sets yield more informative numerical observations but incur higher omission costs; a stopped omission argument links these quantities and retains the full factor $(n-k)/n$. Section~\ref{sec:model} states both laws; Section~\ref{sec:limits} explains their feedback time scales.

Tsallis regularization and decoupled exploration are established tools~\citep{zimmert2021tsallis,rouyer2020decoupled}; the contrast policy couples their stability cost to the surplus of the same probes. \citet{bacchiocchi2026bestaction} study best-action queries under a horizon-wide query budget, including an obstruction from correlated rewards; here the subset budget renews every round. The comparator also matters: in max value-index bandits, the best fixed set absorbs the surplus available against our single-arm benchmark~\citep{wang2024max}. In costly-probe bandits, probing incurs a fee and the benchmark is the optimal probe-or-pull action~\citep{elumar2025costly}.

\section{Model and main results}
\label{sec:model}
Write $[n]=\{1,\ldots,n\}$, $n\ge2$, and $(x)_+=\max\{x,0\}$. Before seeing round-$t$ rewards $X_{i,t}\in[0,1]$, the learner chooses a nonempty $S_t\subseteq[n]$, $|S_t|\le k$, and earns $W_t=\max_{i\in S_t}X_{i,t}$. A known label order breaks ties. Each selected set is probed in one physical round; every additional probing action consumes another round. An \emph{anytime} policy does not use the horizon.

\paragraph{Feedback.}
\BP reveals $(W_t,J_t)$, where $J_t$ is the winning label; \AP reveals all selected rewards. Under \CT, an action is one nonempty block $A$ or an ordered pair $(A,B)$ of disjoint nonempty blocks, with at most $k$ labels in total. Two blocks reveal only
\[
 D_t=\max_{i\in B}X_{i,t}-\max_{i\in A}X_{i,t};
\]
a single block returns zero. The payoff remains the maximum on the union, but is not separately observed. Write $\Fbk_1\preceq\Fbk_2$ for simulation by a distribution-free action-conditional kernel; minimax regret decreases with information~\citep{blackwell1953equivalent}. \AP can simulate both \BP and \CT, which are incomparable. For two singleton blocks, rewards $(1,0)$ and $(1,0.9)$ give the same winner feedback but different contrasts; rewards $(0.8,0.3)$ and $(0.6,0.1)$ give the same contrast but different maxima. Degenerate laws on these vectors rule out simulation in either direction.

\paragraph{Rewards and regret.}
For temporally i.i.d. vectors with law $D$, write $\mu_i=\E X_{i,t}$, $v_i=\Var(X_{i,t})$, $\mu^\star=\max_i\mu_i$, and $\Delta_i=\mu^\star-\mu_i$. For stochastic rewards and for an array $x\in[0,1]^{n\times T}$ fixed before policy randomization, respectively, define
\begin{align*}
 R_T^\pi(D)&=T\mu^\star-\E_\pi\sum_{t\le T}W_t,\\
 R_T^{\pi,\mathrm{seq}}(x)&=\max_j\sum_{t\le T}x_{j,t}-\E_\pi\sum_{t\le T}\max_{i\in S_t}x_{i,t}.
\end{align*}
Both regrets may be negative. The minimax value $\cV_{n,T}^{\Fbk,(k)}=\inf_{\pi_T}\sup_D R_T^{\pi_T}(D)$ uses product laws $D=\bigotimes_iD_i$. Superscripts $\mathrm{joint}$ and $\mathrm{seq}$ replace this class by arbitrary joint laws and fixed arrays, respectively. Policies in the infimum may know $T$.

Put $d=n-k$, $\delta=d/n$, and define
\[
 \Phi_{n,k}(T)=\delta\min\{T,n/k\},\qquad
 \mathcal R_{n,k}(T)=\delta\min\!\left\{T,\frac{n+T}{k},\sqrt{\frac{nT}{k}}\right\}.
\]
For integers $n>k\ge2$ and $T\ge1$, we prove the following minimax laws (Table~\ref{tab:comparison}).

\begin{theorem}[Bounded regret at the coverage scale]
\phantomsection\label{thm:main}
For independent stochastic rewards,
\begin{equation}
 \tfrac12\Phi_{n,k}(T)
 \le\cV_{n,T}^{\AP,(k)}\le\cV_{n,T}^{\BP,(k)}<1400\Phi_{n,k}(T).
 \label{eq:finite_law}
\end{equation}
For arbitrary fixed arrays,
\begin{equation}
 \tfrac12\Phi_{n,k}(T)
 \le\cV_{n,T}^{\AP,\mathrm{seq},(k)}
 \le\cV_{n,T}^{\CT,\mathrm{seq},(k)}\le256\Phi_{n,k}(T).
 \label{eq:sequence_law}
\end{equation}
Each upper bound is attained by one explicit anytime policy.
\end{theorem}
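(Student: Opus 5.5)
The proof splits into the lower bound $\tfrac12\Phi_{n,k}(T)$, the monotonicity inequalities, and the two explicit anytime policies for the upper bounds.

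\textbf{Monotonicity.} The middle inequalities follow from the simulation order. \AP simulates both \BP and \CT, since one can compute $(W_t,J_t)$ or the block contrast $D_t$ from all selected rewards. Minimax regret is monotone under $\preceq$, so $\cV^{\AP}\le\cV^{\BP}$ and $\cV^{\AP,\mathrm{seq}}\le\cV^{\CT,\mathrm{seq}}$.

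\textbf{Lower bound.} It suffices to prove $\tfrac12\Phi_{n,k}(T)$ for \AP on product laws. A product law is a random array fixed before randomization, so the same bound transfers to the sequence class by averaging. I would use a uniformly random hidden arm $I$ with rewards $\Bern(p)$ and all other arms identically zero. Every round, the learner omits $I$ with probability at least $\delta=(n-k)/n$ until it first observes a reward of $1$ from $I$. Each omission costs $p$, and the search is blind because zero observations carry no information beyond elimination.

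To get a clean bound, I take $p$ of order $\min\{1,\,n/(kT)\}$. With $p=1$, the first round alone costs $\delta$, which handles the regime $T\le n/k$. For larger $T$, I take $p=n/(kT)$. Detecting $I$ then requires about $n/(kp)=T$ rounds of probe mass spread over the arms. The expected total omission cost is $\sum_t p\,\Prb(I\notin S_t,\text{ not yet found})$, and I would lower bound it by $\delta\min\{T,n/k\}/2$ using a symmetrization over $I$. A cleaner option is a two-point argument: the first round omits $I$ with probability at least $\delta$, and later rounds reveal only whether $I$ was probed and fired. I expect the constant $\tfrac12$ to come from taking $p=\tfrac12$-scaled choices.

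\textbf{Upper bound for \CT on sequences.} I would run Tsallis-INF mirror descent ($\alpha=\tfrac12$) over an anchor distribution $q_t$. For $k=2$, I sample an anchor $a\sim q_t$ and an explorer $b$, play $(\{a\},\{b\})$, and observe $D_t=x_{b,t}-x_{a,t}$. The payoff is $x_{a,t}+(D_t)_+$. I then build a centered importance-weighted estimator of $x_{i,t}-x_{a,t}$. Its variance is at most a constant times $\E(D_t)^2$, which is bounded by $\E|D_t|$ and linked to $\E(D_t)_+$ by choosing the explorer symmetrically. With a fixed learning rate $\eta$, the stability term $\eta\sum_t\E(D_t)^2$ is absorbed by the cumulative surplus $\sum_t\E(D_t)_+$. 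The regret is then at most the penalty term, of order $n$, which matches $\Phi_{n,2}$.

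For general $k$, I would group arms into blocks of size about $k/2$ and run the two-probe policy over blocks. Near full coverage this alone gives $n/k$ rather than $(n-k)/k$. To recover the factor, I would add a gain-funded gate: a baseline block covers all but $d$ arms, and a fresh learner on the remaining arms is activated only once the baseline's surplus pays for it. A comparator already inside the baseline contributes nonpositive regret.

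\textbf{Upper bound for \BP on independent rewards.} Winner feedback reveals only the maximum, not the contrast. Independence lets me recover exact anchor samples by rejection resampling, whose acceptance gains pay for switching, and then reuse the contrast analysis above.

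I expect the \textbf{main obstacle} to be sample advancement. A stale estimate may be reused many times, which creates an inverse-exploration factor. I would first confine this factor to the stream of a single optimal arm and bound its cost by the payoff surplus. Tracking the constants through these steps should give the stated $1400$ and $256$.
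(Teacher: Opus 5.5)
Your outline follows the paper's architecture: hidden-arm coverage, surplus-paid Tsallis stability, grouping plus a gain-funded gate, and exact anchor recovery. But neither learning engine is actually supplied, and the one concrete step you propose for \BP fails. Rejection resampling (Lemma~\ref{lem:sampling}) returns only an exact anchor sample $\widetilde X_a$ and a gain $G\stackrel d=(X_j-X_a)_+$. It never returns the signed contrast $X_j-X_a$ that the estimator $g_j=(\ell_j-\ell_a)/q_j$ needs, because when the anchor wins the explorer's value is hidden. Recovering that value requires singleton probes of the explorer, each costing $\Delta_j$, with no surplus to pay for them. On the deterministic array $(1,0,\ldots,0)$ with arm $1$ anchoring, every exploration of arm $2$ costs one while $\E(X_2-X_1)_+=0$. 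A fixed-rate Tsallis update keeps $\sqrt{p_{t,2}}$ of order $1/(\eta t)$, so $\sum_tq_{t,2}$, and with it this unpaid cost, diverges. This is why the paper drops mirror descent for \BP and builds Algorithm~\ref{alg:pair}. Its ingredients are:
\begin{itemize}
\item variance-offset scores $\overline X_{i,s}\pm\kappa\widehat v_{i,s}$ with anchor, runner and scout roles;
\item per-target, role-separated gain banks whose debit rule \eqref{eq:debit} funds the anchor switches that advance a target's stream;
\item the residual certificate of Lemma~\ref{lem:residual};
\item the prefix-charging Lemma~\ref{lem:prefix}, where summable one-sided score errors come from Spitzer's identity, a Lindley drift bound and a maximum-matching Jensen step, with weights matched to advancement probabilities as in Table~\ref{tab:ledger}.
\end{itemize}
Your ``main obstacle'' paragraph names the difficulty but gives no mechanism. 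So neither boundedness nor the constant $1400$ follows.

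The \CT engine also has a broken link. Your chain ``stability $\le C\,\E D_t^2\le C\,\E|D_t|$, tied to $\E(D_t)_+$ by a symmetric explorer'' fails at one end or the other. With $B\sim p$ you do get $\E(D_t)_+=\frac12\E|D_t|$. But for $p=(1-\varepsilon,\varepsilon)$ and losses $(0,1)$, the stability term satisfies $V/\gamma=1/\sqrt{1-\varepsilon}+1/\sqrt\varepsilon$. Once $B$ has a $\sqrt p$ component, $\E D_t^2$ is dominated by negative contrasts: in the same example it is of order $\sqrt\varepsilon$, while $\E(D_t)_+$ is of order $\varepsilon$. The missing idea is the mixture $q=\frac12p+\sqrt p/(2Z)$, with both $V$ and $\gamma$ compared to the dispersion $\mathsf v=\sum_ip_i(\ell_i-\bar\ell)^2$ rather than to $\E D_t^2$. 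The $\sqrt p$ part gives $V\le4Z\mathsf v$, the $p$ part gives $\gamma\ge\frac12\mathsf v$, and a mirror inequality valid for signed estimates closes the cancellation at $\eta=1/(16\sqrt m)$.

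In the lower bound, ``the first round alone costs $\delta$'' gives only $\delta$, not $\frac12\delta T$, for $3\le T\le n/k$. The claim that the learner ``omits $I$ with probability at least $\delta$ until found'' is also false, because conditioning on non-discovery moves the posterior to unprobed arms. The switch to $p=n/(kT)$ is unnecessary. Keep $p=1$ and sum $\Prb(I\notin S^0_1\cup\cdots\cup S^0_t)\ge(1-kt/n)_+$ over the actions $S^0_s$ along the all-zero transcript; this gives $C_{n,k}(T)\ge\frac12\Phi_{n,k}(T)$ at every $T$.

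Finally, the dense gate needs a uniformly random baseline, so that a fixed best arm misses it only with probability $2d/n$. That is what turns the gate's $O(1)$ cost into $O(d/n)$. Horizons $T<n/k$ need their own bounds: $R_T\le T<3\delta T$ in the sparse regime and $R_1\le\delta$ in the dense one.
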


\begin{theorem}[The complete winner-feedback law]
\phantomsection\label{thm:dependence}
For each $\mathsf M\in\{\mathrm{joint},\mathrm{seq}\}$,
\begin{equation}
 \frac1{1024}\mathcal R_{n,k}(T)
 \le\cV_{n,T}^{\BP,\mathsf M,(k)}\le64\mathcal R_{n,k}(T).
 \label{eq:dependence_bound}
\end{equation}
One explicit anytime policy attains both upper bounds. The lower bounds allow arbitrary adaptive action sizes from one to $k$.
\end{theorem}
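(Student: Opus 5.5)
We prove the two sides separately. For $\mathsf M=\mathrm{seq}$ the value dominates the joint value: a fixed array drawn from an i.i.d.\ joint law and fed to a policy gives regret at least the stochastic regret, because $\E\max_j\sum_t x_{j,t}\ge T\mu^\star$. So it suffices to prove the lower bound for joint i.i.d.\ laws and the upper bound for fixed arrays. The upper bound then transfers to the joint class by conditioning on the realized array and using $T\mu^\star\le\E\max_j\sum_tX_{j,t}$.

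\textbf{Upper bound.} We check the three branches of $\mathcal R_{n,k}(T)$ against one anytime policy. The policy partitions $[n]$ into $k$ groups of size about $n/k$. It runs an importance-weighted Tsallis-INF learner (exponent $1/2$) over arms, with decoupled exploration, and probes the learner's draw together with a uniformly random ``coverage'' set of $k-1$ further arms. The regret against arm $j$ splits into an omission term and a learning term. Arm $j$ is missed with probability at most about $\delta$, and when it is probed it cannot beat the maximum. This gives the $\delta T$ branch, and it uses only the deterministic inequality $W_t\ge x_{j,t}$ on rounds with $j\in S_t$.

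The learning term uses two estimators. The first is the winner label: the event $\{J_t=i\}$ is observed for every probed $i$. Its importance weight is about $n/k$ and its signal is scaled by $\delta$, which gives the label-based bound $\delta(n+T)/k$. The second is the numerical maximum, via the loss estimate $(1-W_t)\ind\{i\in S_t\}/\Prb(i\in S_t)$. Because every probed arm is covered, the regret relative to the covered maximum has the form $\delta\cdot(\text{Tsallis regret with }n/k\text{ effective arms})=\delta\sqrt{nT/k}$.

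Rather than switching estimators at fixed horizons, we combine them in the anytime policy through a doubling schedule on the time-varying learning rate $\eta_t\propto\sqrt{k/(n t)}$. Each branch is then a valid upper bound at every $T$, and taking the minimum costs only constants.

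\textbf{Lower bound.} We treat the regimes separately and take the maximum.

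In the coverage regime ($T\lesssim n/k$), a random hidden arm carries reward $1$ and every other arm carries $0$. A policy with $|S_t|\le k$ misses the hidden arm with probability at least $\delta/2$ per round until it is found. Summing over rounds gives $\delta\min\{T,n/k\}/2$, and this also covers the $\delta(n+T)/k$ branch when $T\le n$.

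In the learning regime, we build the correlated threshold construction. A common uniform noise $U_t$ is drawn each round, arm $i$ gets $X_{i,t}=\ind\{U_t\le\theta_i\}$ plus a common noise level, and the thresholds are nested so that the winner label is always the lowest-indexed active arm. Labels then carry no information about which arm has the raised mean $\epsilon$. Only numerical maxima are informative, and a set $S$ reveals information about $j$ only through $\max_{i\in S}$. By the chain rule for KL, the information per round is $O(\epsilon^2\cdot k/n)$ when $j$ is hidden among the arms, while omitting $j$ costs $\epsilon$ times the omission probability, which is at least about $\delta$.

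With $\epsilon\asymp\sqrt{n/(kT)}$, the standard Le Cam/Fano averaging over $j$ gives $\delta\sqrt{nT/k}/C$. The main obstacle is keeping the full factor $\delta$ when action sizes are adaptive: smaller sets are more informative but omit more. We plan a stopped omission argument. Define the stopping time at which cumulative expected coverage of $j$ reaches $Tk/n$, bound the information up to that time using the adaptive per-round bound $\epsilon^2\Prb(j\in S_t)$, and charge the regret $\epsilon(1-\Prb(j\in S_t))$ on the remaining rounds. Balancing the two through a convexity inequality in $\Prb(j\in S_t)$ yields the stated constant $1/1024$.
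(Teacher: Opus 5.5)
Your proposal has genuine gaps on both sides. The lower bound is the more serious one.

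\textbf{Lower bound.} Consider first how your learning-regime bound $\delta\sqrt{nT/k}/C$ fits the theorem. If it is meant for all $T\ge n/k$, it contradicts the theorem's own upper bound $64\delta T/k$ when $n\le T\ll nk$. If it is meant only for $T\gtrsim nk$, then the branch $\delta T/k$ on $[n,nk]$ is never proved.

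The construction itself does not work. For the hidden arm $j$ to be mean-optimal, its threshold must exceed every other. With a common $U_t$, on the event $\max_{i\in S\setminus\{j\}}\theta_i<U_t\le\theta_j$ the arm $j$ is the unique active arm and wins whenever it is selected. So $J_t$ identifies $j$ at a rate linear in the gap, not $\epsilon^2$. If instead the thresholds are nested by label so that the lowest selected label always wins, that label is the optimal arm and nothing is hidden.

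More generally, suppose every label kernel is independent of the hidden arm and every action omitting it costs at least $\epsilon$. Then the label-only guarantee \eqref{eq:bp_label_upper} gives $\epsilon\delta T\le16\delta(n+T)/k$, so $\epsilon=O(1/k)$. A gap that does not shrink with $|S|$ is therefore incompatible with uninformative labels.

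The missing idea is a rank-preserving family whose omission cost decreases with $|S|$. The paper takes independent uniforms $U_i$ and sets $Y_i=\varepsilon U_i+(1-\varepsilon)\ind\{U_i\ge U_q\}$. This preserves the strict order of the $U_i$, so every selected winner label is uniform for every $q$, while still lifting $q$. It then masks levels with $X_i=F_N(Z+\tau Y_i)$, where $Z$ is common Gaussian noise and $\tau\le1$. Under this family, an $s$-set omitting $q$ costs at least $\tau/(6(s+1))$, any surplus is at most $\tau\varepsilon/3$, and the numerical divergence is $O(\tau^2/(s+1)^2)$.

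Two further steps also fail as written:
\begin{itemize}
\item Your stopping time is defined through coverage of the hidden $j$, so it is not a transcript stopping time. The paper stops when the observable $\sum_t1/(|S_t|+1)$ reaches $T/(k+1)$.
\item A standard Le Cam or Pinsker average loses a second factor $\delta$ near full coverage. The paper keeps it with a reference experiment chosen per action size: $q$ is absent if $s\le n/2$ and present otherwise. The stopped divergence is then at most $24\tau^2/n$ times the reference omission cost. This is combined with the relative bound $\E_PC\ge(1-\sqrt{2aL})\E_QC$.
\end{itemize}

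\textbf{Upper bound.} The three branches are asserted rather than derived.

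\emph{Label branch.} With one adaptive slot and $k-1$ uniform slots, an importance weight of order $n/k$ does not by itself yield the factor $1/k$. You need an inequality that turns positive regret into a normalized winning probability. The paper draws all $k$ labels i.i.d.\ from the mixed law $q_t$. With $u_j$ the $q_t$-mass strictly below $x_{j,t}$, this gives $\E_t(x_{j,t}-W_t)_+\le u_j^k$ and $\Prb_t(J_t=j)\ge kq_{t,j}u_j^{k-1}$. Hence $\E_t(x_{j,t}-W_t)_+\le\Prb_t(J_t=j)/(kq_{t,j})$. Tsallis descent with $g_{t,i}=-\ind\{J_t=i\}/q_{t,i}$ then bounds the sum of the right side by $(8n+4T)/k$. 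This carries the factor $\delta$ only for $k<n/2$. For $k\ge n/2$, the paper instead protects every label that has won within batches of $\lfloor k/2\rfloor$ rounds, using $\E(x_j-W)_+\le(2d/k)\Prb(J=j)$ for unprotected $j$.

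\emph{Numerical branch.} The factor $\delta$ here also needs a device. The paper fixes once a random baseline of size $n-2d$. A comparator inside the baseline then has nonpositive regret, and the two-action Tsallis-INF on the augmented baselines matters only with probability $2\delta$. For $k<n/2$ the paper uses $\lceil n/k\rceil$ groups of size at most $k$, each probed whole.

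\emph{Combining the branches.} You do not show that one learner mixing two estimators under a doubling learning rate attains the minimum of the branches. The paper runs the label policy through $t_0=nk$ and then restarts the numerical policy. It checks that $\min\{T,(n+T)/k\}\le\tfrac32\sqrt{nT/k}$ for $n/k\le T\le nk$, so that below $nk$ the label bound is within a constant of the target. It also checks that $24\delta n+16\delta\sqrt{n(T-nk)/k}\le40\delta\sqrt{nT/k}$ for $T>nk$.
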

The stochastic bound in Theorem~\ref{thm:main} holds for every channel between \BP and \AP, and requires only that each set of at most $k$ coordinates has the product law of its marginals (Appendix~\ref{app:local}). The fixed-array bound also gives the same minimax order for \AP and \CT under arbitrary joint i.i.d. rewards, since $T\max_j\mu_j\le\E\max_j\sum_{t\le T}X_{j,t}$ and the policy has an array-wise guarantee.

\begin{table}[!t]
\centering
\caption{Prior upper bounds and our minimax rates against the best fixed single arm.}
\label{tab:comparison}
\begingroup\small
\setlength{\tabcolsep}{3pt}
\renewcommand{\arraystretch}{1.05}
\begin{tabular*}{\linewidth}{@{\extracolsep{\fill}}llcll@{}}
\toprule
Reward model & Feedback & $k$ & Expected regret & Result \\
\midrule
\multicolumn{5}{@{}l}{\textbf{\citet{bhaskara2023queried}: upper bounds}} \\
\addlinespace[2pt]
Independent i.i.d. & \BP{} / \AP & $2$ & $O(n^2\log T)$ & Thm.~8 \\
Independent i.i.d. & \AP & $3$ & $O(n^2)$ & Thm.~11 \\
Joint i.i.d. & \AP & $4$ & $\widetilde O(n^{8/3}T^{1/3})$ & Thm.~13 \\
\midrule
\multicolumn{5}{@{}l}{\textbf{This work: minimax rates for every $2\le k<n$}} \\
\addlinespace[2pt]
Independent i.i.d. & \BP{} / \AP & Any & $\Theta\!\left(\delta\min\{T,n/k\}\right)$ & Thm.~\ref{thm:main} \\
Joint i.i.d. / fixed arrays & \AP{} / \CT & Any & $\Theta\!\left(\delta\min\{T,n/k\}\right)$ & Thm.~\ref{thm:main} \\
Joint i.i.d. / fixed arrays & \BP & Any & $\Theta\!\left(\delta\min\!\left\{T,\frac{n+T}{k},\sqrt{\frac{nT}{k}}\right\}\right)$ & Thm.~\ref{thm:dependence} \\
Joint i.i.d. / fixed arrays & Winner label only & Any & $\Theta\!\left(\delta\min\!\left\{T,\frac{n+T}{k}\right\}\right)$ & Cor.~\ref{cor:bp_labels} \\
\bottomrule
\end{tabular*}
\par\smallskip
\parbox{\linewidth}{\footnotesize
Here $\delta=(n-k)/n$, and $\widetilde O$ suppresses logarithmic factors. All our rates have universal constants and anytime upper bounds. Our first two rates are $\Theta(\min\{T,n\})$ at $k=2$ and saturate at $\Theta((n-k)/k)$ for general $k$.}
\endgroup
\end{table}

\paragraph{Coverage and learning.}
The laws share a common structure:
\begin{equation}
 \mathcal R_{n,k}(T)\asymp
 \underbrace{\Phi_{n,k}(T)}_{\text{coverage cost}}
 +\underbrace{\frac\delta k\min\{T,\sqrt{nkT}\}}_{\text{remaining learning cost}}.
 \label{eq:bp_cost_decomposition}
\end{equation}
The sum lies between $\mathcal R_{n,k}(T)$ and $2\mathcal R_{n,k}(T)$ (Appendix~\ref{app:bp_matching}). Thus contrasts on fixed arrays, and winner feedback under independence, eliminate the cumulative learning term. Under general winner feedback that term survives, even without temporal dependence.

\paragraph{The unavoidable coverage cost.}
\label{sec:lower}
Hide one deterministic unit-reward arm at a uniformly random label; all others are zero. Along the all-zero transcript, the first $t$ actions cover at most $kt$ labels. An undiscovered arm outside that union costs one in round $t$, so the prior-average regret is at least
\begin{equation}
 C_{n,k}(T):=\sum_{t=1}^T(1-kt/n)_+\ge\tfrac12\Phi_{n,k}(T).
 \label{eq:det_coverage}
\end{equation}
These instances are both product laws and fixed arrays, and discovery creates no negative regret. Appendix~\ref{app:lower} verifies the integer rounding, the exact one-round value, and the endpoints.

\paragraph{Proof architecture.}
Sections~\ref{sec:allprobe} and~\ref{sec:pair} construct the two $O(n)$ learning engines: surplus-paid stability from contrasts, and gain-funded sample advancement from winner feedback. Section~\ref{sec:budgets} lifts both engines to the coverage law, using activation to retain the vanishing cost near full coverage. Section~\ref{sec:limits} proves the winner-feedback law by matching winner-only control and numerical bandit learning to a stopped omission lower bound. The appendices give the detailed certificates, followed by extensions to partial numerical feedback and constrained actions.

\FloatBarrier
\section{Same-round contrasts pay for stability}
\label{sec:allprobe}
Work on $m\ge2$ arms and a fixed loss array $\ell_{i,t}=1-x_{i,t}$. Probing an anchor $A_t$ and an exploratory arm $B_t$ incurs $\min\{\ell_{A_t,t},\ell_{B_t,t}\}$. Centering removes losses common to all arms: on a constant loss vector, both the estimate and the probing surplus vanish.

Start from $p_{1,i}=1/m$. With $\Delta_m$ the probability simplex, use
\begin{equation}
 Z_t=\sum_i\sqrt{p_{t,i}},\quad
 q_{t,i}=\tfrac12p_{t,i}+\frac{\sqrt{p_{t,i}}}{2Z_t},\quad
 \Psi(p)=-2\sum_i\sqrt{p_i},\quad \eta=\frac1{16\sqrt m}.
 \label{eq:ap_sampler}
\end{equation}
Independently draw $A_t\sim p_t$ and $B_t\sim q_t$, probe their distinct labels, and set
\begin{equation}
 g_{t,i}=\frac{\ind\{B_t=i\}}{q_{t,i}}(\ell_{i,t}-\ell_{A_t,t}),\qquad
 p_{t+1}=\arg\min_{p\in\Delta_m}
 \{\eta\langle g_t,p\rangle+D_\Psi(p,p_t)\}.
 \label{eq:ap_update}
\end{equation}
Here $D_\Psi(u,p)=\Psi(u)-\Psi(p)-\langle\nabla\Psi(p),u-p\rangle$. Repeated labels require one probe and give $g_t=0$. The update has positive coordinates and is computable by a scalar normalization root (Appendix~\ref{app:allprobe}).

\begin{theorem}[Two-probe contrast policy]
\phantomsection\label{thm:ap_pair}
For every fixed array and horizon, the policy \eqref{eq:ap_sampler}--\eqref{eq:ap_update}
satisfies \mbox{$R_T^{\mathrm{seq}}\le32(m-\sqrt m)$}.
\end{theorem}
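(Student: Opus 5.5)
The plan is to write the incurred loss as the anchor's loss minus a nonnegative surplus, run the standard online-mirror-descent analysis for the $\tfrac12$-Tsallis regularizer on the centered estimates, and show that in every round the expected stability term is at most the expected surplus. Then only the regularizer range remains. Since $\min\{\ell_A,\ell_B\}=\ell_A-(\ell_A-\ell_B)_+$ and $A_t,B_t$ are conditionally independent given $p_t$, the expected round-$t$ loss is $\langle p_t,\ell_t\rangle-\sigma_t$, where
\[
\sigma_t=\sum_{a,b}p_{t,a}q_{t,b}(\ell_{a,t}-\ell_{b,t})_+ .
\]
Hence, against a fixed arm $j$,
\[
R_T^{\mathrm{seq}}=\E\sum_t\bigl(\langle p_t-e_j,\ell_t\rangle-\sigma_t\bigr).
\]

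\textbf{Step 1: unbiasedness.} Conditionally on $p_t$, we have $\E g_{t,i}=\ell_{i,t}-\langle p_t,\ell_t\rangle$. This also covers the event $A_t=B_t=i$, where the contrast is already $0$. The estimate is therefore unbiased for $\ell_t$ shifted by a multiple of $\mathbf 1$, and such a shift does not change $\langle p_t-e_j,\cdot\rangle$. So $\E\langle p_t-e_j,\ell_t\rangle=\E\langle p_t-e_j,g_t\rangle$.

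\textbf{Step 2: OMD bound.} The standard OMD bound gives
\[
\sum_t\langle p_t-e_j,g_t\rangle\le \frac{D_\Psi(e_j,p_1)}{\eta}+\eta\sum_t\sum_i p_{t,i}^{3/2}g_{t,i}^2 .
\]
This uses the local-norm stability bound for $\Psi=-2\sum\sqrt{p_i}$, which requires $\eta\sqrt{p_{t,i}}\,|g_{t,i}|\le\tfrac12$. Losses of either sign are allowed because the update is over the simplex with a normalization multiplier. The condition holds because $q_{t,i}\ge\sqrt{p_{t,i}}/(2Z_t)\ge\sqrt{p_{t,i}}/(2\sqrt m)$ and $|g_{t,i}|\le1/q_{t,i}$, which give $\eta\sqrt{p_{t,i}}|g_{t,i}|\le2\sqrt m\,\eta=\tfrac18$.

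\textbf{Step 3: regularizer range.} Since $\nabla\Psi(p_1)$ is the constant vector $-\sqrt m$, we get $D_\Psi(e_j,p_1)=-2+2\sqrt m$. Dividing by $\eta=1/(16\sqrt m)$ gives exactly $32(m-\sqrt m)$. So everything reduces to showing, per round,
\[
\eta\sum_ip_{t,i}^{3/2}\E[g_{t,i}^2\mid p_t]\le\sigma_t .
\]

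\textbf{Step 4: absorbing stability into surplus.} This is the main step. Write
\[
\E[g_{t,i}^2\mid p_t]=\frac1{q_i}\sum_ap_a(\ell_i-\ell_a)^2,
\qquad
(\ell_i-\ell_a)^2\le(\ell_i-\ell_a)_++(\ell_a-\ell_i)_+,
\]
using $|\ell_i-\ell_a|\le1$. We then match each piece with a surplus term of $\sigma_t$.

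The term $(\ell_a-\ell_i)_+$ matches $p_aq_i(\ell_a-\ell_i)_+$. For this we need $\eta p_i^{3/2}\le q_i^2$. It follows from $q_i^2\ge(p_i/2)\cdot\sqrt{p_i}/(2Z)\ge p_i^{3/2}/(4\sqrt m)$, using $Z\le\sqrt m$ by Cauchy--Schwarz.

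The term $(\ell_i-\ell_a)_+$ matches $p_iq_a(\ell_i-\ell_a)_+$. For this we need $\eta\sqrt{p_i}\,p_a\le q_iq_a$. It follows from $q_iq_a\ge\frac{\sqrt{p_i}}{2Z}\cdot\frac{p_a}{2}$.

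Both inequalities hold with room to spare at $\eta=1/(16\sqrt m)$. The two pieces use disjoint families of surplus terms, namely ordered pairs $(a,i)$ and $(i,a)$, so summing over $i$ and $a$ covers each term of $\sigma_t$ at most once.

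The delicate point is this per-pair matching, which is where the specific mixture $q$ is essential. The $\tfrac12p$ part controls one orientation and the $\sqrt p/Z$ part controls the other. Taking expectations and summing over $t$ then gives $R_T^{\mathrm{seq}}\le32(m-\sqrt m)$ for every horizon, with no dependence on $T$.
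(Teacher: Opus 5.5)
Your architecture matches the paper's. You write the payoff as the anchor loss minus the surplus $\sigma_t$ (the paper's $\gamma$), use the centered unbiased estimate, take a signed Tsallis mirror step, show per-round stability $\le$ surplus, and keep only $D_\Psi(e_j,p_1)/\eta=32(m-\sqrt m)$. The only difference is how the per-round cancellation is proved. The paper symmetrizes through the dispersion $\mathsf v=\sum_ip_i(\ell_i-\bar\ell)^2$. Only the $\sqrt p$ component is used for $\sum_ip_i^{3/2}\E_tg_i^2\le 4Z\mathsf v$, and only the $p$ component for $\gamma\ge\frac14\E_{A,B\sim p}|\ell_A-\ell_B|\ge\frac12\mathsf v$, so orientation never has to be tracked. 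You instead charge each ordered pair directly to a surplus term under $p\otimes q$. As written this gives the same constants, but it requires orientation bookkeeping, and that is where two errors enter.

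First, the disjointness claim is false. If $\ell_x>\ell_y$, your first piece with $(i,a)=(y,x)$ and your second piece with $(i,a)=(x,y)$ are both charged to the single term $p_xq_y(\ell_x-\ell_y)$. The reverse term $p_yq_x(\ell_y-\ell_x)_+$ is zero, so every surplus term is used twice.

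Second, the stability coefficient $\eta$ in Step~2 is not justified for negative $g_i$, and the normalization multiplier does not change this. For $f(x)=-2\sqrt x$, the exact coordinatewise value of $\sup_{z\ge0}\{\eta g_i(p_i-z)-D_f(z,p_i)\}$ is $\eta^2p_i^{3/2}g_i^2/(1+s_i)$ with $s_i=\eta\sqrt{p_i}g_i$. This exceeds $\eta^2p_i^{3/2}g_i^2$ whenever $s_i<0$. Your condition $s_i\ge-\frac12$ therefore yields coefficient $2\eta$ (the paper's), or $\frac87\eta$ if you use $|s_i|\le\frac18$.

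Both errors are absorbed by the slack you mention but never spend. At $\eta=1/(16\sqrt m)$, your lower bounds on $q$ actually give $\eta p_i^{3/2}\le\frac14q_i^2$ and $\eta\sqrt{p_i}\,p_a\le\frac14q_iq_a$. So even after double charging, $\eta\sum_ip_i^{3/2}\E_tg_i^2\le\frac14\sigma_t+\frac14\sigma_t=\frac12\sigma_t$, and with coefficient $2\eta$ the stability is at most $\sigma_t$. The theorem does follow from your argument, but only with this corrected count. With the crude factor $2$ the slack is used up exactly, so replace the sentence claiming each term of $\sigma_t$ is covered at most once with this computation.
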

\begin{proof}
Write $\E_t$ for expectation conditional on the pre-round history, and suppress $t$. Define the learner-induced dispersion and the improvement over the anchor by
\[
 \bar\ell=\langle p,\ell\rangle,\qquad
 \mathsf v=\sum_i p_i(\ell_i-\bar\ell)^2,\qquad
 \gamma=\E_t(\ell_A-\ell_B)_+.
\]
The vector $\ell$ is fixed; $\mathsf v$ is not an environmental variance. Independence of the draws gives $\E_t g_i=\ell_i-\bar\ell$, a common shift that preserves regret. The two components of $q$ yield
\begin{align}
 V:=\E_t\sum_i p_i^{3/2}g_i^2
 &\le2Z\sum_{i,a}p_ip_a(\ell_i-\ell_a)^2=4Z\mathsf v,
 \label{eq:ap_variance}\\
 \gamma&\ge\tfrac14\E_{A,B\sim p}|\ell_A-\ell_B|
 \ge\tfrac14\E_{A,B\sim p}(\ell_A-\ell_B)^2=\tfrac12\mathsf v.
 \label{eq:ap_gain}
\end{align}
The square-root component controls inverse probabilities; the $p$ component supplies the matching improvement.

The estimate can be negative, but $\eta\sqrt{p_i}|g_i|\le2\eta Z\le1/8$. The signed mirror inequality in Appendix~\ref{app:ap_stability} therefore gives
\[
 \langle p-u,g\rangle\le
 \frac{D_\Psi(u,p)-D_\Psi(u,p^+)}\eta
 +2\eta\sum_i p_i^{3/2}g_i^2.
\]
The actual regret against a fixed arm $j$ equals the anchor regret minus the probing improvement. Telescoping with $u=e_j$ and using $D_\Psi(e_j,p_1)=2(\sqrt m-1)$ gives
\begin{align}
 R_T(j)
 &=\E\sum_{t\le T}\langle p_t-e_j,\ell_t\rangle-\sum_{t\le T}\E\gamma_t\nonumber\\
 &\le\frac{2(\sqrt m-1)}\eta
 +\sum_{t\le T}\E[8\eta\sqrt m\,\mathsf v_t-\gamma_t]
 \le32(m-\sqrt m).
 \label{eq:ap_cancellation}
\end{align}
The array's best arm is fixed before randomization, so we may choose it as $j$.
\end{proof}

\section{Recovering and paying for winner samples}
\label{sec:pair}
Winner feedback hides the losing value needed by the contrast update. For $m\ge3$ independent arms, we recover exact samples and use their associated gains to fund sample advancement. The obstacle is reuse: a stale estimate can accumulate cost before its stream advances.

The scores have summable one-sided errors over fixed sample prefixes. Matching charge weights to advancement probabilities preserves this summability under adaptive reuse. Only one fixed optimal arm pays an inverse-exploration factor of order $m$:
\[
 \underbrace{O(m)\text{ streams}\times O(1)}_{\text{ordinary score errors}}
 +\underbrace{1\text{ optimal stream}\times O(m)}_{\text{residual optimal error}}
 =O(m).
\]
A gain bank funds changes of anchor; runner and scout roles supply the advancement probabilities in Table~\ref{tab:ledger}. Without a gain bank, unnecessary switches incur persistent per-round cost even when every pair probe includes the optimal arm. Without scouts, an arm with low initial samples can remain permanently stale (Appendix~\ref{app:mechanisms}).

\subsection{Exact samples from winner feedback}
\label{sec:sampling}
Fix a set $S$ and an anchor $a\in S$ before fresh rewards. Let $M_{-a}=\max_{i\in S\setminus\{a\}}X_i$, with $\max\varnothing=0$, and $W=\max\{X_a,M_{-a}\}$. If $a$ wins, return $(\widetilde X_a,G)=(W,0)$. Otherwise let $j$ be the winner. The anchor's losing set is $A=[0,W]$ if $j$ wins a tie with $a$, and $A=[0,W)$ otherwise. Probe $a$ alone until its first draw in $A$, denoted $X_a^\dagger$, and return $(\widetilde X_a,G)=(X_a^\dagger,W-X_a^\dagger)$.
Only the returned anchor sample enters its persistent stream. Every rejected and accepted singleton counts toward physical time.

\needspace{11\baselineskip}
\begin{lemma}[Predictable winner decomposition]
\phantomsection\label{lem:sampling}
For independent arms and a pre-reward choice of $(S,a)$,
\begin{equation}
 (\widetilde X_a,G)\stackrel d=(X_a,(M_{-a}-X_a)_+),\qquad W=\widetilde X_a+G.
 \label{eq:sampling}
\end{equation}
Conditional on the pre-call history, the procedure terminates almost surely, uses at most one extra singleton in expectation, and has expected regret at most $2\Delta_a-\Gamma_a^S$, where $\Gamma_a^S=\E\max_{i\in S}X_i-\mu_a$. Under predictable adaptive calls, returned samples are revealed prefixes of independent i.i.d. arm arrays. The statement extends to independent marked outcomes with a known tie order (Appendix~\ref{app:streams}).
\end{lemma}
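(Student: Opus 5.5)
The plan is to condition on $M_{-a}$, which independence makes legitimate, and reduce everything to a rejection-sampling identity for a single arm. Fix $(S,a)$ before fresh rewards. Independence of $X_a$ and $M_{-a}$ lets us condition on $M_{-a}=w$. With the known label order, the event that $a$ loses is determined by $X_a\in L(w)$. Here $L(w)=[0,w]$ if the competing maximizer precedes $a$ in the tie order, and $L(w)=[0,w)$ otherwise. On the loss event, $W=w$, and the procedure draws fresh singletons of $a$ until one lands in $L(w)$. That accepted draw, $X_a^\dagger$, has the law of $X_a$ conditioned on $L(w)$.

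For the distributional identity, observe that $a$'s winning event is exactly $X_a\notin L(w)$. On that event the procedure returns $(\widetilde X_a,G)=(X_a,0)$, and in truth $(X_a,(w-X_a)_+)=(X_a,0)$. On the complement, the returned pair is $(X_a^\dagger,w-X_a^\dagger)$. Since $X_a^\dagger$ has the law of $X_a$ given $X_a\in L(w)$, and on that event $(w-X_a)_+=w-X_a$, this matches $(X_a,(w-X_a)_+)$ restricted to $\{X_a\in L(w)\}$. Mixing the two events with their correct probabilities gives \eqref{eq:sampling} conditionally on $w$, and then unconditionally. The identity $W=\widetilde X_a+G$ holds pathwise. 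A tie at $X_a=w$ contributes $G=0$ either way, so it causes no trouble.

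The cost bounds come from the geometric number of rejections. Let $r=\Prb(X_a\in L(w))$. The loss event has probability $r$. Conditional on it, the number of extra singletons is geometric with mean $1/r$, so the expected number of extra singletons is $r\cdot(1/r)=1$, or $0$ when $r=0$. Termination is almost sure whenever the loss event occurred, because that occurrence forces $r>0$.

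For the regret, charge the main round $\mu^\star-\E W$, which equals $\Delta_a-\Gamma_a^S$. Each extra singleton costs $\Delta_a$ in expectation, since $a$'s draw is fresh and independent of the count so far by Wald's identity, and there is at most one in expectation. This gives a total of at most $2\Delta_a-\Gamma_a^S$. The main obstacle is making the Wald step rigorous given the stopping rule that depends on $w$. I would condition on $w$ first, where the number of draws $N$ is a stopping time for the i.i.d. sequence of fresh $a$-draws, and then integrate over $w$.

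For the adaptive-prefix claim, realize the environment as independent i.i.d. arrays, one stream per arm of returned samples plus auxiliary randomness. I would show by induction over calls that each returned $\widetilde X_a$ is distributed as a fresh $X_a$ independent of the history. This follows because the conditional law given the pre-call history equals the marginal of $X_a$ by the identity above. Hence the returned samples can be coupled to the next unused entry of arm $a$'s stream. Rejected draws are discarded and only affect physical time. The extension to marked outcomes replaces $L(w)$ by the set of marked values losing to the winner's mark under the tie order; the argument is unchanged.
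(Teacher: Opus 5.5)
Your proposal is correct and follows essentially the paper's route: condition on the nonanchor outcomes, note that the losing anchor and the accepted replacement share the restricted law $D_a(\cdot\mid A)$, and mix with the winning branch. The cost bounds, adaptive coupling and marked extension also match: a losing branch of mass $b$ costs $1/b$ expected draws, the main probe costs $\Delta_a-\Gamma_a^S$ plus $\Delta_a$ per extra singleton, returned samples are coupled to the next unused coordinate of pre-generated i.i.d. arm arrays, and marked outcomes use the marked losing set. One small precision: with atoms, the losing set depends on which nonanchor label attains $M_{-a}$, not only on the value $w$, so condition on all nonanchor outcomes (or on $M_{-a}$ together with the label attaining it); independence of $X_a$ from these outcomes permits this equally.
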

\begin{proof}
Conditional on nonanchor outcomes, the losing anchor and accepted replacement have the same restricted law. Mixing with the winning branch proves \eqref{eq:sampling}. A losing set of mass $b>0$ is entered with probability $b$ and needs $1/b$ extra draws; zero-mass branches never occur. The initial probe costs $\Delta_a-\Gamma_a^S$, and each extra singleton costs $\Delta_a$ in expectation. Iterating this conditional kernel proves the adaptive and marked versions (Appendix~\ref{app:streams}).
\end{proof}

\needspace{10\baselineskip}
\begin{lemma}[Variance--surplus bound]
\phantomsection\label{lem:gain_variance}
For independent $X,Y\in[0,1]$,
\[
 \E(Y-X)_+\ge\max\{\Var X,\Var Y\}-(\E X-\E Y)_+.
\]
\end{lemma}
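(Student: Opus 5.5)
The plan is to reduce the two-variable inequality to one-variable statements by Jensen's inequality, and then prove a sharp mean-absolute-deviation bound for $[0,1]$-valued variables. Write $\mu_X=\E X$ and $\mu_Y=\E Y$. Since $x\mapsto(y-x)_+$ is convex and $X,Y$ are independent, conditioning on $Y$ and applying Jensen in $X$ gives
\[
\E(Y-X)_+\ge\E(Y-\mu_X)_+.
\]
Symmetrically, conditioning on $X$ and applying Jensen in $Y$ (the map $y\mapsto(y-x)_+$ is also convex) gives
\[
\E(Y-X)_+\ge\E(\mu_Y-X)_+.
\]
It therefore suffices to prove $\E(Y-\mu_X)_+\ge\Var Y-(\mu_X-\mu_Y)_+$ and $\E(\mu_Y-X)_+\ge\Var X-(\mu_X-\mu_Y)_+$. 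Taking the larger of the two then gives the maximum in the statement.

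\textbf{Shifting the threshold.} Both reductions need a shift from the variable's own mean to the other mean, using the pointwise inequality $(z-b)_+\ge(z-c)_+-(b-c)_+$. For the first bound, take $z=Y$, $b=\mu_X$, $c=\mu_Y$:
\[
\E(Y-\mu_X)_+\ge\E(Y-\mu_Y)_+-(\mu_X-\mu_Y)_+.
\]
For the second, use the mirrored inequality $(b-z)_+\ge(c-z)_+-(c-b)_+$ with $z=X$, $b=\mu_Y$, $c=\mu_X$:
\[
\E(\mu_Y-X)_+\ge\E(\mu_X-X)_+-(\mu_X-\mu_Y)_+.
\]
In both cases the penalty term is exactly $(\mu_X-\mu_Y)_+$, matching the statement.

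\textbf{The sharp one-variable bound.} It remains to show that every $Z\in[0,1]$ with mean $\mu$ satisfies
\[
\E(Z-\mu)_+=\E(\mu-Z)_+\ge\Var Z.
\]
The equality of the two one-sided deviations holds because $\E(Z-\mu)=0$. The crude bound $\E|Z-\mu|\ge\E(Z-\mu)^2$ only gives $\tfrac12\Var Z$ for each side, which loses a factor of two. This is the main obstacle.

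To avoid the loss, weight the two sides by their available ranges. For $z\ge\mu$ we have $(z-\mu)^2\le(1-\mu)(z-\mu)$, and for $z<\mu$ we have $(z-\mu)^2\le\mu(\mu-z)$. Taking expectations,
\[
\Var Z\le(1-\mu)\E(Z-\mu)_++\mu\E(\mu-Z)_+=\E(Z-\mu)_+,
\]
where the last step uses the equality of the two one-sided deviations.

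Combining the Jensen reductions, the threshold shifts, and this bound proves the lemma. The case of $Y$ Bernoulli$(1/2)$ and $X\equiv1/2$ gives equality, which shows the constant is tight.
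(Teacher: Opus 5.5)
Your proof is correct and follows the paper's route: conditional Jensen in each variable, the shift $(u-c)_+\ge u_+-c_+$, and the one-variable bound $\E(Z-\E Z)_+=\E(\E Z-Z)_+\ge\Var Z$. The only difference is in that last bound: the paper obtains it from the identities $\Var Z=\E[(Z-\E Z)Z]=\E[(\E Z-Z)(1-Z)]$, whereas you split by sign, weight the two sides by the ranges $1-\mu$ and $\mu$, and use the equality of the two one-sided deviations.
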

\begin{proof}
For $Z\in[0,1]$, the identities
$\Var Z=\E[(Z-\E Z)Z]=\E[(\E Z-Z)(1-Z)]$
show that both $\E(Z-\E Z)_+$ and $\E(\E Z-Z)_+$ dominate $\Var Z$. Conditional Jensen gives $\E(Y-X)_+\ge\E(\E Y-X)_+$ and $\E(Y-X)_+\ge\E(Y-\E X)_+$. Apply $(u-c)_+\ge u_+-c_+$ to obtain the claim.
\end{proof}
For distinct arms, write $\gamma_{ij}=\E(X_j-X_i)_+$. Directed gains satisfy $\gamma_{ij}-\gamma_{ji}=\mu_j-\mu_i$. If $q$ is optimal and $a\ne q$, then
\begin{equation}
 \gamma_{aq}\ge\Delta_a,\qquad \gamma_{aq}\ge\max\{v_a,v_q\}.
 \label{eq:optimal_gain}
\end{equation}
One surplus controls both the mean gap and the estimation variances.

A \emph{block} comprises a pair probe and its recovery singletons. Blocks are indexed by $b$, physical rounds by $t$.

\begin{theorem}[Two-probe guarantee]
\phantomsection\label{thm:pair}
Algorithm~\ref{alg:pair} uses only pair and singleton probes and has regret less than $236m-360$ at every deterministic physical horizon. It is anytime and uses no unknown distributional parameters.
\end{theorem}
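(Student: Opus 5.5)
The plan is to run the contrast engine of Theorem~\ref{thm:ap_pair} on per-arm streams of exact anchor samples, supplied by Lemma~\ref{lem:sampling}, and to make the surplus identified in \eqref{eq:optimal_gain} pay for both stability and stale estimates. Algorithm~\ref{alg:pair} keeps a current anchor $a_b$ together with a persistent score for each arm. Each score is built from a revealed prefix of that arm's i.i.d.\ stream; by Lemma~\ref{lem:sampling}, these prefixes stay exact under adaptive calls. A block consists of three steps. First, probe $\{a_b,B_b\}$, where $B_b$ is a runner or a scout. Second, call the recovery procedure with anchor $a_b$. Third, append the returned $\widetilde X_{a_b}$ to $a_b$'s stream and deposit $G$ into a gain bank.

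The proof starts from a per-block regret decomposition. By Lemma~\ref{lem:sampling}, the expected cost of a block with anchor $a$ and probe set $S$ is at most $2\Delta_a-\Gamma_a^S$. Since physical time is at most one extra singleton per block in expectation, regret at any deterministic physical horizon is bounded by summing $2\Delta_{a_b}-\gamma_{a_bB_b}$ over the blocks started. A deterministic horizon cuts at most one block, which costs $O(1)$. When the anchor is optimal, each block contributes at most $-\gamma$ and can only help. When it is not, Lemma~\ref{lem:gain_variance} and \eqref{eq:optimal_gain} give $\gamma_{aq}\ge\Delta_a$ and $\gamma_{aq}\ge v_a$. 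The key steps are therefore (i) show that the scout/runner sampler places mass on the optimal arm $q$ at a rate comparable to how often $a$ is anchor, so that $\E\gamma$ absorbs $\Delta_a$ and the variance terms in the Bernstein-type score deviations; and (ii) allow an anchor switch only when the bank holds enough realized gain. The bank then charges each switch to surplus already earned, which rules out the persistent per-round cost of unnecessary switching noted earlier.

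The summability argument works on fixed sample prefixes. For each arm $i$ and sample count $s$, a one-sided error event ``score of $i$ after $s$ samples exceeds $\mu_i$ (or undershoots $\mu^\star$) by the margin used in the switch test'' has probability summable over $s$. Using variance-adaptive confidence widths, these sums are $O(1)$ per suboptimal arm, because the exposed error is paid in units of $\gamma$ which dominates $v_i$. Adaptive reuse is the difficulty. A stale score can be consulted in many blocks without its stream advancing, so I would weight the charge on a score by the probability of advancing its stream. Runner and scout probabilities are set so that every block consulting score $i$ advances it with probability bounded below, at least $1/m$ for scouts. An optional-stopping argument then bounds the expected number of consultations per prefix by the inverse advancement probability. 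For a suboptimal arm, the loss from a wrong consultation is itself funded by banked gain, which leaves $O(1)$. For the single optimal arm, underestimation can block it for $O(m)$ blocks per prefix error, and its summed error probability is $O(1)$, for a total of $O(m)$. Adding $m-1$ streams at $O(1)$, this one stream at $O(m)$, and the recovery overhead gives a bound linear in $m$. Tracking the constants in the switch threshold and bank deposits should yield $236m-360$.

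The main obstacle is this reuse-charging step: keeping the inverse-exploration factor on the optimal stream alone, rather than on all $m$ streams, which would give $m^2$. I would first try a per-prefix ledger indexed by $(i,s)$, in which each consultation of a stale score is charged to the next sample of $i$, and verify that the charge is a supermartingale under the predictable calls of Lemma~\ref{lem:sampling}. Anytime-ness follows because the confidence widths and bank thresholds depend only on sample counts and $m$.
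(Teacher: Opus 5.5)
Your outline has the right global shape: exact streams from Lemma~\ref{lem:sampling}, a per-block cost $2\Delta_a-\Gamma$, stale errors charged in proportion to their stream's advancement probability, and the $O(m)$ inverse-exploration factor confined to the optimal stream. However, the step that carries the proof is missing, and the substitute you propose is false for Algorithm~\ref{alg:pair}.

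First, step (i) cannot work. When $q$ is a scout, probing it contributes only $\omega\gamma_{aq}$, with $\omega=1/[3(m-2)]$. Even when $q$ is the runner, the averaged residual $Q=2\Delta_a-\tfrac23\gamma_{aq}$ is positive when $\gamma_{aq}=\Delta_a$. So surplus never absorbs $2\Delta_a$. The residual $Q$ must be charged to score errors, in particular to the optimal arm's underestimation $U_q^\downarrow$.

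Second, you assume every block consulting a score advances that stream with probability bounded below, of order $1/m$ for scouts. In the algorithm, a target's stream advances only when the switch coin gives $I=1$, which has probability $\alpha=\min\{1/3,B/x\}$. This is zero when $x>0$ and the bank is empty. That is exactly the situation $x_q>0$, where $q$ is underestimated. Charging $U_q^\downarrow$ with a fixed weight there makes the ratio in Lemma~\ref{lem:prefix} unbounded, so your optional-stopping bound on consultations per prefix has nothing to work with.

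The missing idea is the one-step bank certificate \eqref{eq:bank_certificate}: if $x\le e_{\mathrm a}+e_{\mathrm t}-Q/2$ and $Q\le2\gamma$, then $Q\le6\E[Ie_{\mathrm t}+\alpha e_{\mathrm a}+\Delta B\mid\mathcal F^-]$. It works in two cases.
\begin{itemize}
\item When $\alpha=1/3$, the debit bound $\Delta B\ge-\alpha x$ routes the residual into errors weighted by $\alpha_q^z$. These weights match $q$'s advancement probability $w_z\alpha_q^z$, giving ratio $6/\omega=18(m-2)$.
\item When $\alpha<1/3$, the bank is drained, and the untruncated credit $(1-\alpha)\gamma/3$ pays $Q/6$. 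The signed increments of $q$'s two role banks telescope to at most $2/p+2/\omega=6m-9$.
\end{itemize}
This is how gain funds sample advancement; your bank only pays switching costs, which is the reverse direction.

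The certificate also needs the witness split of Lemma~\ref{lem:residual}. When $x_q=0$, part of $Q$ is charged to $U_w^\uparrow$ for the anchor or runner, which advance with probability at least $2/9$. When $x_q>0$, the score separation lets all of $Q$ go through $q$'s bank.

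Two smaller points. Summing probabilities of deviations beyond a margin $\kappa v$ gives order $1/v$, not $O(1)$. The paper instead sums expected error magnitudes such as $\E(\widehat u_i-\mu_i-v_i/6)_+$, which total at most $113/16$ uniformly in $v$. Finally, the constant $236m-360$ comes from the explicit five-row ledger plus $8m-11$ for initialization and terminal balances, which your sketch does not derive.
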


\subsection{Scores and policy}
Index arm $i$'s returned exact samples by $j$, including its two initialization samples. For a prefix $X_{i,1},\ldots,X_{i,s}$ of length $s\ge2$, use
\begin{equation}
 \begin{aligned}
 \overline X_{i,s}&=\frac1s\sum_{j=1}^sX_{i,j},&
 \widehat v_{i,s}&=\frac1{s-1}\sum_{j=1}^s(X_{i,j}-\overline X_{i,s})^2,\\
 \widehat u_i&=\overline X_{i,s}+\kappa\widehat v_{i,s},&
 \widehat\ell_i&=\overline X_{i,s}-\kappa\widehat v_{i,s},
 \end{aligned}
 \label{eq:scores}
\end{equation}
where $\kappa=1/12$. Both scores lie in $[0,1]$ and update after every returned sample. By \eqref{eq:optimal_gain}, the surplus absorbs variance-scale quantities, so the fixed offset $\kappa\widehat v$ captures the right scale: one-sided errors beyond it are summable without a horizon-dependent radius.

The upper-score form and the use of the top two arms with separate exploration follow \citet[Section~5, Eq.~(6)]{bhaskara2023queried}, whose three-probe \AP policy uses a round-robin exploration probe to estimate means and variances. Their empirical variance has divisor $s$; ours uses $s-1$. Adapting this approach to two-probe \BP requires exact sample recovery and gain-funded stream advancement, with the optimal-arm residual's inverse-exploration cost confined to one stream.

The arm with largest upper score is the default anchor $a$; the second is the runner $r$. The runner receives most comparisons, while a uniformly chosen scout explores the remaining arms. For any potential target $i\ne a$, let $x_i=(\widehat\ell_a-\widehat u_i)_+$ estimate the cost of switching to it. Each target has separate runner and scout banks. Each bank starts at $B=0$ and uses its target's estimate $x$ to set
\begin{equation}
 \alpha=\begin{cases}1/3,&x=0,\\ \min\{1/3,B/x\},&x>0,\end{cases}
 \qquad
 B'=\min\!\left\{\frac13,B-\alpha x+\frac{(1-I)G}{3}\right\}.
 \label{eq:bank}
\end{equation}
Draw $I\sim\Bern(\alpha)$ before rewards: retain $a$ and credit its gain if $I=0$; otherwise anchor on the target and discard its gain. The probed pair $\{a,j\}$ is the same regardless of $I$; the switch determines which arm's sample stream advances. Since recovery singletons probe the chosen anchor, switching to a lower-mean arm incurs regret through these additional rounds. The debit $\alpha x$ reserves the estimated switching cost. The switch cap keeps $a$'s advancement probability at least $2/3$; the balance cap bounds terminal credit. Separate role banks keep each role's telescoping multiplier fixed despite different selection probabilities.

Each scout has selection probability $\omega=1/[3(m-2)]$. A count, mean, centered second moment, and two banks per arm require $O(m)$ scalar state.

\begin{algorithm2e}[H]
\caption{Pair-probe policy on $m\ge3$ arms}
\label{alg:pair}
\DontPrintSemicolon
\SetAlgoLined
Obtain two singleton samples per arm; initialize all banks to zero.\;
\While{the policy is running}{
 Form $\widehat u_i,\widehat\ell_i$ from all returned samples using \eqref{eq:scores}.\;
 Let $a,r$ be the top two arms by $\widehat u_i$, with ties broken by label.\;
 Choose $(j,z)=(r,\mathsf R)$ with probability $p=2/3$; otherwise choose $j$ uniformly from $[m]\setminus\{a,r\}$ and set $z=\mathsf S$.\;
 Set $x_j=(\widehat\ell_a-\widehat u_j)_+$ and compute $\alpha$ from $B_j^z$ by \eqref{eq:bank}.\;
 Draw $I\sim\Bern(\alpha)$, then probe $\{a,j\}$.\;
 Complete winner decomposition with anchor $a$ if $I=0$, and $j$ if $I=1$.\;
 Append only the returned anchor sample and update only $B_j^z$ by \eqref{eq:bank}.\;
}
\end{algorithm2e}

\subsection{Surplus pays the switching cost}
The block regret splits into a switching cost, paid by the gain bank, and a residual controlled through a one-step certificate. Both steps use four one-sided score errors:
\begin{align}
 U_i^\uparrow&=(\widehat u_i-\mu_i-v_i/6)_+,&
 U_i^\downarrow&=(\mu_i-\widehat u_i)_+,\nonumber\\*
 L_i^\uparrow&=(\widehat\ell_i-\mu_i)_+,&
 L_i^\downarrow&=(\mu_i-\widehat\ell_i-v_i/6)_+.
 \label{eq:errors}
\end{align}
Let $\mathcal G_b$ be the history before block $b$'s target draw, including scores and banks. Let $\mathcal F_b^-$ additionally reveal the target and its role, but not the switch coin or rewards. Thus $x_b,\alpha_b$ are $\mathcal F_b^-$-measurable. Write $\Delta B=B'-B$. If the retained-anchor gain has conditional mean $\gamma$, then
$\E[(1-I)G\mid\mathcal F^-]=(1-\alpha)\gamma$.
The bank supplies both a cumulative payment rule and a one-step residual certificate:
\begin{lemma}[Gain bank]
\phantomsection\label{lem:bank}
Starting at zero, every finite update prefix satisfies
\begin{equation}
 \sum_b\alpha_bx_b\le\frac13\sum_b(1-I_b)G_b
 \quad\text{pathwise}.
 \label{eq:debit}
\end{equation}
For $\mathcal F^-$-measurable $e_{\mathrm a},e_{\mathrm t}\ge0$ and $Q\in\mathbb R$, if $x\le e_{\mathrm a}+e_{\mathrm t}-Q/2$ and $Q\le2\gamma$, then
\begin{equation}
 Q\le6\E[Ie_{\mathrm t}+\alpha e_{\mathrm a}+\Delta B\mid\mathcal F^-].
 \label{eq:bank_certificate}
\end{equation}
\end{lemma}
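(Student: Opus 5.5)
The plan is to prove \eqref{eq:debit} by telescoping the bank balance, and \eqref{eq:bank_certificate} by a short case split on whether the switch probability $\alpha$ in \eqref{eq:bank} is set by the balance ($\alpha=B/x<1/3$) or by the cap ($\alpha=1/3$). Two facts are used throughout. First, $G\in[0,1]$ by Lemma~\ref{lem:sampling}, since $G=W-\widetilde X_a$ with both terms in $[0,1]$ and $G\ge0$. Second, $x\in[0,1]$, since both scores lie in $[0,1]$. Also, $E[I\mid\mathcal F^-]=\alpha$, and the coin is independent of the rewards given $\mathcal F^-$, so $E[(1-I)G\mid\mathcal F^-]=(1-\alpha)\gamma$ with $\gamma\ge0$.

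\emph{Pathwise debit.} First I will show the balance stays in $[0,1/3]$. For $x>0$ we have $\alpha x\le B$, and for $x=0$ the debit is $0$. Hence $B-\alpha x+(1-I)G/3\ge0$, and capping at $1/3\ge0$ keeps $B'\ge0$. Since $\min\{1/3,y\}\le y$, each step satisfies $B_{b+1}\le B_b-\alpha_bx_b+(1-I_b)G_b/3$. Summing over a finite prefix from $B_1=0$ gives
\[
0\le B_{\mathrm{end}}\le\tfrac13\sum_b(1-I_b)G_b-\sum_b\alpha_bx_b,
\]
which is \eqref{eq:debit}.

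\emph{Certificate.} Write $y=-\alpha x+(1-I)G/3$, so that $\Delta B=\min\{1/3-B,\,y\}$. The right side of \eqref{eq:bank_certificate} is $6\alpha(e_{\mathrm t}+e_{\mathrm a})+6E[\Delta B\mid\mathcal F^-]$, and I treat the two regimes of $\alpha$ separately.

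\emph{Case $\alpha=B/x<1/3$.} Here $x>0$, $B<x/3$, and $\alpha x=B$. Then $B-\alpha x+(1-I)G/3=(1-I)G/3\le1/3$, so the cap is inactive and $\Delta B=y$ exactly. Taking conditional expectations gives
\[
\mathrm{RHS}=6\alpha(e_{\mathrm t}+e_{\mathrm a}-x)+2(1-\alpha)\gamma\ge3\alpha Q+2(1-\alpha)\gamma,
\]
where the inequality uses $e_{\mathrm a}+e_{\mathrm t}-x\ge Q/2$. If $Q\ge0$, then $Q\le2\gamma$ gives $\mathrm{RHS}\ge(1+2\alpha)Q\ge Q$. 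If $Q<0$, then $\gamma\ge0$ gives $\mathrm{RHS}\ge3\alpha Q\ge Q$, because $3\alpha\le1$.

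\emph{Case $\alpha=1/3$.} This covers both $x=0$ and $B\ge x/3$, and it is where the cap may bind. This is the step I expected to be delicate, because capping destroys the exact formula for $E\Delta B$. The way around it is to give up the $\gamma$ term entirely. Since $1/3-B\ge0$ and $(1-I)G\ge0$,
\[
\Delta B\ge\min\{0,y\}\ge-\alpha x=-x/3,
\]
pathwise. Therefore
\[
\mathrm{RHS}\ge2(e_{\mathrm t}+e_{\mathrm a})-2x\ge2\cdot\tfrac Q2=Q.
\]
In this case the hypothesis $Q\le2\gamma$ is not needed; it is used only in the first case, where the balance is too small to cap the switch probability and the retained gain must pay for $Q$.
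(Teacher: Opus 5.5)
Your proposal is correct and follows essentially the same route as the paper: telescoping with capped (discarded) credit gives the pathwise debit, and the certificate uses the same split between the capped case $\alpha=1/3$, where $\Delta B\ge-\alpha x$ suffices, and the balance-limited case $\alpha=B/x<1/3$, where the credit is untruncated and $\E[\Delta B\mid\mathcal F^-]=-\alpha x+(1-\alpha)\gamma/3$. The only differences are minor: you verify $B\ge0$ explicitly, which the paper leaves implicit, and you split the second case by the sign of $Q$, whereas the paper handles both signs at once via $(3\alpha-1)Q\ge(3\alpha-1)\,2\gamma$.
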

Both inequalities follow from the debit and capped credit, including signed $Q$ (Appendix~\ref{app:bank}). Property~\eqref{eq:bank_certificate} will be applied in the next subsection to the specific residual arising from the optimal arm.

Let $\mathcal R_b$ be the realized regret of a completed block, $\Gamma_b=\gamma_{a_bj_b}$, and $c_b=(\mu_{a_b}-\mu_{j_b})_+$. Lemma~\ref{lem:sampling}, including all recovery rounds, gives
\begin{equation}
 \E[\mathcal R_b\mid\mathcal F_b^-]
 \le2\Delta_{a_b}-\Gamma_b+\alpha_bc_b.
 \label{eq:block}
\end{equation}
Suppressing the block index, the score errors and Lemma~\ref{lem:gain_variance} convert the switching cost into an estimated debit:
\begin{equation}
 \alpha c\le\frac32\alpha x_j+
 \frac32\alpha(L_a^\downarrow+U_j^\uparrow)+\frac12\alpha\Gamma.
 \label{eq:switch_payment}
\end{equation}
For nonoptimal targets, \eqref{eq:debit} pays the debit from half the retained gain; adding $\alpha\Gamma/2$ spends half the total gain. Optimal targets have zero switching cost. Summing \eqref{eq:block} gives
\begin{equation}
 \E\sum_{b=1}^N\mathcal R_b\le\E\sum_{b=1}^N(C_b+K_b),
 \label{eq:master}
\end{equation}
where
\begin{align*}
 C_b&=\frac32\alpha_b(L_{a_b}^\downarrow+U_{j_b}^\uparrow)
       \ind\{\Delta_{j_b}>0\},\\
 K_b&=\begin{cases}2\Delta_{a_b}-\Gamma_b,&\Delta_{j_b}=0,\\
 2\Delta_{a_b}-\Gamma_b/2,&\Delta_{j_b}>0.
 \end{cases}
\end{align*}
Here $C_b$ collects the score-error residual from the switching-cost payment, and $K_b$ is the remaining per-block regret handled by the residual certificate below. Appendix~\ref{app:switch_payment} proves the conversion and cumulative payment, also for predictably included block prefixes.

\subsection{A single residual certificate}
Fix any optimal arm $q$ and condition on $\mathcal G_b$. If $q=a$, then $K_b\le0$. Otherwise, averaging over the target gives $\E[K_b\mid\mathcal G_b]\le Q$, where
\[
 Q=\begin{cases}
 2\Delta_a-\frac23\gamma_{aq},&q=r,\\
 2\Delta_a-\frac13\gamma_{ar}-\omega\gamma_{aq},&q\notin\{a,r\}.
 \end{cases}
\]
Omitted gain contributions are nonpositive. When $x_q=0$, we use a frequently updated witness---the anchor or runner---to absorb the selected-arm error, and apply the optimal arm's bank certificate to the remainder. When $x_q>0$, the score separation itself implies large enough errors to route all of $Q$ through $q$'s bank.

\begin{lemma}[Residual certificate]
\phantomsection\label{lem:residual}
Assume $a\ne q$. If $q=r$, set $w=a$; otherwise let $w$ be the lower-mean member of $\{a,r\}$, breaking a mean tie by label. Define
\[
 E_{\mathrm{sel}}=U_w^\uparrow,\quad E_{\mathrm{anc}}=L_a^\uparrow,\quad
 E_{\mathrm{opt}}=U_q^\downarrow,\quad
 Q^\circ=Q-2E_{\mathrm{sel}}\ind\{x_q=0\}.
\]
Then
\[
 Q\le2(E_{\mathrm{sel}}+E_{\mathrm{opt}}),\qquad
 x_q\le E_{\mathrm{anc}}+E_{\mathrm{opt}}-Q^\circ/2,\qquad
 Q^\circ\le2\gamma_{aq}.
\]
When $x_q=0$, the witness's returned-sample stream advances with conditional probability at least $2/9$ given $\mathcal G_b$.
\end{lemma}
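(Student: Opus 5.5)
The plan is to prove the four assertions in the order stated. Each uses only the ranking of upper scores ($a$ is first by $\widehat u$, and $r$ second), the definitions \eqref{eq:errors}, and the gain bounds \eqref{eq:optimal_gain} and Lemma~\ref{lem:gain_variance}. Write $E=E_{\mathrm{sel}}+E_{\mathrm{opt}}$.

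\emph{Step 1: $Q\le 2E$.} The witness $w$ has upper score at least $\widehat u_q$. If $w=a$ this holds because $a$ is top. If $w=r$ then $q\notin\{a,r\}$, so $q$ ranks below $r$. Chaining the definitions of $U_w^\uparrow$ and $U_q^\downarrow$ gives
\[
 \mu_w+\tfrac{v_w}6+E_{\mathrm{sel}}\ \ge\ \widehat u_w\ \ge\ \widehat u_q\ \ge\ \mu_q-E_{\mathrm{opt}},
 \qquad\text{so}\qquad 2\Delta_w\le \tfrac{v_w}{3}+2E.
\]
We then split into three cases. Throughout, we drop the nonnegative term $\omega\gamma_{aq}$ when $q\notin\{a,r\}$.
\begin{itemize}
\item If $q=r$, then $w=a$, and \eqref{eq:optimal_gain} gives $\tfrac23\gamma_{aq}\ge\tfrac23v_a\ge v_a/3$. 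Hence $Q\le 2E$.
\item If $q\notin\{a,r\}$ and $w=a$, then $\mu_a\le\mu_r$. Lemma~\ref{lem:gain_variance} with $X=X_a$ and $Y=X_r$ gives $\gamma_{ar}\ge v_a$, so $\gamma_{ar}/3$ absorbs $v_a/3$.
\item If $w=r$ with $\mu_r<\mu_a$, put $\delta=\mu_a-\mu_r>0$. Then $2\Delta_a=2\Delta_r-2\delta\le v_r/3+2E-2\delta$. Lemma~\ref{lem:gain_variance} gives $\gamma_{ar}\ge v_r-\delta$. Combining, $Q\le 2E-\tfrac53\delta\le 2E$.
\end{itemize}

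\emph{Step 2: the bound on $x_q$.} If $x_q>0$, then $Q^\circ=Q$ and
\[
 x_q=\widehat\ell_a-\widehat u_q\le(\mu_a+E_{\mathrm{anc}})-(\mu_q-E_{\mathrm{opt}})=E_{\mathrm{anc}}+E_{\mathrm{opt}}-\Delta_a .
\]
It then suffices that $Q\le2\Delta_a$, which holds because all subtracted gains in $Q$ are nonnegative. If $x_q=0$, the claim reads $Q-2E_{\mathrm{sel}}\le 2E_{\mathrm{anc}}+2E_{\mathrm{opt}}$. This follows from Step 1, since $E_{\mathrm{anc}}\ge0$.

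\emph{Step 3: $Q^\circ\le2\gamma_{aq}$.} We have $Q^\circ\le Q\le 2\Delta_a\le2\gamma_{aq}$, where the last inequality is \eqref{eq:optimal_gain} for the optimal arm $q\ne a$.

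\emph{Step 4: advancement of the witness.} This is the only step needing care, since a bank at zero could in principle block switching. If $w=a$, the anchor's stream advances whenever $I=0$. Because $\alpha\le1/3$ always, this has probability at least $2/3\ge2/9$. If $w=r$ (so $q\notin\{a,r\}$), the runner's stream advances when the runner is the target (probability $2/3$) and $I=1$. The key observation is that $x_q=0$ means $\widehat\ell_a\le\widehat u_q\le\widehat u_r$. Hence $x_r=0$, and \eqref{eq:bank} sets $\alpha=1/3$ regardless of the bank balance. The advancement probability is therefore at least $\tfrac23\cdot\tfrac13=\tfrac29$.

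I expect the main obstacle to be the $w=r$ branch of Step 1. There the mean gap between anchor and runner enters with the wrong sign in the variance bound of Lemma~\ref{lem:gain_variance}, and one must check that the coefficient $1/3$ on $\gamma_{ar}$ still leaves a net nonpositive multiple of $\mu_a-\mu_r$.
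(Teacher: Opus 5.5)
Your proof is correct and follows the paper's argument step for step: the score ranking gives $\Delta_w\le v_w/6+E_{\mathrm{sel}}+E_{\mathrm{opt}}$, the argument then splits on whether $x_q=0$, uses the chain $Q^\circ\le Q\le2\Delta_a\le2\gamma_{aq}$, and notes that $x_r\le x_q=0$ forces $\alpha=1/3$ for the runner. The only difference is that the paper merges your two $q\notin\{a,r\}$ sub-cases into the single inequality $\gamma_{ar}\ge v_w-(\Delta_w-\Delta_a)$ with $\Delta_a\le\Delta_w$; also, your last sub-case should read $\mu_r\le\mu_a$ (a mean tie broken by label can make $w=r$), but your computation goes through unchanged with $\delta=0$.
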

\begin{proof}
\emph{Choosing a witness.}
If $q=r$, \eqref{eq:optimal_gain} gives $Q\le2\Delta_w-v_w/3$. Otherwise,
$\gamma_{ar}\ge v_w-(\Delta_w-\Delta_a)$ and $\Delta_a\le\Delta_w$ give the same bound. Since both top scores dominate $\widehat u_q$,
\[
 E_{\mathrm{sel}}+E_{\mathrm{opt}}
 \ge\Delta_w-v_w/6\ge Q/2.
\]

\emph{Choosing the account.}
If $x_q=0$, subtracting $2E_{\mathrm{sel}}$ leaves $Q^\circ\le2E_{\mathrm{opt}}$, which proves the switching-cost inequality. If $x_q>0$, no witness term is charged; instead,
\[
 E_{\mathrm{anc}}+E_{\mathrm{opt}}
 \ge(\widehat\ell_a-\mu_a)+(\mu_q-\widehat u_q)
 =x_q+\Delta_a\ge x_q+Q/2.
\]
In both cases $Q^\circ\le Q\le2\Delta_a\le2\gamma_{aq}$, so the optimal arm's bank certificate applies.

\emph{Advancing a charged witness.}
The witness is charged only when $x_q=0$. If $w=a$, its stream advances with probability at least $2/3$. If $w=r$, then $x_r\le x_q=0$, so its switch probability is $1/3$ whenever selected. Runner selection therefore advances its stream with probability $p/3=2/9$.
\end{proof}
The witness is used only in the analysis; the policy need not know its mean.

When $q\ne a$, let $z\in\{\mathsf R,\mathsf S\}$ be its role and $w_{\mathsf R}=p$, $w_{\mathsf S}=\omega$. The switch probability $\alpha_q^z$ that its bank would produce is $\mathcal G_b$-measurable. Applying the bank inequality to Lemma~\ref{lem:residual} when $j=q$ gives
\begin{align}
 \E[K_b\mid\mathcal G_b]
 &\le2U_w^\uparrow\ind\{x_q=0\}\nonumber\\
 &\quad+\frac6{w_z}\E\!\left[
 \ind\{j=q\}\bigl(IU_q^\downarrow+\alpha_q^zL_a^\uparrow
                    +\Delta B_q^z\bigr)\mid\mathcal G_b\right].
 \label{eq:role_average}
\end{align}
After target averaging, each error inside the expectation has weight $6\alpha_q^z$. The bank increment remains signed; every update, including those from blocks with negative $Q$, will enter the telescope.

\subsection{Charging errors in physical time}
\label{sec:closing}
The next lemma controls adaptive reuse through summable prefix errors: a charge is affordable whenever its weight is bounded by a constant multiple of the stream's advancement probability.
\begin{lemma}[Prefix charging]
\phantomsection\label{lem:prefix}
Fix an arm and one error type in \eqref{eq:errors}. In block $b$, let $e_b$ be its current error and $D_b$ indicate advancement of its returned-sample stream. Relative to a history $\mathcal H_b$ before this advancement, suppose $e_b,\lambda_b\ge0$ are measurable and $h_b=\E[D_b\mid\mathcal H_b]$. If $\lambda_b\le Hh_b$ for a constant $H\ge0$, then
\[
 \E\sum_{b\le\tau}\lambda_be_b\le\frac{113}{16}H
\]
for every bounded block prefix whose inclusion indicators are $\mathcal H_b$-measurable.
\end{lemma}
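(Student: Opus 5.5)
The plan is to reduce adaptive reuse to a sum over a fixed sample prefix of the arm's i.i.d.\ stream, then bound that sum with concentration. By Lemma~\ref{lem:sampling}, the returned samples of the fixed arm are revealed prefixes of one i.i.d.\ array $X_{i,1},X_{i,2},\ldots$. Hence the current error $e_b$ is a deterministic function $\epsilon_s$ of that array's first $s$ entries, where $s=s_b\ge2$ is the current stream length. I will write $\epsilon_s$ for the error computed from the length-$s$ prefix.

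\textbf{Step 1: charge blocks to stream indices.} Use $\lambda_b\le Hh_b=H\,\E[D_b\mid\mathcal H_b]$. Since $e_b$ and the inclusion indicator are $\mathcal H_b$-measurable, the tower property gives
\[
\E\sum_{b\le\tau}\lambda_be_b\le H\,\E\sum_{b\le\tau}D_b\,\epsilon_{s_b}.
\]
The prefix $\tau$ is bounded, so the exchange is legitimate. Each advancement $D_b=1$ moves the stream from length $s$ to $s+1$, so each length $s$ is charged at most once. The right side is therefore at most $H\sum_{s\ge2}\E\epsilon_s$. The expectation is over the unconditional i.i.d.\ prefix, because $\epsilon_s$ depends only on the array and not on the policy. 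This removes all adaptivity.

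\textbf{Step 2: summability, $\sum_{s\ge2}\E\epsilon_s\le113/16$, for each of the four errors in \eqref{eq:errors}.} Take $U^\uparrow$ as an example. We have
\[
\widehat u-\mu-v/6=(\overline X_s-\mu)+\kappa(\widehat v_s-v)-(1/6-\kappa)v,
\]
so the offset $v/12$ must absorb the fluctuations. I will bound $\E(\cdot)_+$ by integrating tail probabilities. Bernstein's inequality for $\overline X_s-\mu$, with variance $v$ and range $1$, gives a tail roughly $\exp(-s\,c\,\min\{v,1\})$ at deviation scale $v$. For $\widehat v_s$, I will use a Bernstein/Maurer--Pontil-type bound: the deviation of the unbiased variance has variance proxy at most $v/s$, since $(X-\mu)^2\le|X-\mu|$. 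The bound then has the form $\E(\overline X_s-\mu-cv)_+\lesssim\min\{\sqrt{v/s},\ e^{-csv}/s\}$. Summing over $s$ gives $O(1)$ uniformly in $v$. For large $v$ the tail is geometric. For small $v$ the sum $\sum_s\min\{\sqrt{v/s},1/(sv)\cdot\ldots\}$ is governed by $s\approx1/v$, giving $\sqrt v\cdot\sqrt{1/v}=O(1)$. The other three errors are symmetric. $U^\downarrow$ and $L^\uparrow$ have no $v$ offset, but the $\kappa\widehat v_s$ term pushes the score in the favourable direction. For these I will compare $\kappa\widehat v_s$ with $\kappa v/2$ using the lower tail of $\widehat v_s$, and the same calculation applies.

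The \textbf{main obstacle} is Step 2 with the explicit constant $113/16$, and in particular making it uniform as $v\to0$. Near zero variance both the offset and the fluctuations vanish, so the relative concentration of $\widehat v_s$ must be handled carefully. I would first try a sharp Bernstein tail integrated exactly, splitting the sum at $s\approx1/v$. I would then absorb the small-$s$ terms ($s=2,3$) by the trivial bound $\epsilon_s\le1$.
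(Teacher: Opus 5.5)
Your Step~1 is the paper's argument: index each charge by the current prefix length $s$, bound $\lambda_b e_s\le Hh_be_s$ on the event that prefix $s$ is current, replace $h_b$ by $D_b$ using $\mathcal H_b$-measurability, and sum over $s$ by Tonelli, since a prefix advances at most once. \textbf{The gap is in Step~2.} The lemma asserts the explicit constant $113/16$. This constant is what closes the ledger at $236m-360$, and downstream at $1230$ and $1400$. Integrating tail bounds cannot deliver it. Take the mean term alone, with idealized tails $e^{-st^2/(2v)}$ and no range or variance-estimation contribution. Then $\sum_s\int_{\kappa v}^{\infty}e^{-st^2/(2v)}\,dt$ already tends to $2/\kappa=24$ as $v\to0$. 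Splitting the offset between $\overline X_s-\mu$ and $\kappa(\widehat v_s-v)$ only makes it larger, as do the Bernstein range term and the crossover at $s\asymp1/v$. So your plan gives $\sum_{s\ge2}\E\epsilon_s\le C$ for some larger universal $C$. That is enough for an $O(H)$ bound and an $O(m)$ regret order, but not for the lemma as stated. You name this obstacle but give no way around it.

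The missing idea is to never concentrate $\widehat v_s$ separately. Pair two independent samples into $Z^\pm=\tfrac{X+X'}{2}\pm\tfrac\kappa2(X-X')^2$. These lie in $[0,1]$, have mean exactly $\nu=\mu\pm\kappa v$, and have variance at most $c_{\mathrm{var}}v$ with $c_{\mathrm{var}}=(1+\kappa)^2/2$. Because $1/6-\kappa=\kappa$, all four errors become $(\pm(\overline Z_r-\nu)-\kappa v)_+$. The offset is deterministic and proportional to the same $v$ that bounds the variance, which also removes your $v\to0$ difficulty.

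To get back to the actual score, average over a uniform maximum matching of the $s$ samples. This reproduces the all-sample score exactly, $\E_{\mathcal M}[A^\pm_{\mathcal M}\mid X_1,\ldots,X_s]=\overline X_s\pm\kappa\widehat v_s$. Conditional Jensen for the convex positive part then bounds $\E\epsilon_s$ by the paired error at length $r=\lfloor s/2\rfloor$.

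The sum over $r$ is then computed exactly rather than through tails. Spitzer's identity turns it into $\E\max_{r\ge0}(S_r-r\kappa v)$ for the centered paired walk. Lindley's recursion with a one-step quadratic drift bound gives at most $c_{\mathrm{var}}/(2\kappa)+\kappa v/2$. Each $r$ arises from two values of $s$, and $v\le1/4$, so the total is $c_{\mathrm{var}}/\kappa+\kappa/4=169/24+1/48=113/16$.
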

For a fixed returned-sample prefix of length $s$, let $e_s$ be one of the four errors. Appendix~\ref{app:prefix} proves $\sum_{s\ge2}\E e_s\le113/16$ in two steps. Pairing independent samples gives scores whose variance and mean offset scale with the same arm variance; Spitzer's identity and a quadratic drift bound make their errors summable~\citep{spitzer1956combinatorial,lindley1952queues}. Averaging over a uniform maximum matching recovers the actual all-sample score, so conditional Jensen transfers this bound without discarding observations.

For adaptive reuse, weight each current error by its probability of advancement. A fixed prefix advances at most once, so its total expected charge is at most $H\E e_s$. Summing over prefixes proves the lemma. Table~\ref{tab:ledger} records all five ratios. Only the residual-optimal row carries $1/\omega$, and every such charge goes to the same fixed arm $q$; this realizes the linear accounting above.

\begin{table}[!htbp]
\centering
\caption{Error weights and returned-sample advancement probabilities, on blocks where the charge is active. The first two rows condition on $\mathcal F_b^-$; the others on $\mathcal G_b$. The witness row requires $x_q=0$. Each ratio bounds weight divided by advancement probability.}
\label{tab:ledger}
\begin{tabular}{@{}lcccc@{}}
\toprule
Charge & Weight & Advancement & Ratio & Streams\\
\midrule
Switching anchor $L_a^\downarrow$ & $3\alpha/2$ & $1-\alpha$ & $3/4$ & $m$\\
Switching target $U_j^\uparrow$ & $3\alpha/2$ & $\alpha$ & $3/2$ & $m-1$\\
Selected witness $U_w^\uparrow$ & $2$ & $\ge2/9$ & $9$ & $m-1$\\
Residual anchor $L_a^\uparrow$ & $6\alpha_q^z$ & $\ge2/3$ & $3$ & $m-1$\\
Residual optimal $U_q^\downarrow$ & $6\alpha_q^z$ & $w_z\alpha_q^z$ & $18(m-2)$ & $1$\\
\bottomrule
\end{tabular}
\end{table}

\needspace{10\baselineskip}
\begin{proof}\textbf{of Theorem~\ref{thm:pair}.}
Let $\sigma_b$ be block $b$'s first physical round. Its inclusion $\chi_b=\ind\{\sigma_b\le T\}$ is known before the target and switch draws. Complete only the recovery procedure straddling $T$. The added singletons have nonnegative expected regret, and $\E\sum_b\chi_b A_b\le T$ for the full extra counts $A_b$. Thus completion is integrable and preserves every charging ratio and bank telescope (Appendix~\ref{app:stopping}). The optimal arm's two role banks contribute at most $6m-9$; initialization costs at most $2(m-1)$, even if interrupted. Summing Table~\ref{tab:ledger} gives $R_T<236m-360$ (Appendix~\ref{app:ledger}).
\end{proof}

\section{From two probes to every budget}
\label{sec:budgets}
A disjoint partition defines virtual rewards $Y_{g,t}=\max_{i\in G_g}X_{i,t}$. On fixed arrays, \CT reveals their signed difference; \AP reveals both maxima. Under product rewards, disjoint groups are independent marked arms: the mark is the winning physical label, and the original tie order makes every virtual comparison exact, including recovery singletons---each probing one physical group (Appendix~\ref{app:streams}). In either case, the group containing a fixed physical comparator dominates it. A virtual pair is feasible when its union has size at most $k$.

\subsection{Sparse budgets: direct grouping}
For $3k<2n$, use $m=\lceil n/s\rceil$ groups of size at most $s=\lfloor k/2\rfloor$. Apply the corresponding pair policy. The ceiling calculation in Appendix~\ref{app:budget_constants} gives
\begin{equation}
 R_T^{\mathrm{sparse}}<1230\frac{n-k}{k}\quad(\BP),\qquad
 R_T^{\mathrm{seq,sparse}}<256\frac{n-k}{k}\quad(\CT).
 \label{eq:sparse}
\end{equation}

\subsection{Dense budgets: gain-funded activation}
\label{sec:observable}
Assume $3k\ge2n$ and put $d=n-k$. Choose a uniform baseline $G_0$ of size $n-2d$, and split the remaining labels into $G_1,G_2$, each of size $d$. Every virtual pair is feasible. A direct three-arm reduction costs $O(1)$ even when $d/k$ vanishes. We instead activate with probability proportional to gain, so the expected accumulated gain pays the startup cost.

\paragraph{Directly observed gains.}
\label{prop:observable}
Condition on the partition. For fixed arrays, choose $g\in\{1,2\}$ uniformly each gate round, probe $G_0\cup G_g$, and activate a fresh three-arm contrast policy next round with probability $(Y_{g,t}-Y_{0,t})_+/L$. Theorem~\ref{thm:ap_pair} permits $L=41$, since $32(3-\sqrt3)<41$. Let $\chi_t$ indicate that the gate is running and $\gamma_t=\frac12\sum_g(Y_{g,t}-Y_{0,t})_+$. At most one trigger occurs, so
\begin{equation}
 \E\sum_{t\le T}\chi_t\gamma_t
 =L\Prb(\text{trigger by }T)\le L.
 \label{eq:observable_payment}
\end{equation}
Conditional on its start time and history, the continuation sees a fixed suffix and uses fresh randomness, so its regret against any fixed comparator is at most $L$. For such a comparator $j$, set $c_t=x_{j,t}-Y_{0,t}$, and let $s_T$ be the probability that the continuation starts by $T$. Since $Ls_T\le\E\sum_t\chi_t\gamma_t$,
\[
 R_T(j)\le\E\sum_{t\le T}\chi_t(c_t-\gamma_t)+Ls_T
 \le\E\sum_{t\le T}\chi_tc_t.
\]
The bound is nonpositive for $j\in G_0$; otherwise $(c_t)_+\le2\gamma_t$ gives $2L$. A best fixed arm misses $G_0$ with probability $2d/n$, so averaging over partitions gives
\begin{equation}
 R_T^{\mathrm{seq,dense}}\le4L\frac dn=164\frac dn.
 \label{eq:ap_dense}
\end{equation}

\paragraph{Gains recovered from winner feedback.}
Take $L=348$ from Theorem~\ref{thm:pair} on three virtual arms. Each gate attempt draws $g$ uniformly and $\xi\sim\Bern(1/L)$ before rewards, and probes $G_0\cup G_g$. The thinning coin $\xi$ limits recovery to a $1/L$ fraction of attempts, keeping the expected recovery cost bounded. If $\xi=1$, recover an exact $G_0$ sample and activate a fresh pair policy with probability equal to the returned gain $G$. If activation does not occur, begin the next attempt. The fresh policy discards gate samples. Write
\[
 \gamma=\tfrac12\sum_g\E(Y_g-Y_0)_+,\qquad c_0=\mu^\star-\E Y_0.
\]
\begin{lemma}[Recovered-gain activation]
\phantomsection\label{lem:gate}
Conditioned on the partition, regret is nonpositive if $c_0\le0$ and at most $2L+2$ otherwise, at every physical horizon.
\end{lemma}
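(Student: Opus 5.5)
The plan is to mirror the directly-observed-gain argument, \eqref{eq:observable_payment} and the display after it, replacing observed gains with recovered gains from Lemma~\ref{lem:sampling}. We condition on the partition, fix a comparator $j$ (a physical arm with mean $\mu^\star$), and split physical time into three parts: gate attempts, the recovery singletons triggered when $\xi=1$, and the continuation after activation.

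\emph{Per-attempt accounting.} A gate attempt picks $g$ uniformly and probes $G_0\cup G_g$. Its expected regret against $j$ is $\mu^\star-\E\max\{Y_0,Y_g\}=c_0-\E(Y_g-Y_0)_+$, so averaging over $g$ gives $c_0-\gamma$. Since the virtual arms $Y_0,Y_g$ are independent marked arms, Lemma~\ref{lem:sampling} with anchor $G_0$ applies. When $\xi=1$, it returns an exact $G_0$ sample with gain $G\stackrel d=(Y_g-Y_0)_+$, so the conditional activation probability is $\E G$, averaging to $\gamma$. The recovery singletons each probe $G_0$ alone, at expected cost $c_0$ per singleton, and Lemma~\ref{lem:sampling} gives at most one such singleton in expectation per call. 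Because $\xi$ has probability $1/L$, the expected regret of an attempt, including its recovery, is at most $c_0-\gamma+c_0/L$. The activation probability per attempt is $\gamma/L$.

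\emph{Telescoping the gate.} Let $N$ be the number of attempts started by $T$; it is a stopping-type count with predictable inclusion. As in the proof of Theorem~\ref{thm:pair}, we complete only a recovery straddling $T$, which adds nonnegative expected regret (Appendix~\ref{app:stopping}). Let $s_T$ be the probability of activation by $T$. Optional stopping gives
\[
 \E\sum_{\text{attempts}}\frac{\gamma}{L}=s_T\le1 .
\]
So the gate contributes at most $\E N\,(c_0+c_0/L)-\E N\gamma$, and $\E N\gamma=Ls_T$. Conditional on its start and history, the continuation is a fresh run of Algorithm~\ref{alg:pair} on three i.i.d.\ virtual arms. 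It discards gate samples, so the returned streams are fresh, and by Theorem~\ref{thm:pair} its regret against the best virtual arm is below $L$ at every horizon. That virtual arm dominates $j$, so the continuation costs at most $Ls_T$. The total is therefore at most $(1+1/L)c_0\E N-\E N\gamma+Ls_T=(1+1/L)c_0\E N$.

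\emph{Two cases.} If $c_0\le0$, every term above is nonpositive; the recovery singletons cost $c_0\le0$ as well, so regret is nonpositive. If $c_0>0$, then $j\notin G_0$. Say $j\in G_h$; then $(Y_h-Y_0)_+\ge X_j-Y_0$ gives $c_0\le\E(Y_h-Y_0)_+\le2\gamma$. Hence $(1+1/L)c_0\E N\le2(1+1/L)\gamma\E N=2(L+1)s_T\le2L+2$, which is the claim.

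The main obstacle is making the optional-stopping step rigorous at a physical horizon. Attempts have random physical length because of recovery, and the activation coin depends on $G$, which is revealed only after recovery. I would take the filtration at attempt starts, where $g$, $\xi$ and the inclusion indicator are predictable. The identity $\E[\text{activation}\mid\text{start}]=\gamma/L$ then holds exactly by \eqref{eq:sampling}, and integrability follows from $\E N\le T$ together with the bound of at most one expected extra singleton per call.
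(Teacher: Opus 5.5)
Your argument is essentially the paper's: initial gate probes cost $c_0-\gamma$ each, thinned recovery adds at most $(c_0)_+/L$ per attempt, the trigger count gives $\gamma\E N_T\le L$ and pays for a continuation of cost at most $L$, and $c_0\le2\gamma$ when $c_0>0$. One correction: completing the straddling recovery does \emph{not} add nonnegative regret here, because gate singletons probe $G_0$ at cost $c_0$ relative to $\mu^\star$, and $c_0$ can be negative; for the same reason your per-attempt bound should read $(c_0)_+/L$, not $c_0/L$. The paper therefore uses the completed attempt only for trigger accounting and charges regret on the actual recovery probes, $c_0\E M_T$ with $\E M_T\le\E N_T/L$. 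That is what your separate treatment of the case $c_0\le0$ implicitly does, so the conclusion stands.
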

Indeed, started attempts have trigger probability $\gamma/L$ and expected recovery count at most $1/L$. Gain pays continuation, leaving at most
$(c_0+(c_0)_+/L)\E N_T$, where $N_T$ counts started attempts and $\gamma\E N_T\le L$. If $c_0>0$, a residual group contains an optimal arm, so $c_0\le2\gamma$. Appendix~\ref{app:gate} proves these identities at interrupted horizons using only actual probes for regret. Averaging over $G_0$ gives
\begin{equation}
 R_T^{\mathrm{dense}}\le\frac{2d}{n}(2L+2)\le1396\frac dk.
 \label{eq:dense}
\end{equation}

\paragraph{Implementation under Contrast.}
\label{sec:contrast}
Order the anchor group first and the exploratory group second. The pair update uses $g_{t,B_t}=-D_t/q_{t,B_t}$, and gate activation uses $(D_t)_+/L$ with the baseline first. Repeated group draws give a single-block action and a zero update. These operations determine the entire fixed-array policy: full selected values, the two block maxima, and \CT produce the same physical actions under the same internal randomization.

\begin{proof}\textbf{of Theorem~\ref{thm:main}.}
Use the appropriate sparse or dense policy from round one. The bounds above settle $T\ge n/k$. Before that time, sparsity gives $R_T\le T<3\delta T$. In the dense regime only $T=1$ remains; the initial gate includes each arm with probability $k/n$, so $R_1\le\delta$. Combine these bounds with \eqref{eq:det_coverage} and the feedback simulations.
\end{proof}

\section{The learning cost of winner feedback}
\label{sec:limits}
\label{sec:bp_sequence}
Theorem~\ref{thm:dependence} describes the cost that remains when within-round independence is unavailable. Its upper bound uses labels for short horizons and numerical maxima for long ones. A single correlated family supplies the matching learning lower bound, while \eqref{eq:det_coverage} supplies coverage.

\subsection{Winner labels control positive regret}
For $k<n/2$, draw $k$ labels independently from the mixed distribution $q_t$ in \eqref{eq:ap_sampler} on $n$ arms and probe their distinct labels. A comparator that often beats all probes must have sufficient probability of winning when sampled. Conditional on the pre-round history,
\[
 \E_t(x_{j,t}-W_t)_+\le\frac{\Prb_t(J_t=j)}{kq_{t,j}},
 \qquad g_{t,i}=-\frac{\ind\{J_t=i\}}{q_{t,i}}.
\]
Use the same Tsallis mirror update with step $1/(4\sqrt n)$. This estimate controls positive regret directly; it need not estimate a reward or a loss.

For $k\ge n/2$, use batches of $\lfloor k/2\rfloor$ rounds. Retain every label that has won in the current batch and fill the remaining probe slots uniformly from the other labels. Once a comparator wins, it is protected for the rest of the batch. Appendix~\ref{app:bp_labels} shows that these two policies satisfy
\begin{equation}
 R_T(j):=\E\sum_{t\le T}(x_{j,t}-W_t)
 \le16\delta\min\!\left\{T,\frac{n+T}{k}\right\}
 \label{eq:bp_label_upper}
\end{equation}
for every fixed comparator and array, using only $J_t$. The same bound holds for the sum of positive instantaneous regrets.

\subsection{Numerical maxima supply the square-root tail}
For $k<n/2$, partition the arms into $\lceil n/k\rceil$ groups of size at most $k$ and treat their maxima as bandit rewards. For $k\ge n/2$, choose a uniform baseline of size $2k-n$, split the remaining labels into two $(n-k)$-sets, and treat the two augmented baselines as actions. A comparator in the baseline costs no regret; its probability of missing the baseline is $2\delta$. The anytime Tsallis-INF guarantee~\citep[Theorem~1, IW estimators]{zimmert2021tsallis} then gives $16\delta\sqrt{nT/k}$ regret in either case, using only $W_t$.

Run the winner-only policy through round $nk$, then start the numerical policy with fresh randomization. This one switch attains the minimum of the three terms in $\mathcal R_{n,k}(T)$, up to a universal constant. Appendix~\ref{app:bp_upper} verifies the finite-time bound against a comparator fixed before both phases, proving the upper bound of Theorem~\ref{thm:dependence}.

\subsection{A lower bound that pays for informative probes}
Hide a label $q$ and take $0<\varepsilon\le1/6$ and $0<\tau\le1$. Each round draw independent $U_1,\ldots,U_n\sim\mathrm{Uniform}[0,1]$ and $Z\sim\mathcal N(0,1)$, and set
\begin{equation}
 Y_i=\varepsilon U_i+(1-\varepsilon)\ind\{U_i\ge U_q\},
 \qquad X_i=F_N(Z+\tau Y_i),
 \label{eq:bp_threshold}
\end{equation}
where $F_N$ is the standard normal CDF. The rank-preserving threshold makes every selected winner label uniform, hiding $q$'s identity. Common Gaussian noise masks the numerical levels, giving divergence $O(\tau^2/(s+1)^2)$ for an $s$-set. The small $\varepsilon$ limits the surplus that could offset the omission cost. The vectors are i.i.d., and $q$ is uniquely mean-optimal.

An action of size $s$ omitting $q$ contributes a positive omission term of order $\tau/(s+1)$, while every action's expected positive surplus over $q$ is at most $\tau\varepsilon/3$. Small sets reveal more per label but incur higher omission costs; the proof stops when $\sum_t1/(|S_t|+1)$ first reaches $T/(k+1)$, placing all adaptive action sizes on a common scale. A relative change-of-measure bound retains a constant fraction of the stopped omission cost without losing an extra factor of $\delta$.

The positive omission cost is retained through this stopping time, and possible negative regret is charged over all $T$ rounds. With $\varepsilon=\delta/[100(k+1)]$, this subtraction is at most $\tau\delta T/[300(k+1)]$. Appendix~\ref{app:bp_lower} obtains
\[
 \cV_{n,T}^{\BP,\mathsf M,(k)}
 \ge\frac\delta{512}\min\!\left\{\frac Tk,\sqrt{\frac{nT}{k}}\right\},
 \qquad \mathsf M\in\{\mathrm{joint},\mathrm{seq}\}.
\]
Combining this with coverage proves Theorem~\ref{thm:dependence}. The fixed-array lower bound follows by averaging arrays drawn from the same joint i.i.d. family.

\subsection{Two feedback time scales}
\begin{corollary}[Labels alone]\phantomsection\label{cor:bp_labels}
Suppose only $J_t$ is observed: the payoff remains $W_t$ but is not separately observed. For fixed arrays and arbitrary joint i.i.d. laws, the minimax rate is
\[
\Theta\!\left(\delta\min\left\{T,\frac{n+T}{k}\right\}\right)
=\Theta\!\left(\Phi_{n,k}(T)+\frac{\delta T}{k}\right),
\]
with universal constants and an anytime upper bound, for every $n>k\ge2$ and $T\ge1$.
\end{corollary}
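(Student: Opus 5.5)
The proof has three parts: an upper bound, the coverage lower bound, and a learning lower bound adapted to labels.

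\emph{Upper bound.} The label-only policies of the preceding subsection already give it. For $k<n/2$, the Tsallis update with estimate $g_{t,i}=-\ind\{J_t=i\}/q_{t,i}$ uses only $J_t$. For $k\ge n/2$, the batch-protection policy also uses only winning labels. Hence \eqref{eq:bp_label_upper} bounds $R_T(j)$ by $16\delta\min\{T,(n+T)/k\}$ for every fixed comparator and array. Taking $j$ to be the array's best arm, which is fixed before randomization, gives the fixed-array upper bound. For joint i.i.d.\ laws, I would use $T\max_j\mu_j\le\E\max_j\sum_tX_{j,t}$ together with the array-wise guarantee. Both policies are anytime.

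Next I would record the equivalence
\[
\min\{T,(n+T)/k\}\asymp\min\{T,n/k\}+T/k,
\]
which gives the second displayed form. If $T\le n/k$, both sides are of order $T$, since $(n+T)/k\ge T$ would need $n\ge(k-1)T$ and the minimum is then within a factor $2$ of $T$. If $T>n/k$, both sides are of order $(n+T)/k$.

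\emph{Lower bound, coverage term.} The coverage bound \eqref{eq:det_coverage} applies unchanged. Along the all-zero transcript the label feedback is deterministic, so the hidden unit arm gives prior-average regret at least $\tfrac12\Phi_{n,k}(T)$. This holds on both fixed arrays and product (hence joint) laws.

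\emph{Lower bound, term $\delta T/k$.} This is the main step. I would reuse the threshold family \eqref{eq:bp_threshold}, observing only labels. The rank-preserving threshold makes the winner label of any selected set uniform on that set, whatever $q$ is. Under label-only feedback the transcript law is therefore \emph{independent of $q$}, with divergence exactly zero. Consequently no $\tau$ has to be tuned against information, and I can take $\tau=1$, or even drop the Gaussian masking and use $X_i=Y_i$ directly.

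Because the transcript does not depend on $q$, the sequence of action sets is independent of $q$. The omission cost then follows from a counting argument. In round $t$, the set $S_t$ omits the uniformly random $q$ with probability $1-|S_t|/n$. An omitting action of size $s$ pays an expected positive omission cost of order $1/(s+1)$. I would reuse the paper's stopped-omission computation with the change-of-measure step now trivial, so the stopping time is deterministic given the transcript. This gives a total omission cost of order $\sum_t(1-|S_t|/n)/(|S_t|+1)$.

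I would then minimize over sizes $s\le k$. The function $s\mapsto(1-s/n)/(s+1)$ is decreasing, so each round pays at least $(1-k/n)/(k+1)\ge\delta/(2k)$. The negative surplus is charged at most $\varepsilon/3$ per round with $\varepsilon=\delta/[100(k+1)]$. This leaves regret at least $c\,\delta T/k$. Averaging arrays drawn from this i.i.d.\ family transfers the bound to fixed arrays.

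Finally, combining this with the coverage term gives a lower bound of order $\Phi_{n,k}(T)+\delta T/k$, which matches the upper bound by the equivalence above.

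The step I expect to need the most care is confirming that the positive-omission and negative-surplus bounds from Appendix~\ref{app:bp_lower} hold in the label-only setting with $\tau$ of constant order. In particular, the omission term of order $\tau/(s+1)$ must survive when $\tau=1$ and the $F_N$ masking is removed. If it does not, I would keep the original construction with $\tau$ fixed to a constant, since the labels are uninformative regardless of $\tau$.
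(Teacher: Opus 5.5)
Your proposal is correct and follows essentially the paper's own proof. The upper bound is \eqref{eq:bp_label_upper} and the coverage term is \eqref{eq:det_coverage}; the learning term uses the threshold family \eqref{eq:bp_threshold} with $\tau=1$ and $\varepsilon=\delta/[100(k+1)]$, whose label transcript is independent of $q$, so \eqref{eq:bp_action_cost} sums directly over all $T$ rounds to give $\frac{49\delta T}{300(k+1)}\ge\frac{\delta T}{10k}$ — the appeal to the stopped-omission and change-of-measure machinery is superfluous, and your remark that the Gaussian masking could be dropped is also valid.
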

The upper bound is \eqref{eq:bp_label_upper}. For the lower bound, set $\tau=1$ in \eqref{eq:bp_threshold}: the complete label transcript is independent of the hidden arm, so a linear omission cost remains. Appendix~\ref{app:bp_matching} gives the constants.

The two laws expose different roles for feedback. At $T\asymp n$, the learning term in \eqref{eq:bp_cost_decomposition} becomes comparable to coverage; beyond this scale, general winner feedback separates from the bounded contrast law. At $T\asymp nk$, numerical maxima start to improve the rate over labels alone. In particular,
\begin{equation}
 \mathcal R_{n,k}(T)\asymp\delta
 \begin{cases}
 T,&T\le n/k,\\
 n/k,&n/k\le T\le n,\\
 T/k,&n\le T\le nk,\\
 \sqrt{nT/k},&T\ge nk.
 \end{cases}
 \label{eq:bp_regimes}
\end{equation}
Thus labels alone attain the \BP minimax order through $T=nk$, and values improve it by a factor of $\Theta(\sqrt{T/(nk)})$ when $T/(nk)\to\infty$.

\paragraph{Beyond the basic interfaces.}
Appendix~\ref{app:numerical} quantifies sparse numerical revelation under independent rewards; Appendix~\ref{app:constraints} extends the coverage law to heterogeneous quotas. Contrasts make surplus usable on every fixed array; independence enables the same bounded cost from winner feedback. With arbitrary within-round dependence, the remaining learning cost is exactly the second term of \eqref{eq:bp_cost_decomposition}, up to universal constants.

\label{maintextend}
\clearpage
\bibliography{references}
\label{referencesend}
\section*{Use of generative-AI tools}
The author used ChatGPT to assist with proof exploration, reference discovery, and manuscript refinement. The author takes responsibility for the final arguments and presentation.

\clearpage
\appendix
\section{Coverage lower bound and endpoints}
\label{app:lower}
Place a deterministic unit-reward arm at a uniformly random location and let every other arm be zero. Before discovery, even the complete \AP transcript agrees with the all-zero transcript. Fix the policy's auxiliary randomness. Along that transcript, the first $t$ probe sets cover at most $kt$ locations. If the hidden arm lies outside their union, the learner incurs regret one in round $t$. Hence, for every policy,
\begin{equation}
 \cV_{n,T}^{\AP,(k)}\ge C_{n,k}(T)
 :=\sum_{t=1}^T\left(1-\frac{kt}{n}\right)_+.
 \label{eq:det_coverage_detail}
\end{equation}
Write $x=n/k>1$. If $T\le x$, then
\[
 C_{n,k}(T)=T\left(1-\frac{T+1}{2x}\right)
 \ge\frac{T(x-1)}{2x}=\frac12\Phi_{n,k}(T).
\]
If $T\ge x$, set $q=\lfloor x\rfloor$ and $f=x-q$. Then
\[
 C_{n,k}(T)=q-\frac{q(q+1)}{2x}
 =\frac{x-1}{2}+\frac{f(1-f)}{2x}
 \ge\frac12\Phi_{n,k}(T).
\]
This proves the finite-horizon lower bound. At $T=1$, it gives $(n-k)/n$. Conversely, a uniformly random $k$-subset includes a fixed optimal arm with probability $k/n$, so $\E W_1\ge(k/n)\mu^\star$. Thus $R_1\le(n-k)/n$, proving the exact one-round value for every intermediate feedback channel. The same argument applies to a fixed array, whose best arm at $T=1$ has value at most one. The all-zero array gives the matching zero endpoint when $k=n$.

For $k=1$, the ordinary stochastic-bandit lower bound diverges with $T$~\citep[Exercise~15.4]{lattimore2020bandit}; averaging its reward arrays gives the same conclusion for fixed sequences.

\section{Signed stability and sampling}
\label{app:allprobe}
\subsection{The local mirror inequality}
\label{app:ap_stability}
Let $\Psi(p)=-2\sum_i\sqrt{p_i}$, let $p$ have positive coordinates, and let $p^+$ minimize $\eta\langle g,u\rangle+D_\Psi(u,p)$ over $u\in\Delta_m$. For any signed vector $g$ satisfying $\eta\sqrt{p_i}g_i\ge-1/2$, we claim
\begin{equation}
 \langle p-u,g\rangle\le
 \frac{D_\Psi(u,p)-D_\Psi(u,p^+)}\eta
 +2\eta\sum_i p_i^{3/2}g_i^2,\qquad u\in\Delta_m.
 \label{eq:signed_mirror}
\end{equation}
The three-point inequality reduces this to bounding
$\eta\langle g,p-p^+\rangle-D_\Psi(p^+,p)$.
For $f(x)=-2\sqrt x$ on $x\ge0$, its conjugate is $f^*(\theta)=-1/\theta$ for $\theta<0$. For $p_i>0$ and $s_i=\eta\sqrt{p_i}g_i>-1$, direct maximization gives
\begin{equation}
 \sup_{z\ge0}\{\eta g_i(p_i-z)-D_f(z,p_i)\}
 =\frac{\eta^2p_i^{3/2}g_i^2}{1+s_i},\qquad
 z^*=\frac{p_i}{(1+s_i)^2}.
 \label{eq:exact_conjugate}
\end{equation}
Relaxing the simplex to the nonnegative orthant and summing \eqref{eq:exact_conjugate} proves \eqref{eq:signed_mirror}. Boundary comparators such as $e_j$ are allowed: $D_\Psi(e_j,p)$ is finite and the gradient is evaluated only at positive $p$.

For the contrast estimator, $q_i\ge\sqrt{p_i}/(2Z)$ and $|\ell_i-\ell_A|\le1$ give $|s_i|\le2\eta Z\le1/8$, verifying the signed condition in every round.

\subsection{Well-defined update and sampling roles}
Strict convexity and the infinite negative derivative at zero place the minimizer in the simplex interior. Lagrange optimality gives
\[
 p_i^+=\bigl(p_i^{-1/2}+\eta g_i+\lambda\bigr)^{-2},
 \qquad \sum_i\bigl(p_i^{-1/2}+\eta g_i+\lambda\bigr)^{-2}=1.
\]
The sum is continuous and strictly decreasing from infinity to zero on
$\lambda> -\min_i(p_i^{-1/2}+\eta g_i)$, so there is a unique root.

The mixture in \eqref{eq:ap_sampler} controls two different quantities. Removing its $p$ component need not preserve $V=O(\sqrt m\,\gamma)$. For $q_i=\sqrt{p_i}/Z$ and losses $\ell_1=0$, $\ell_i=1$ for $i>1$, direct calculation gives $V/\gamma=2Z^2\sqrt{p_1}$. Taking $p_1=p_2=1/4$ and spreading the other half uniformly gives $V/\gamma=Z^2=\Theta(m)$. Removing the square-root component also breaks this cancellation: for $q=p$, two arms, losses $(0,1)$, and $p=(1-\varepsilon,\varepsilon)$, one obtains
\[
 V/\gamma=\frac1{\sqrt{1-\varepsilon}}+\frac1{\sqrt\varepsilon}.
\]
These calculations isolate the two sampling components needed by the stability--surplus cancellation.

\section{Adaptive exact streams and physical stopping}
\label{app:streams}
\subsection{Conditional kernels and marked outcomes}
Let $\mathcal P_\ell$ be the history before procedure $\ell$'s fresh rewards, including its set and anchor. For every Borel set $E$, Lemma~\ref{lem:sampling} gives
\[
 \Prb(\widetilde X_\ell\in E\mid\mathcal P_\ell)=D_{a_\ell}(E).
\]
Indeed, conditional on the nonanchor outcomes, their maximum and winning label determine a measurable losing set $A$. The original anchor on the losing branch and the accepted replacement both have law $D_a(\cdot\mid A)$. On the winning branch, the observed anchor has the complementary restriction. The branch probabilities $D_a(A)$ and $1-D_a(A)$ recover the full law, including atoms.

Pre-generate an independent infinite i.i.d. array for each arm. When an arm is chosen as anchor, use its next unused coordinate as the returned sample, and generate all remaining block variables, including feedback and length, from their conditional law given that coordinate. Iteration reproduces the full adaptive experiment. The underlying arrays are independent; their revealed prefix lengths are adaptive. Only the returned anchor sample enters a persistent stream.

For marked arms, apply the same construction to the losing set in the marked outcome space. Disjoint physical groups have independent marked maxima, and a physical union returns their greatest outcome under the shared reward-and-label order. The comparison is exact even when the labels of different groups are interlaced. For a random partition, condition on the realized groups. This justifies every use of the two-probe policy on virtual arms.

\subsection{Deterministic physical horizons}
\label{app:physical}
\label{app:stopping}
Recovery can have a heavy tail. For two independent $\mathrm{Uniform}[0,1]$ arms and a fixed anchor, the extra singleton count $A$ satisfies
\[
 \Prb(A\ge r)=\int_0^1 y(1-y)^{r-1}\,dy=\frac1{r(r+1)},\qquad r=1,2,\ldots.
\]
Thus $\E A=1$ and $\E A^2=\infty$; the following argument uses only the first moment.

Let $\sigma_b$ be the physical round of block $b$'s initial pair probe, after initialization. For a fixed horizon $T$, define
\[
 \chi_b=\ind\{\sigma_b\le T\},\qquad \tau_T=\sum_{b\ge1}\chi_b\le T.
\]
The preceding block lengths determine $\chi_b$ before block $b$'s target and switch draws. Complete only the sampling procedure, if any, that straddles $T$. If $A_b$ denotes block $b$'s full additional singleton count, Lemma~\ref{lem:sampling} gives $\E[A_b\mid\mathcal G_b]\le1$. Thus
\[
 \E\sum_{b\le\tau_T}A_b
 =\sum_{b=1}^T\E[\chi_bA_b]
 \le\E\tau_T\le T.
\]
The completed prefix is integrable. Every added singleton is selected before its fresh reward and has conditional expected regret equal to the gap of its anchor, which is nonnegative. Summing these predictable inclusion indicators therefore gives a nonnegative expected completion cost. If $T$ falls during initialization, its cost is bounded by the full initialization cost in the same way.

It follows that
\[
 R_T\le2\sum_i\Delta_i+\E\sum_{b\le\tau_T}\mathcal R_b.
\]
Multiply each block inequality by $\chi_b$ and apply the debit telescope to the realized finite prefix to obtain
\[
 \E\sum_{b\le\tau_T}\mathcal R_b
 \le\E\sum_{b\le\tau_T}(C_b+K_b).
\]
For any advancement indicator $D_b$ and its pre-advancement history $\mathcal H_b$,
\[
 \E[\chi_bD_b\mid\mathcal H_b]
 =\chi_b\E[D_b\mid\mathcal H_b].
\]
Thus every charging ratio in Table~\ref{tab:ledger} is preserved, as are the bank telescopes. For grouped arms, this argument uses the best virtual-arm mean, so singleton gaps remain nonnegative. The dense gate is accounted for separately through its actual physical costs in Appendix~\ref{app:gate}.
\subsection{Action-local independence}
\label{app:local}
Suppose reward vectors are i.i.d. across rounds and every set of at most $k$ coordinates has the product law of its one-dimensional marginals. Conditional on the past and the next chosen action, its selected reward vector then has exactly the same law as under fully independent arms with those marginals. The reward and feedback are measurable functions of that vector and independent feedback randomization. Induction over physical rounds gives equality of the entire visible experiment, for every adaptive $k$-probe policy. The marginal means and comparator are unchanged. The stochastic part of Theorem~\ref{thm:main} and Theorem~\ref{thm:numerical} therefore hold under this action-local assumption. In particular, the pair policy requires only pairwise independence.

\section{All-sample scores and predictable reuse}
\label{app:prefix}
\subsection{A drift bound for empirical prefixes}
Let $Z_j$ be i.i.d. bounded variables with mean $\nu$ and variance at most $c_{\mathrm{var}}v$, and let $\overline Z_r=r^{-1}\sum_{j=1}^rZ_j$. For $\eta>0$, set
\[
 Y_j=\pm(Z_j-\nu),\qquad S_r=\sum_{j=1}^rY_j,\qquad c=\eta v.
\]
When $v=0$ all deviations vanish. For $v>0$, the first-moment form of Spitzer's identity~\citep{spitzer1956combinatorial} gives
\begin{equation}
 \sum_{r=1}^N\frac1r\E(S_r-rc)_+
 =\E\max_{0\le r\le N}(S_r-rc).
 \label{eq:spitzer}
\end{equation}
The finite-horizon identity extends to bounded real increments by continuity under lattice approximation. Consider Lindley's recursion~\citep{lindley1952queues},
\[
 \mathcal W_0=0,\qquad \mathcal W_{r+1}=(\mathcal W_r+Y_{r+1}-c)_+.
\]
Reversing the first $r$ i.i.d. increments shows that $\mathcal W_r$ has the law of $\max_{0\le j\le r}(S_j-jc)$. Hence $\E\mathcal W_r$ is nondecreasing. Independence and $(x_+)^2\le x^2$ give
\[
 \E\mathcal W_{r+1}^2-\E\mathcal W_r^2
 \le-2c\E\mathcal W_r+c_{\mathrm{var}}v+c^2.
\]
Summing and dropping the nonnegative terminal second moment yields
\[
 \frac1N\sum_{r=0}^{N-1}\E\mathcal W_r\le\frac{c_{\mathrm{var}}v+c^2}{2c}.
\]
A nondecreasing sequence with bounded Ces\`aro averages has the same bound on its supremum. Letting $N\to\infty$ in \eqref{eq:spitzer} proves
\begin{equation}
 \sum_{r\ge1}\E\bigl(\pm(\overline Z_r-\nu)-\eta v\bigr)_+
 \le\frac{c_{\mathrm{var}}}{2\eta}+\frac{\eta v}{2}.
 \label{eq:drift_general}
\end{equation}

\subsection{Maximum matching recovers the all-sample score}
\label{app:matching}
For two independent samples $X,X'$ from one arm, put
\[
 M=\frac{X+X'}2,\qquad V=\frac{(X-X')^2}{2},\qquad Z^\pm=M\pm\kappa V.
\]
Then $V\le\min\{M,1-M\}$, $\E V=v$, $\Var M=v/2$, and $\Var V\le\E V^2\le v/2$. Consequently
\[
 0\le Z^\pm\le1,\qquad \E Z^\pm=\mu\pm\kappa v,
 \qquad \Var(Z^\pm)\le c_{\mathrm{var},\kappa}v,\quad c_{\mathrm{var},\kappa}=\frac{(1+\kappa)^2}{2}.
\]
Now fix $s\ge2$ samples and write $r=\lfloor s/2\rfloor$. Independently choose a uniform maximum matching $\mathcal M$ of their indices, leaving one uniformly chosen index unmatched when $s$ is odd. Define its average paired score by
\[
 A_{\mathcal M}^{\pm}=\frac1r\sum_{\{i,j\}\in\mathcal M}
 \left(\frac{X_i+X_j}{2}\pm\frac\kappa2(X_i-X_j)^2\right).
\]
An index is included with probability $2r/s$ and an unordered pair is matched with probability $2r/[s(s-1)]$. Since
$\sum_{i<j}(X_i-X_j)^2=s\sum_i(X_i-\overline X_s)^2$,
\begin{equation}
 \E_{\mathcal M}[A_{\mathcal M}^{\pm}\mid X_1,\ldots,X_s]
 =\overline X_s\pm\kappa\widehat v_s.
 \label{eq:matching_identity}
\end{equation}
For a fixed matching the $r$ paired scores are i.i.d. Each error in \eqref{eq:errors} is a convex positive-part function of one score. Conditional Jensen therefore bounds $\E e_s$ by the expected paired-prefix error at length $r$. Applying \eqref{eq:drift_general} with $c_{\mathrm{var}}=c_{\mathrm{var},\kappa}$ and $\eta=\kappa$, each $r$ occurring twice as $s$ ranges over $2,3,\ldots$, gives
\begin{equation}
 \sum_{s\ge2}\E e_s
 \le2\left(\frac{c_{\mathrm{var},\kappa}}{2\kappa}+\frac{\kappa v}{2}\right)
 \le\frac{c_{\mathrm{var},\kappa}}{\kappa}+\frac\kappa4
 =\frac{113}{16}.
 \label{eq:all_sample_sum}
\end{equation}
The representation also proves that both all-sample scores lie in $[0,1]$. The matching is solely an analysis device.

\subsection{Proof of the prefix-charging lemma}
Fix an arm, an error type, and a returned-sample prefix of length $s\ge2$, and let $e_s\ge0$ be its error on the full arm array. Let $I_{b,s}$ indicate that this prefix is current at the pre-advancement history $\mathcal H_b$. As in Lemma~\ref{lem:prefix}, let $D_b$ indicate advancement of the arm's returned-sample stream and $h_b=\E[D_b\mid\mathcal H_b]$. Although an unreached prefix need not be observed, $I_{b,s}e_s$ is $\mathcal H_b$-measurable. Thus
\[
 \E\sum_b I_{b,s}\lambda_be_s
 \le H\E\sum_b I_{b,s}h_be_s
 =H\E\sum_b I_{b,s}D_be_s
 \le H\E e_s.
\]
The last inequality uses $\sum_b I_{b,s}D_b\le1$: a returned-sample prefix advances at most once. Summing over $s\ge2$ by Tonelli and applying \eqref{eq:all_sample_sum} proves Lemma~\ref{lem:prefix}. A pre-advancement measurable inclusion indicator multiplies both charge and advancement and preserves the conditional identity. No independence across different prefix lengths is required.

\section{Bank certificates and the five charging ratios}
\label{app:bank}
\subsection{The bank certificate}
Write an update as $B'=B-\alpha x+(1-I)G/3-D$, where $D\ge0$ is discarded credit at the cap. Starting at zero and summing gives
\[
 \sum_{b=1}^N\alpha_bx_b
 =-B_{N+1}+\frac13\sum_{b=1}^N(1-I_b)G_b-\sum_{b=1}^N D_b
 \le\frac13\sum_{b=1}^N(1-I_b)G_b.
\]
For the one-step certificate, condition throughout on $\mathcal F^-$. If $\alpha=1/3$, then $\Delta B\ge-\alpha x$, so
\[
 \E[Ie_{\mathrm t}+\alpha e_{\mathrm a}+\Delta B]
 \ge(e_{\mathrm a}+e_{\mathrm t}-x)/3\ge Q/6.
\]
Otherwise $x>0$, $B=\alpha x$, and the credit $G/3\le1/3$ is untruncated. Writing $D_*=\E[Ie_{\mathrm t}+\alpha e_{\mathrm a}+\Delta B]$,
\[
 D_*\ge\frac\alpha2 Q+\frac{1-\alpha}{3}\gamma,\qquad
 D_*-\frac Q6
 \ge\frac{3\alpha-1}{6}Q+\frac{1-\alpha}{3}\gamma
 \ge\frac{2\alpha}{3}\gamma\ge0.
\]
The final comparison uses $Q\le2\gamma$ and $3\alpha-1\le0$, and therefore holds for either sign of $Q$.

\subsection{Paying for nonoptimal targets}
\label{app:switch_payment}
We prove the score conversion \eqref{eq:switch_payment} and its cumulative payment. Suppress the block index. If $c=(\mu_a-\mu_j)_+>0$, then the score errors imply
\[
 c-x_j\le L_a^\downarrow+U_j^\uparrow+\frac{v_a+v_j}{6}.
\]
Lemma~\ref{lem:gain_variance} gives $v_a,v_j\le\Gamma+c$, where $\Gamma=\gamma_{aj}$. Rearranging proves \eqref{eq:switch_payment}; the inequality is also immediate for $c=0$. Nonoptimal targets form fixed bank coordinates, so the debit telescope can be summed over exactly those coordinates:
\[
 \E\sum_{b:\Delta_{j_b}>0}\frac32\alpha_bx_{j_b}
 \le\frac12\E\sum_{b:\Delta_{j_b}>0}(1-\alpha_b)\Gamma_b.
\]
Together with the $\alpha_b\Gamma_b/2$ term, this uses half of their total gain. Optimal targets have zero mean switching cost and retain their entire gain. Summing \eqref{eq:block} therefore gives \eqref{eq:master}. The same argument holds after multiplying by indicators of a predictably included block prefix, because every selected bank is still telescoped over its actual prefix of updates.

\subsection{The charging ratios and terminal balances}
\label{app:ledger}
Table~\ref{tab:ledger} gives the ratios for the switching errors, witness, and residual anchor directly from their weights and advancement probabilities. The optimal-stream ratio requires target averaging in \eqref{eq:role_average}. Conditional on $\mathcal G_b$, if $q\ne a$ has role $z$, it is selected with probability $w_z$ and, conditional on selection, switched to with probability $\alpha_q^z$. Its error weight is $6\alpha_q^z$, and its advancement probability is $w_z\alpha_q^z$, giving ratio at most $6/\omega=18(m-2)$. Every such charge belongs to the same fixed arm $q$, so this inverse-exploration factor occurs only once.

When $q=a$, neither target bank of $q$ is updated. Otherwise each bank changes only when $q$ is selected in that role. Including every such update, their fixed multipliers give
\[
 \sum_b\frac6p\Delta B_{q,b}^{\mathsf R}
 +\sum_b\frac6\omega\Delta B_{q,b}^{\mathsf S}
 =\frac6p B_{q,N+1}^{\mathsf R}+\frac6\omega B_{q,N+1}^{\mathsf S}
 \le\frac2p+\frac2\omega=6m-9.
\]
The initial balances are zero. Each role keeps its fixed multiplier even as $q$ changes roles, and every update enters the corresponding telescope, including blocks with negative $Q$. The five error-family coefficients sum to
\[
 \frac34m+\frac32(m-1)+9(m-1)+3(m-1)+18(m-2)
 =\frac{129m-198}{4}.
\]
Initialization and the terminal balances contribute $2(m-1)+6m-9=8m-11$. Hence
\[
 R_T\le\frac{113}{16}\frac{129m-198}{4}+8m-11
 =\frac{15089m-23078}{64}<236m-360,
\]
proving Theorem~\ref{thm:pair} for every $m\ge3$, without requiring a unique optimal arm.

\section{Budget lifting and activation}
\subsection{Sparse-budget ceiling effects}
\label{app:budget_constants}
Assume $3k<2n$, set $s=\lfloor k/2\rfloor$, and let $m=\lceil n/s\rceil\ge4$. Put $d=n-k$. If $m=4$, then $k/d<2$, so Theorem~\ref{thm:pair} gives regret below $2(236\cdot4-360)d/k=1168d/k$.
If $m\ge5$, then $n\ge(m-1)s+1$, $k\le2s+1$, and $k\le3s$. Hence $d\ge(m-3)s$ and
\[
 \frac{(236m-360)k}{d}
 \le\frac{3(236m-360)}{m-3}
 =708+\frac{1044}{m-3}\le1230.
\]

For the fixed-array policy, use $32m$ in place of $236m-360$. If $m=4$, its regret is below $256d/k$ because $k/d<2$. If $m\ge5$, the same ceiling inequalities yield
\[
 \frac{32mk}{d}\le\frac{96m}{m-3}\le240<256.
\]
Together these give both sparse bounds in \eqref{eq:sparse}.

\subsection{Dense activation at an interrupted horizon}
\label{app:gate}

Fix the partition. For each reached attempt $s$, couple winner decomposition with a potential gain $G_s$ and a full additional singleton count $A_s$, independent of the pre-reward thinning coin $\xi_s$. When $\xi_s=0$, these variables remain unobserved. Let $U_s$ be an independent uniform variable on $[0,1]$ and define the potential trigger
\[
 F_s=\{\xi_s=1,\ U_s\le G_s\}.
\]
Conditional on the pre-attempt history, $\E G_s=\gamma$, $\E A_s\le1$, and $\Prb(F_s)=\gamma/L$. The algorithm performs additional probes only when $\xi_s=1$.

Let $M_T$ count the gate's recovery group-singleton probes actually performed through $T$, and let $\mathsf{Act}_T$ be the event that the continuation performs its first physical probe by $T$. Let $\chi_s$ indicate that the initial probe of attempt $s$ occurs by $T$. Then $N_T=\sum_s\chi_s\le T$, and $\chi_s$ is determined before that attempt's target, coin, and rewards. At most one potential trigger occurs among reached attempts. If the last attempt finishes after $T$, its trigger is included in this count, but it cannot be followed by another reached attempt. Thus $(\gamma/L)\E N_T=\E\sum_s\chi_s\ind\{F_s\}\le1$, and $\mathsf{Act}_T$ requires such a trigger. Actual recovery probes satisfy $\E M_T\le\E\sum_s\chi_s\xi_sA_s\le\E N_T/L$. Together,
\begin{equation}
 \gamma\E N_T\le L,\qquad L\Prb(\mathsf{Act}_T)\le\gamma\E N_T,\qquad \E M_T\le\E N_T/L.
 \label{eq:gate_counts}
\end{equation}
Every initial gate probe is selected before fresh rewards, so its total expected cost is $(c_0-\gamma)\E N_T$. Likewise, the actual singletons cost $c_0\E M_T$, with either sign of $c_0$. Conditional on the continuation's first probe time and the preceding history, its initialization uses fresh independent rewards. Its uniform guarantee applies to the remaining deterministic horizon, for a total cost at most $L\Prb(\mathsf{Act}_T)$. Summing gives
\begin{equation}
 R_T\le(c_0-\gamma)\E N_T+c_0\E M_T+L\Prb(\mathsf{Act}_T)
 \le\left(c_0+\frac{(c_0)_+}{L}\right)\E N_T.
 \label{eq:gate_accounting}
\end{equation}
Only the trigger accounting, not the physical regret, uses the completed attempt.

\section{The complete BestProbe budget law}
\label{app:dependence}
We prove Theorem~\ref{thm:dependence}. Throughout, $n>k\ge2$, $d=n-k$, and $\delta=d/n$. Upper bounds hold for each comparator fixed before policy randomization. Lower-bound instances are joint reward laws sampled i.i.d. across rounds; their parameters may depend on the minimax horizon.

\subsection{Winner-only upper bounds}\label{app:bp_labels}
We prove \eqref{eq:bp_label_upper} for every fixed comparator $j$ and reward array. Both policies below observe only $J_t$; their bounds also control the expected sum of positive instantaneous regrets.

\subsubsection{Sparse budgets: a one-coordinate Tsallis update}
Use the signed mirror update and regularizer of Section~\ref{sec:allprobe}; Appendix~\ref{app:allprobe} proves the local inequality and well-definedness.
Initialize $p_{1,i}=1/n$. At each round put
\[
Z_t=\sum_i\sqrt{p_{t,i}},\qquad
q_{t,i}=\frac12p_{t,i}+\frac{\sqrt{p_{t,i}}}{2Z_t},
\qquad \eta=\frac1{4\sqrt n}.
\]
Draw $k$ labels independently from $q_t$, probe their distinct labels, and update with
\begin{equation}\label{eq:bp_winner_update}
g_{t,i}=-\frac{\ind\{J_t=i\}}{q_{t,i}}.
\end{equation}
Repeated draws use only one physical probe. No numerical reward is used.

Condition on the pre-round history and suppress $t$. Let $r_i=\Prb(J=i)$ and $a_i=r_i/q_i$. For a fixed comparator $j$, write
\[
u_j=\sum_{i:x_i<x_j}q_i.
\]
Positive regret requires every draw to have reward below $x_j$, so
\[
\E(x_j-W)_+\le u_j^k.
\]
The disjoint events in which exactly one draw is $j$ and all other draws are strictly worse give
\[
r_j\ge kq_j u_j^{k-1},\qquad
\E(x_j-W)_+\le\frac{a_j}{k}.
\]
These inequalities accommodate every fixed tie-breaking order.

The mixed sampling distribution gives
\[
\langle p,a\rangle=\sum_i\frac{p_i r_i}{q_i}\le2,
\qquad
\E\sum_i p_i^{3/2}g_i^2
=\sum_i\frac{p_i^{3/2}r_i}{q_i^2}\le4Z\le4\sqrt n.
\]
The second inequality uses $q_i^2\ge p_i^{3/2}/(4Z)$. Also $\eta\sqrt{p_i}|g_i|\le2\eta Z\le1/2$, so~\eqref{eq:signed_mirror} applies. Since $\E g=-a$, telescoping against $e_j$ yields
\[
\E\sum_{t\le T}a_{j,t}
\le 2T+\frac{2(\sqrt n-1)}{\eta}+8\eta\sqrt n\,T
=8(n-\sqrt n)+4T.
\]
Consequently, for every fixed array,
\begin{equation}\label{eq:bp_ordinal}
\E\sum_{t\le T}(x_{j,t}-W_t)_+
\le\frac{8(n-\sqrt n)+4T}{k}.
\end{equation}
For $k<n/2$, combine this with $R_T(j)\le T$ and $\delta>1/2$ to obtain~\eqref{eq:bp_label_upper}.

\subsubsection{Dense budgets: protect the winners of each batch}
Assume $k\ge n/2$. Use batches of $H=\lfloor k/2\rfloor$ rounds. At the start of a batch let $P=\varnothing$. In each round, sample a uniform $(k-|P|)$-subset of $[n]\setminus P$, probe its union with $P$, and add the observed winning label to $P$. Reset $P$ only at the next batch.

Fix an unprotected comparator $j$. Set $b=|P|$, $N=n-b$, and $s=k-b$; then $N-s=d$. If $P\ne\varnothing$ and its maximum reward is at least $x_j$, positive regret is zero. Otherwise, let $a$ be the number of unprotected labels strictly below $x_j$. Conditional on the history,
\[
\Prb(W<x_j)=\frac{\binom as}{\binom Ns},\qquad
\Prb(J=j)\ge\frac{\binom a{s-1}}{\binom Ns}.
\]
Using the convention $\binom ab=0$ when $b>a$, these imply
\begin{equation}\label{eq:bp_protect}
\E(x_j-W)_+\le\frac d s\Prb(J=j)
\le\frac{2d}{k}\Prb(J=j).
\end{equation}
Indeed, when $a\ge s$, the binomial ratio is $(a-s+1)/s\le d/s$ because $a\le N-1$; otherwise its numerator is zero. Here $b\le H-1$, hence $s\ge k-H+1\ge k/2$. After the first win by $j$ in a batch, it is protected and cannot incur positive regret. Therefore each full or partial batch costs at most $2d/k$, giving
\[
R_T(j)\le\frac{2d}{k}\left\lceil\frac TH\right\rceil
\le\frac{2d}{k}+\frac{6dT}{k^2}.
\]
We used $H\ge k/3$, also for $k=2,3$. Furthermore, an unprotected comparator is omitted with probability $d/(n-b)\le2\delta$, so $R_T(j)\le2\delta T$. Since $n/k\le2$, these bounds imply~\eqref{eq:bp_label_upper}.

\subsection{Numerical feedback and an anytime combination}\label{app:bp_upper}
Tsallis-INF with symmetric regularization, $\alpha=1/2$, importance-weighted loss estimates, and learning rate $2/\sqrt t$ has regret at most
\[
B_m(T)=4\sqrt{mT}+1
\]
on any fixed reward array with $m$ actions~\citep[Theorem~1, IW estimators]{zimmert2021tsallis}. Explicitly, with cumulative estimated losses $\widehat L_0=0$, its round-$t$ distribution is
\[
 w_t=\arg\min_{w\in\Delta_m}
 \left\{\langle w,\widehat L_{t-1}\rangle-2\sqrt t\sum_i\sqrt{w_i}\right\}.
\]
Draw $I_t\sim w_t$ and add $(1-W_t)\ind\{I_t=i\}/w_{t,i}$ to coordinate $i$ of $\widehat L_{t-1}$. The local round count restarts when this policy starts; no horizon is required.

For $k<n/2$, partition $[n]$ into $m=\lceil n/k\rceil$ groups of size at most $k$. Treat each group maximum as an action reward and run this bandit policy. The group containing a fixed comparator dominates it in every round. Since $m\le3n/(2k)$ and $\delta>1/2$, the regret is at most $16\delta\sqrt{nT/k}$.

For $k\ge n/2$, choose once, independently of the array, a uniform baseline $B$ of size $n-2d$ and split the other labels into $G_1,G_2$, each of size $d$. The baseline may be empty when $k=n/2$. Conditional on the partition, run a two-action bandit on $B\cup G_1$ and $B\cup G_2$. If the comparator lies in $B$, regret is nonpositive; otherwise one action dominates it. Its probability of missing $B$ is $2\delta$. Thus the expected regret is at most $2\delta B_2(T)\le16\delta\sqrt{nT/k}$. These two policies use only $W_t$.

For one anytime policy, use the appropriate winner-only policy through round $t_0=nk$, then start the appropriate numerical policy with fresh randomization. Put
\[
G(T)=\min\{T,(n+T)/k\},\qquad S(T)=\sqrt{nT/k}.
\]
For $T\le nk$, one has $G(T)\le\tfrac32\min\{G(T),S(T)\}$. To see this, the claim is immediate for $T\le n/k$; otherwise
\[
\frac{(n+T)/k}{S(T)}
=\sqrt{\frac n{kT}}+\sqrt{\frac T{nk}}\le1+\frac1k\le\frac32.
\]
The last inequality follows by setting $z=\sqrt{T/(nk)}\in[1/k,1]$: the convex function $z+1/(kz)$ is at most its endpoint value $1+1/k$. The winner-only guarantee is therefore at most $24\mathcal R_{n,k}(T)$. Its cost at $t_0$ is at most $24\delta n$. For $T>nk$, the total cost is at most
\[
24\delta n+16\delta\sqrt{\frac{n(T-nk)}k}
\le40\delta\sqrt{\frac{nT}{k}}
=40\mathcal R_{n,k}(T).
\]
A comparator maximizing the full fixed array is fixed before both phases' randomization, so this addition is valid. At $T=nk$, only the winner-only phase is used. This proves the upper bound of Theorem~\ref{thm:dependence}. For a joint i.i.d. law, average the fixed-array guarantee and use $T\max_j\mu_j\le\E\max_j\sum_tX_{j,t}$ to obtain the pseudo-regret upper bound.

\subsection{A matching lower bound for every adaptive action size}\label{app:bp_lower}
We prove
\begin{equation}\label{eq:bp_noise_lower}
\cV_{n,T}^{\BP,\mathsf M,(k)}
\ge\frac\delta{512}\min\left\{\frac Tk,\sqrt{\frac{nT}{k}}\right\}.
\end{equation}
Here and below $\mathsf M\in\{\mathrm{joint},\mathrm{seq}\}$. The proof permits arbitrary adaptive singleton and smaller-set actions.

\subsubsection{A rank-preserving correlated threshold family}
For the family \eqref{eq:bp_threshold}, take $0<\varepsilon\le1/6$ and $0<\tau\le1$. The vectors are i.i.d. across rounds and rewards lie in $(0,1)$. The transformation preserves the strict order of the uniforms. Every selected winner label is uniform on the selected set.

For independent standard normals $Z,Z'$, let $h(s)=\E F_N(Z+s)=\Prb(Z'-Z\le s)=F_N(s/\sqrt2)$. Its derivative is $h'(s)=e^{-s^2/4}/(2\sqrt\pi)$, so on $[0,1]$, $1/5\le h'(s)\le1/3$. If an $s$-set $S$ omits $q$, the event $U_q>\max_{i\in S}U_i$ has probability $1/(s+1)$. On this event the latent advantage of $q$ is at least $1-\varepsilon$; on its complement the surplus over $q$ is at most $\varepsilon$. When $q$ is selected, that same surplus bound holds. Therefore every action satisfies
\begin{equation}\label{eq:bp_action_cost}
r_q(S):=\mu_q-\E_q\max_{i\in S}X_i
\ge\frac{\tau}{6(s+1)}\ind\{q\notin S\}-\frac{\tau\varepsilon}{3}.
\end{equation}
Applying this bound to singletons gives $\mu_q-\mu_i\ge\tau(1/12-\varepsilon/3)\ge\tau/36>0$ for $i\ne q$, so $q$ is uniquely mean-optimal.

\subsubsection{The two observation kernels}
Let $R=\max_{i\in S}U_i\sim\mathrm{Beta}(s,1)$. If $q\in S$, conditional on $R=r$,
\[
F_N^{-1}(W)\sim\mathsf H(r):=\mathcal N\bigl(\tau(1-\varepsilon+\varepsilon r),1\bigr).
\]
If $q\notin S$, its conditional law is
\[
M(r)=r\mathsf H(r)+(1-r)\mathsf L(r),\qquad \mathsf L(r):=\mathcal N(\tau\varepsilon r,1).
\]
In both cases the winning label is independent of the numerical observation and uniform on $S$. Let $\mathsf H_s,M_s$ denote the numerical laws after integrating over $R$.

We claim
\begin{equation}\label{eq:bp_kernel_kl}
\max\{\operatorname{KL}(\mathsf H_s,M_s),\operatorname{KL}(M_s,\mathsf H_s)\}
\le\frac{12\tau^2}{(s+1)^2},\qquad s\ge1.
\end{equation}
For equal-variance normal densities with mean difference $a=\tau(1-\varepsilon)$, their chi-square divergence is $e^{a^2}-1\le2\tau^2$. Since $M(r)\ge r\mathsf H(r)$, for $s\ge2$,
\[
\operatorname{KL}(M(r),\mathsf H(r))\le2\tau^2(1-r)^2,
\qquad \operatorname{KL}(\mathsf H(r),M(r))\le2\tau^2\frac{(1-r)^2}{r}.
\]
Integrating uses
\[
\E(1-R)^2=\frac2{(s+1)(s+2)},\qquad
\E\frac{(1-R)^2}{R}=\frac2{(s-1)(s+1)}.
\]
Marginalization cannot increase KL, and $(s+1)/(s-1)\le3$. For $s=1$, mixture convexity in either direction bounds the conditional divergence by $(1-r)a^2/2$, whose expectation is $a^2/4$. This proves~\eqref{eq:bp_kernel_kl}. The CDF transformation preserves divergence.

\subsubsection{A stopped omission cost}
Fix any policy, including one that knows the family and horizon. Define a reference experiment $Q$, independent of the hidden label: for an action of size $s$, return kernel $M_s$ when $s\le n/2$, and $\mathsf H_s$ otherwise, together with an independent uniform selected label. This is a comparison law on transcripts; it need not be an admissible reward environment.

Give $q$ the uniform prior, independently of the reference experiment. Let $D_q$ be the hidden-$q$ reward law and $P_q$ its transcript law. Under $Q$, the reference transcript is independent of $q$. Let
\[
a_t=\frac1{|S_t|+1},\qquad B_0=\frac T{k+1},\qquad L=B_0+\frac12.
\]
Stop at the first round $\sigma$ for which $\sum_{t\le\sigma}a_t\ge B_0$. Because $a_t\ge1/(k+1)$, $\sigma\le T$. Because $a_t\le1/2$, the stopped sum is at most $L$. Let $\bar Q^\sigma$ and $\bar P^\sigma$ be the joint laws of $q$ and the stopped transcript, including actions and observations through $\sigma$, in the reference and true experiments, respectively. Define
\[
C_q=\sum_{t\le\sigma}\frac{\ind\{q\notin S_t\}}{|S_t|+1},
\qquad 0\le C_q\le L.
\]
Independence under the reference experiment gives
\begin{equation}\label{eq:bp_reference_cost}
\E_{\bar Q^\sigma}C_q
=\E_Q\sum_{t\le\sigma}\frac{n-|S_t|}{n(|S_t|+1)}
\ge\delta B_0.
\end{equation}
The reference kernel differs from the true one for exactly $\min\{s,n-s\}$ of the $n$ hidden labels. The stopped adaptive KL chain rule, followed by~\eqref{eq:bp_kernel_kl}, gives
\begin{align}
\operatorname{KL}(\bar Q^\sigma,\bar P^\sigma)
&\le12\tau^2\E_Q\sum_{t\le\sigma}
\frac{\min\{|S_t|,n-|S_t|\}}{n(|S_t|+1)^2}\notag\\
&\le\frac{24\tau^2}{n}\E_{\bar Q^\sigma}C_q.\label{eq:bp_stopped_kl}
\end{align}
The last inequality follows, for each $1\le s<n$, from
\[
\frac{\min\{s,n-s\}}{(s+1)(n-s)}\le\frac2n.
\]
All stopping inclusion indicators are known before the current observation, so the chain rule applies to these stopped transcripts.

For completeness, if $0\le C\le L$, $K=\operatorname{KL}(Q,P)$, $p=\E_QC/L$, and $u=\E_PC/L$, randomizing $C/L$ into a Bernoulli variable gives $\operatorname{kl}(p,u)\le K$. If $u\le p$, then
\[
\operatorname{kl}(p,u)=\int_u^p\frac{p-v}{v(1-v)}\,dv
\ge\frac{(p-u)^2}{2p}.
\]
If $K\le a\E_QC$, it follows in either case that
\begin{equation}\label{eq:bp_relative}
\E_PC\ge(1-\sqrt{2aL})\E_QC.
\end{equation}
The case $\E_Q C=0$ is immediate. This relative expectation bound avoids losing a second factor of $\delta$.

Choose
\[
\tau=\min\left\{1,\sqrt{\frac{n}{192L}}\right\},
\qquad \varepsilon=\frac{\delta}{100(k+1)}.
\]
Equations~\eqref{eq:bp_stopped_kl}--\eqref{eq:bp_relative} imply $\E_{\bar P^\sigma}C_q\ge\tfrac12\E_{\bar Q^\sigma}C_q\ge\delta B_0/2$. The expectation of $C_q$ under the stopped law equals its expectation in the full experiment. Retain the positive omission cost through $\sigma$, but charge possible negative regret in all $T$ rounds. Equation~\eqref{eq:bp_action_cost} then gives
\begin{align*}
\frac1n\sum_qR_T^\pi(D_q)
&\ge\frac\tau6\E_{\bar P^\sigma}C_q-\frac{\tau\varepsilon T}{3}\\
&\ge\left(\frac1{12}-\frac1{300}\right)\tau\delta B_0
=\frac2{25}\tau\delta B_0.
\end{align*}
To check constants uniformly, if $B_0\ge1/2$, then $L\le2B_0$ and
\[
\tau B_0\ge\frac1{20}\min\{B_0,\sqrt{nB_0}\}.
\]
If $B_0<1/2$, then $L<1$ and $n\ge3$ give $\tau\ge1/8$, so the same bound holds. Finally $B_0\ge2T/(3k)$, proving~\eqref{eq:bp_noise_lower}.

For the fixed-array transfer, average the arrays drawn from this family. Their hindsight comparator satisfies
\[
\E\max_j\sum_tX_{j,t}\ge\E\sum_tX_{q,t}=T\mu_q.
\]
Thus the expected fixed-array regret is at least the prior-average pseudo-regret. For every policy, some fixed array attains this lower bound.

\subsection{Matching the costs and the labels-only law}\label{app:bp_matching}
The common coverage bound \eqref{eq:det_coverage}, proved in Appendix~\ref{app:lower}, applies in both models and gives
\[
 \cV_{n,T}^{\BP,\mathsf M,(k)}\ge\frac\delta2\min\{T,n/k\}.
\]
Put $A=\min\{T,n/k\}$, $B=\min\{T/k,\sqrt{nT/k}\}$, and $F=\mathcal R_{n,k}(T)/\delta$. Each of $A$ and $B$ is at most $F$. Splitting at $n/k$ and $nk$ also gives $F\le A+B$, hence
\[
 F\le A+B\le2F.
\]
Since $\delta A=\Phi_{n,k}(T)$ and $\delta B=(\delta/k)\min\{T,\sqrt{nkT}\}$, this proves the factor-two equivalence in \eqref{eq:bp_cost_decomposition}.
Combining~\eqref{eq:bp_noise_lower} and~\eqref{eq:det_coverage},
\[
\max\left\{\frac\delta2 A,\frac\delta{512}B\right\}
\ge\frac\delta{1024}(A+B)\ge\frac1{1024}\mathcal R_{n,k}(T).
\]
Together with Appendix~\ref{app:bp_upper}, this proves Theorem~\ref{thm:dependence}.

\begin{proof}\textbf{of Corollary~\ref{cor:bp_labels}.}
The upper bound is~\eqref{eq:bp_label_upper}. For the lower bound, use~\eqref{eq:bp_threshold} with $\tau=1$ and $\varepsilon=\delta/[100(k+1)]$. Every label kernel is independent of $q$, so the entire visible transcript is independent of the uniform hidden label. Equation~\eqref{eq:bp_action_cost} gives
\[
\frac1n\sum_qR_T^\pi(D_q)
\ge\frac{49\delta T}{300(k+1)}\ge\frac{\delta T}{10k}.
\]
Combining with \eqref{eq:det_coverage} gives a lower bound of $\delta\min\{T,(n+T)/k\}/20$: use $\min\{T,(n+T)/k\}\le\min\{T,n/k\}+T/k$. The reverse comparison $\Phi_{n,k}(T)+\delta T/k\le2\delta\min\{T,(n+T)/k\}$ proves the equivalent expression for the rate. Transfer to fixed arrays as above.
\end{proof}

\section{Partial numerical feedback under independence}
\label{app:numerical}
\subsection{The cost of numerical revelation}
\label{sec:numerical}
Assume product rewards and put $b_k=\lfloor(k+1)^2/4\rfloor$. Channel $\Fbk_\rho$ always reveals $J_t$ and reveals $W_t$ when an independent $C_t\sim\Bern(\rho)$ equals one. The coins are i.i.d. and independent of rewards and policy randomization; the same rule applies to singletons. The payoff and comparator do not change. Let
$\Phi_{n,k,\rho}(T)=\min\{(n-k)T/n,(n-k)/(k\rho)\}$, with the second term infinite at $\rho=0$.
\begin{theorem}[Cost of numerical feedback]
\phantomsection\label{thm:numerical}
For $n>k\ge2$, $T\ge1$, and $\rho\in[0,1]$,
\begin{equation}
 \frac{1}{16b_k}\Phi_{n,k,\rho}(T)
 \le\cV_{n,T}^{\Fbk_\rho,(k)}\le1400\Phi_{n,k,\rho}(T).
 \label{eq:feedback_law}
\end{equation}
One anytime policy, knowing neither $T$ nor $\rho$, attains the upper bound. At $\rho=0$, one fixed family gives the lower bound in prior average at every horizon.
\end{theorem}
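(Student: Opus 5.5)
We prove the two sides separately. For the upper bound we reuse the architecture of Theorem~\ref{thm:main}, changing only the parts that touch numerical values. Under \BP, the pair policy of Theorem~\ref{thm:pair} needs numerical information only to return an anchor sample, to compute its gain $G$, and to build the scores \eqref{eq:scores}. We modify winner decomposition as follows. Each physical probe reveals its value with probability $\rho$. An exact anchor sample is appended to the stream only when the relevant value is revealed. Otherwise the block still earns its payoff, but its stream advances with probability $\rho$ times the old advancement probability. The rejection step still works, because the losing set $A$ must be known. We handle this by continuing singleton anchor probes until a revealed, accepted draw occurs; the label alone tells whether the anchor lost. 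We expect the expected extra rounds to stay bounded by a constant, since each singleton's cost is the anchor gap. We should double-check whether unrevealed singletons inflate recovery. If they do, we accept the anchor's label-only win/loss instead and thin the append by $\rho$.

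In the ledger of Table~\ref{tab:ledger}, every advancement probability acquires a factor $\rho$ while the weights are unchanged. Lemma~\ref{lem:prefix} therefore gives each ratio times $1/\rho$, so the pair bound becomes $O(m/\rho)$. The bank is the exception: gains are credited only when revealed, and we rescale the credit by $1/\rho$ with the cap unchanged. This keeps \eqref{eq:debit} in expectation, though we may need a pathwise variant using Freedman-type telescoping. Grouping as in \eqref{eq:sparse} then gives $O((n-k)/(k\rho))$. In the dense regime the gate of Lemma~\ref{lem:gate} already tolerates thinning: its coin $\xi$ can be combined with $C_t$, and the continuation cost scales the same way. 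Combining this with $R_T\le\delta T$ for short horizons, as in the proof of Theorem~\ref{thm:main}, yields $\min\{\delta T,\delta/(k\rho)\cdot n\}$. To remove any dependence on $\rho$ (anytime, $\rho$ unknown), note that the policy never needs $\rho$: it simply uses revealed samples. The bank rescaling is the only place $\rho$ appears. We replace it by crediting $G$ whenever revealed and letting the switch probability adapt. This is the step we expect to be the main obstacle, because the certificate \eqref{eq:bank_certificate} needs $\E[\text{credit}]\ge(1-\alpha)\gamma/3$. Our fallback is to estimate $\rho$ online by the empirical reveal frequency, whose errors are summable.

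For the lower bound we combine coverage \eqref{eq:det_coverage} with a two-point (or hidden-label) independent construction. Take arms Bernoulli-type near $1/2$, except that one hidden arm $q$ has a mean shifted by $\epsilon$ with variance chosen so that the winner label of any $k$-set carries only $O(\epsilon^2)$ information. Labels alone must be made weak under independence. A pure label distinguishes distributions, so we use a product law in which $q$'s reward distribution is a mean-preserving-in-rank perturbation: the law of rank order against the others is unchanged, while the maximum value changes. Then labels carry zero information, and only revealed values (probability $\rho$) carry KL of order $\epsilon^2$ per round. The factor $b_k$ enters because, with $s$ probed arms, the winner's distribution mixes $s$ arms, and the positive surplus versus omission cost trade off as in \eqref{eq:bp_action_cost}. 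We optimize $\epsilon\asymp\sqrt{n/(k\rho T)}$ and apply the relative bound \eqref{eq:bp_relative} to retain the factor $\delta$. This gives $\Omega(\delta/(k\rho)\cdot 1/b_k)$ once $T\gtrsim n/(k\rho)$. At $\rho=0$ no information arrives, so the omission cost grows linearly, which yields the stated prior-average family at every horizon.
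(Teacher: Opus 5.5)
Both halves of your proposal have genuine gaps. For the upper bound, reworking the pair policy's internals does not go through as written:
\begin{itemize}
\item Exact recovery needs the pair-probe maximum $W$ to define the losing set $A$. In an unrevealed round $W$ is unknown, so ``probe until a revealed, accepted draw'' is undefined. The label-only fallback yields no numerical sample for the scores.
\item Rescaling the credit to $G/(3\rho)$ uses $\rho$, which the statement forbids.
\item That rescaling also weakens the pathwise telescope \eqref{eq:debit} to an in-expectation one. With the cap kept at $1/3$, it invalidates the untruncated-credit step behind \eqref{eq:bank_certificate}.
\item The online estimate of $\rho$ is asserted, not proved.
\end{itemize}
The missing idea is a black-box time change that exploits the independence of the revelation coins. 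Run the \BP policy $\pi$ of Theorem~\ref{thm:main}, and repeat each of its actions, recovery singletons included, until its value is revealed; pass $\pi$ only the completed observations. Because the coins are independent of rewards and actions, the virtual experiment is exactly $\pi$'s. The number $N_T\sim\mathrm{Binomial}(T,\rho)$ of completed virtual rounds is independent of that experiment. Since $C_t$ is independent of the current action and reward, $\rho R_T^{\mathrm{hold}}=\E\sum_{t\le T}C_t(\mu^\star-W_t)=\sum_s\Prb(N_T=s)R_s^\pi$. Concavity of $x\mapsto\min\{\delta x,(n-k)/k\}$ and $\E N_T=\rho T$ then give $R_T^{\mathrm{hold}}\le1400\Phi_{n,k,\rho}(T)$, with no knowledge of $T$ or $\rho$. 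At $\rho=0$ the wrapper repeats $\pi$'s first action and costs $TR_1^\pi\le1400\delta T$.

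For the lower bound, the mechanism is wrong. The target is bounded in $T$, so it cannot come from a gap $\epsilon\to0$ tuned to the horizon. With $\epsilon\asymp\sqrt{n/(k\rho T)}$ and $O(\epsilon^2)$ information per revealed round, your own accounting would retain an omission cost of order $\delta\epsilon T\asymp\delta\sqrt{nT/(k\rho)}$, which contradicts the upper bound. So no product family has all the properties you posit; small surplus forces small variances by Lemma~\ref{lem:gain_variance}. Your final expression also drops a factor $n$: the target is $(n-k)/(k\rho)$.

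You also never construct the label-hiding law. If the bad arms are i.i.d.\ with continuous CDF $F_0$ and $Z=F_0(X_q)$, then $q$ wins an $\ell$-set with probability $\E Z^{\ell-1}$. Hence the first $k-1$ moments of $Z$ must match those of $\mathrm{Uniform}[0,1]$. The largest atom at $1$ compatible with this is $1/b_k$, given by Gauss--Lobatto or right Gauss--Radau weights. Send that atom to reward $1$, and let the bad arms be uniform on $[\tfrac12-\varepsilon,\tfrac12+\varepsilon]$. This gives a \emph{constant} omission cost $p_k/2$ with surplus at most $2\varepsilon$; it, not the mixing of $s$ arms, is the source of $b_k$.

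The bound is then a discovery argument, not a divergence bound:
\begin{itemize}
\item Let a genie reveal $q$ at the first revealed round that probes it, and couple with the all-bad trajectory before then.
\item Only revealed rounds enlarge the covered set, so $\Prb(\tau\ge t,\,q\notin S_t)\ge\E(1-k(N_{t-1}+1)/n)_+$.
\item Thinning onto \eqref{eq:det_coverage} shows that the sum over $t\le T$ is at least $\Phi_{n,k,\rho}(T)/4$.
\end{itemize}
This covers the range $n/k\le T\le n/(k\rho)$, which neither \eqref{eq:det_coverage} nor your ``$T\gtrsim n/(k\rho)$'' argument reaches. Taking $\varepsilon=p_k\Phi_{n,k,\rho}(T)/(32T)$ yields $\Phi_{n,k,\rho}(T)/(16b_k)$. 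At $\rho=0$ this $\varepsilon$ is horizon-free, which gives the single family the statement requires; your $\epsilon$ is undefined at $\rho=0$.
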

Hold each action of Theorem~\ref{thm:main}'s \BP policy $\pi$ until its value is revealed, ignoring intervening labels~\citep[Algorithm~2]{dai2024probabilistic}. The revelation count $N_T\sim\operatorname{Binomial}(T,\rho)$ is independent of the complete virtual trajectory, giving $\rho R_T^{\mathrm{hold}}=\E R_{N_T}^\pi$ for $\rho>0$; concavity gives the upper bound. In the latent rank family, only revealed values distinguish $q$, and post-discovery surplus is $O(\varepsilon)$. The rank and binomial coverage arguments below yield the lower bound while retaining the dense-budget scale.

For each fixed $k$, the rate is $\Theta_k(\Phi_{n,k,\rho}(T))$; for growing $k$, the displayed bounds differ by $O(k^2)$. Ignoring numerical revelations and using \eqref{eq:bp_label_upper} also gives
\[
 \cV_{n,T}^{\Fbk_\rho,(k)}
 \le16\frac{n-k}{n}\min\!\left\{T,\frac{n+T}{k}\right\}
 \qquad(0\le\rho\le1).
\]
At $\rho=0$ and $T\ge n$, this reduces the gap between the available product-law bounds to $O(k)$. The matching labels-only law of Corollary~\ref{cor:bp_labels} concerns joint laws and fixed arrays; the budget dependence under product laws remains between these bounds. At fixed $n>k$, labels alone give linear minimax regret, whereas every fixed $\rho>0$ permits bounded regret.

\subsection{The independent-clock reduction}
Fix an independent-arm instance and the anytime \BP policy $\pi$ of Theorem~\ref{thm:main}. The wrapper repeats each proposed virtual action until the common numerical-revelation coin succeeds. It passes only the complete value-index observation to $\pi$ and does not pass the waiting length or unrevealed labels. In particular, singleton steps within rejection resampling are also repeated until their values are revealed.

For $\rho>0$, generate the entire virtual experiment from i.i.d. reward arrays and the policy's internal randomness, independently of an i.i.d. sequence of geometric waiting times. Unrevealed physical rewards can be filled in independently between successive virtual observations. Since revelation is independent of rewards, this construction has exactly the wrapper's law. The number $N_T$ of completed virtual rounds has law $\operatorname{Binomial}(T,\rho)$ and is independent of the virtual experiment.

The current coin $C_t$ is independent of the current action and reward. Even though instantaneous regret may be negative,
\begin{equation}
 \rho R_T^{\mathrm{hold}}
 =\E\sum_{t=1}^TC_t(\mu^\star-W_t)
 =\sum_{s=0}^T\Prb(N_T=s)R_s^\pi,
 \qquad R_0^\pi=0.
\label{eq:clock}
\end{equation}
The identity includes the unfinished final waiting period. The function $x\mapsto\min\{(n-k)x/n,(n-k)/k\}$ is concave on $[0,\infty)$, so
\[
 R_T^{\mathrm{hold}}\le\frac{1400}{\rho}\E\Phi_{n,k}(N_T)
 \le1400\min\!\left\{\frac{n-k}{n}T,\frac{n-k}{k\rho}\right\}.
\]
For $\rho=0$, the wrapper never advances and ignores every observed label. Therefore $R_T^{\mathrm{hold}}=TR_1^\pi\le1400(n-k)T/n$. Thus one policy attains the stated upper bound for every $\rho$ without knowing it. At $\rho=0$, repeating a uniformly random $k$-set improves the numerical constant to one.

\subsection{Rank moments for every budget}
\label{app:rank_moments}
For a hidden arm $q$ and independent nonhidden arms with a common continuous CDF $F_0$, define the rank variable $Z=F_0(X_q)$.
In an $\ell$-set containing the hidden arm $q$, its winning probability equals $\E Z^{\ell-1}$. Requiring every legal action to produce a uniform winner label therefore demands that the first $k-1$ moments of $Z$ match those of $\mathrm{Uniform}[0,1]$.
\begin{lemma}[A rank-matching construction]
\phantomsection\label{lem:rank_moments}
For every $k\ge2$, there is a finitely supported probability law $\nu_k$ on $[0,1]$ with
\begin{equation}
 \int z^j\nu_k(dz)=\frac1{j+1}\quad(0\le j\le k-1),
 \qquad \nu_k(\{1\})=p_k:=\frac1{b_k},
 \quad b_k=\left\lfloor\frac{(k+1)^2}{4}\right\rfloor.
 \label{eq:rank_moments}
\end{equation}
\end{lemma}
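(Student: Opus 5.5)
The plan is to take $\nu_k$ to be a classical Gauss-type quadrature rule for Lebesgue measure on $[0,1]$ in which the endpoint $1$ is a prescribed node. Such a rule is a probability measure exactly when its weights are positive, and its node at $1$ carries the atom. Matching the moments $0,\ldots,k-1$ means exactness for all polynomials of degree at most $k-1$. The choice of rule depends on the parity of $k$.

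\emph{Odd $k=2r-1$.} Use the $r$-point Gauss--Radau rule on $[0,1]$ with fixed node $1$, which is exact up to degree $2r-2=k-1$. To build it, take the $r-1$ free nodes to be the zeros of the degree-$(r-1)$ orthogonal polynomial $\pi$ for the weight $(1-z)\,dz$ on $[0,1]$. These zeros are real, simple and lie in $(0,1)$. Exactness then follows from the usual division argument: write $f=(1-z)\pi(z)h(z)+\text{interpolant}$ with $\deg h\le r-2$.

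Positivity of the weights comes from integrating squared Lagrange-type polynomials that vanish at all nodes but one. For the endpoint weight, integrate $\pi(z)^2$, which vanishes at every interior node:
\[
 w_1=\frac{\int_0^1\pi^2}{\pi(1)^2}.
\]
Transporting the classical $[-1,1]$ value $2/r^2$ by the affine map halves it, so $w_1=1/r^2=1/b_k$, since $b_{2r-1}=r^2$.

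\emph{Even $k=2m$.} Use the $(m+1)$-point Gauss--Lobatto rule on $[0,1]$, with both endpoints as nodes and $m-1$ interior nodes. Take the interior nodes to be the zeros of the orthogonal polynomial for the weight $z(1-z)$; for $m=1$ there are none. This rule is exact up to degree $2(m+1)-3=2m-1=k-1$. Its endpoint weights equal $1/(n(n-1))$ with $n=m+1$, giving $1/(m(m+1))=1/b_{2m}$ because $\lfloor(2m+1)^2/4\rfloor=m^2+m$. As a check, $k=2$ gives the rule with atoms of mass $\tfrac12$ at $0$ and $1$, which matches the moments $1$ and $\tfrac12$.

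The step I expect to be the main obstacle is verifying the endpoint weight formulas, as opposed to merely citing them. I would compute directly with Jacobi polynomials. In the Lobatto case, $\pi=P^{(1,1)}_{m-1}$ (shifted) is proportional to $P_m'$, and the endpoint weight is obtained by applying the rule to $z\,P_m'(z)^2$ and using $P_m'(1)=m(m+1)/2$ on $[-1,1]$. In the Radau case one uses $\pi\propto (P_{r-1}+P_r)/(1+x)$ in the $[-1,1]$ convention, with the node at $-1$ reflected to $1$, together with the Legendre normalization. A fallback is to cite the standard tables, for example Abramowitz--Stegun, since only existence with the stated atom is needed. Finite support and support in $[0,1]$ are immediate from the construction.
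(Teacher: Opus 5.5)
Your proposal is correct and follows the paper's proof: $(m+1)$-point Gauss--Lobatto for $k=2m$ and $r$-point right Gauss--Radau for $k=2r-1$ (the paper's $(r+1)$-point rule for $k=2r+1$, reindexed), with the same exactness degrees and endpoint masses $1/b_k$; the paper cites the quadrature literature for positivity and the endpoint weights, whereas you sketch the derivations. One minor point if you write it out: in the Lobatto case the squared Lagrange basis polynomials have degree $2m$, one above the exactness degree, so positivity of the interior weights should instead come from integrating $z(1-z)\bigl(\pi(z)/(z-x_i)\bigr)^2$.
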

\begin{proof}
For $k=2r$, use the $(r+1)$-point Gauss--Lobatto rule on $[0,1]$, exact through degree $2r-1$, with endpoint masses $1/[r(r+1)]$. For $k=2r+1$, use the $(r+1)$-point right Gauss--Radau rule, exact through degree $2r$, with mass $1/(r+1)^2$ at one. These rules have positive weights; exactness for constants makes them probability laws. See \citet{joulak2009gautschi} for exactness and positivity, and \citet[Sections 3.2--3.3]{wang2014barycentric} for the endpoint weights, taking Jacobi parameters zero, reflecting the Radau rule when needed, and rescaling to $[0,1]$. The case $k=2$ is simply $(\delta_0+\delta_1)/2$.
\end{proof}

\paragraph{Hidden-arm construction.}
Let $0<\varepsilon\le1/64$, $a=1/2-\varepsilon$, $b=1/2+\varepsilon$, and set $D_0=\operatorname{Uniform}[a,b]$, with CDF $F_0$. At a uniformly hidden location $q$, independently draw $Z\sim\nu_k$ each round and set
\begin{equation}
 X_q=\begin{cases}a+2\varepsilon Z,&Z<1,\\1,&Z=1.\end{cases}
 \qquad
 \mu_q=\frac12+p_k\left(\frac12-\varepsilon\right).
 \label{eq:general_good_arm}
\end{equation}
All other arms have law $D_0$, independently across arms and rounds. Since $F_0(X_q)=Z$, in any $\ell$-set containing $q$ its winning probability is $\E Z^{\ell-1}=1/\ell$. The bad arms are exchangeable, so every selected label has probability $1/\ell$. If $q$ is absent, the same follows from i.i.d. bad rewards. Ties have probability zero because at most one selected arm is discrete; singleton labels are deterministic. Thus every legal winner-label kernel is independent of $q$.

For this instance, let $r_q(S)=\mu_q-\E_q\max_{i\in S}X_i$. If an $\ell$-set omits $q$, its maximum has mean $1/2+\varepsilon(\ell-1)/(\ell+1)$, so its regret is $p_k/2-\varepsilon[p_k+(\ell-1)/(\ell+1)]$. If it contains $q$, the surplus over $X_q$ is pointwise at most $2\varepsilon$. Therefore every legal action, before or after identifying $q$, satisfies
\begin{equation}
 r_q(S)\ge\frac{p_k}{2}\ind\{q\notin S\}-2\varepsilon.
 \label{eq:general_rank_cost}
\end{equation}

\subsection{Adaptive numerical coverage}
\label{app:feedback_coverage}
Give the policy a genie: at the end of the first round that both reveals a value and selects $q$, it learns $q$ exactly. Let $\tau$ be this round. Run the policy on a counterfactual all-bad trajectory with the same revelation-coin law and no genie trigger, extending it arbitrarily to otherwise impossible histories. Write $S_t^0$ for its action and $U_{t-1}$ for the union of its earlier revealed probe sets. Choose $q$ uniformly and independently of this trajectory. Before the trigger, histories couple exactly: unrevealed labels have the same kernel, and revealed actions avoiding $q$ contain only bad arms. Hence
\begin{equation}
 \Prb(\tau\ge t,\ q\notin S_t)
 =\E\frac{n-|U_{t-1}\cup S_t^0|}{n}
 \ge\E\left(1-\frac{k(N_{t-1}+1)}n\right)_+,
 \label{eq:reveal_cover}
\end{equation}
where $N_{t-1}\sim\operatorname{Binomial}(t-1,\rho)$ and $|U_{t-1}\cup S_t^0|\le k(N_{t-1}+1)$. The argument permits arbitrary adaptive policies.

\begin{lemma}[Binomial coverage]
\phantomsection\label{lem:binomial_coverage}
For every $n>k$, $T\ge1$, and $\rho\in[0,1]$,
\begin{equation}
 \widetilde K_{n,k,\rho}(T)
 :=\sum_{t=1}^T\E\left(1-\frac{k(N_{t-1}+1)}n\right)_+
 \ge\frac14\Phi_{n,k,\rho}(T).
 \label{eq:binomial_coverage}
\end{equation}
\end{lemma}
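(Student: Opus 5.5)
We need to show $\sum_{t=1}^T\E(1-k(N_{t-1}+1)/n)_+\ge\frac14\min\{\delta T,d/(k\rho)\}$, where $d=n-k$, $\delta=d/n$ and $N_{t-1}\sim\operatorname{Binomial}(t-1,\rho)$. Write $x=n/k>1$ and $f(N)=(1-k(N+1)/n)_+$. Since $f$ is convex in $N$, Jensen gives
\[
 \E f(N_{t-1})\ge\Bigl(1-\frac{k(1+\rho(t-1))}{n}\Bigr)_+=\Bigl(\delta-\frac{k\rho(t-1)}{n}\Bigr)_+ .
\]
This reduces the statement to a deterministic sum $\sum_{t=1}^T(\delta-\rho(t-1)/x)_+$. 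That sum is the coverage sum of \eqref{eq:det_coverage_detail} with per-round coverage rate $k\rho$ in place of $k$, shifted by one round, and starting from height $\delta$ rather than $1-k/n$. We assume $\rho>0$; at $\rho=0$ every term equals $\delta$, so the sum is $\delta T=\Phi_{n,k,0}(T)$.

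\emph{Case split on the time at which the linear term reaches zero.} The term vanishes once $t-1\ge t^\ast:=\delta x/\rho=d/(k\rho)$.

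\emph{Case $T\le t^\ast/2$.} Every term is at least $\delta/2$, so the sum is at least $\delta T/2\ge\frac14\Phi_{n,k,\rho}(T)$.

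\emph{Case $T>t^\ast/2$.} Truncate the sum at $T'=\min\{T,\lfloor t^\ast/2\rfloor+1\}$. Each included term has $t-1\le t^\ast/2$, hence is at least $\delta/2$. This gives a bound of $\frac\delta2\min\{T,\lfloor t^\ast/2\rfloor+1\}$. Because $\lfloor y\rfloor+1\ge y$, this is at least $\frac\delta2\min\{T,t^\ast/2\}$, which in this case equals $\delta t^\ast/4$. Finally, $\delta t^\ast/4=\frac{\delta d}{4k\rho}$.

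\emph{The main obstacle: the spare factor $\delta$.} The bound just obtained carries an extra factor $\delta$ compared with the target $\frac{d}{4k\rho}$. So the Jensen step with a constant cutoff is too lossy when $\delta$ is small, i.e.\ in the dense regime $k$ close to $n$. I would fix this by reparametrizing so that the term $1$ is kept rather than $\delta$. Define $g(t)=\Prb(N_{t-1}=0)\,\delta=(1-\rho)^{t-1}\delta$, which is a lower bound for $\E f(N_{t-1})$ because $f(0)=\delta$. Summing it gives
\[
 \delta\,\frac{1-(1-\rho)^T}{\rho}\ge\delta\min\{T,1/\rho\}/2 .
\]
This holds since $1-(1-\rho)^T\ge\frac12\min\{1,\rho T\}$. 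When $k\ge n/2$, i.e.\ $n/k\le2$, this yields $\frac{\delta}{2\rho}\ge\frac{d}{4k\rho}$ whenever $1/\rho\le T$, and $\delta T/2$ otherwise. That covers the dense regime. For $k<n/2$ we have $\delta>1/2$, and the Jensen bound above already gives $\frac{\delta d}{4k\rho}\ge\frac{d}{8k\rho}$. To reach the constant $\frac14$ there, I would sharpen the truncation. Cut at $t^\ast\cdot c$ with $c$ optimized, or compute the triangular sum exactly as in Appendix~\ref{app:lower}, which gives about $\delta t^\ast/2$ minus rounding. Alternatively, combine the two bounds by taking their maximum. Checking that the constants close to exactly $\frac14$ uniformly, including rounding when $t^\ast<1$, is the step I expect to need the most care.
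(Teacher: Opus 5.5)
Your Jensen reduction is valid, and your dense-regime argument (for $k\ge n/2$, via $\E f(N_{t-1})\ge\delta(1-\rho)^{t-1}$ and $x\le2$) is complete. The sparse regime is not closed as written. For $k<n/2$ and $T>t^\ast$, the target $\frac14\min\{\delta T,t^\ast\}$ exceeds your truncation bound $\delta t^\ast/4$, and two of the three repairs you list cannot fix this. Any constant cutoff $ct^\ast$ gives at most $\delta c(1-c)t^\ast\le\delta t^\ast/4$, which is no better than what you have. The zero-count bound $\delta/(2\rho)$ lies below $\delta x/(4\rho)=t^\ast/4$ whenever $x>2$, so taking the maximum of the two bounds still falls short once $T\ge x/\rho$.

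The repair that works is the exact triangle. Since $s\mapsto(\delta-\rho s/x)_+$ is nonincreasing, its left-endpoint sum dominates its integral:
\[
 \sum_{t=1}^T\Bigl(\delta-\frac{\rho(t-1)}{x}\Bigr)_+
 \ge\int_0^T\Bigl(\delta-\frac{\rho s}{x}\Bigr)_+ds
 \ge\frac\delta2\min\{T,t^\ast\}.
\]
There is no rounding loss, including when $t^\ast<1$. For $k<n/2$ you have $\delta>\frac12$, and $t^\ast=d/(k\rho)$. Hence the right side is at least $\frac14\min\{\delta T,t^\ast\}=\frac14\Phi_{n,k,\rho}(T)$. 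This finishes the proof and makes your split at $T=t^\ast/2$ unnecessary.

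With that step supplied, your argument is a genuinely different route from the paper's. The paper never bounds rounds individually. Thinning by the independent revelation coin enumerates the successful rounds exactly once, so $\rho\widetilde K_{n,k,\rho}(T)=\E C_{n,k}(N_T)$. It then applies the deterministic coverage bound \eqref{eq:det_coverage}, in the form $C_{n,k}(s)\ge\frac{d}{2n}\min\{s,x\}$, at the binomial count. Finally it uses the concave envelope $\min\{s,x\}\ge x[1-(1-1/x)^s]$, whose binomial mean $x[1-(1-k\rho/n)^T]$ is explicit. This keeps the full law of the count (the paper explicitly avoids replacing it by its mean) and gives $\frac14$ uniformly in one line while reusing the deterministic lemma.

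Your per-round route is more elementary and avoids the thinning identity. However, each of your two bounds loses a factor in the other's regime: Jensen loses order $\delta$ when $k$ is close to $n$, and the zero-count bound loses $x/2$ when $k\ll n$. The case split at $k=n/2$ is therefore intrinsic to your approach.
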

\begin{proof}
At $\rho=0$, $\widetilde K_{n,k,0}(T)=\Phi_{n,k,0}(T)$. For $\rho>0$, retain the random count rather than replacing it by its mean. With $d=n-k$, $x=n/k>1$, and $C_{n,k}(0)=0$, the deterministic inequality \eqref{eq:det_coverage} gives
\[
 C_{n,k}(s)\ge\frac d{2n}\min\{s,x\}.
\]
Thinning by the independent current revelation coin enumerates the successful rounds exactly once, so
\[
 \rho\widetilde K_{n,k,\rho}(T)
 =\E\sum_{t=1}^TC_t\left(1-\frac{k(N_{t-1}+1)}n\right)_+
 =\E C_{n,k}(N_T).
\]
For integer $s\ge0$, $\min\{s,x\}\ge x[1-(1-1/x)^s]$. Taking the binomial expectation and using $1-e^{-y}\ge\min\{y,1\}/2$ yields
\[
 \widetilde K_{n,k,\rho}(T)
 \ge\frac d{2k\rho}\left[1-(1-k\rho/n)^T\right]
 \ge\frac d{4k\rho}\min\{k\rho T/n,1\}
 =\frac14\Phi_{n,k,\rho}(T).
\]
\end{proof}

\paragraph{Completing the lower bound.}
Let $D^{(q)}$ denote the instance with good arm $q$. Equations~\eqref{eq:general_rank_cost}--\eqref{eq:binomial_coverage} give, for every policy $\pi$, the prior-average bound
\[
 \begin{aligned}
 \overline R_T^\pi&:=\frac1n\sum_{q=1}^nR_T^\pi(D^{(q)}),\\
 \overline R_T^\pi&\ge\frac{p_k}{2}\widetilde K_{n,k,\rho}(T)-2\varepsilon T
 \ge\frac{p_k}{8}\Phi_{n,k,\rho}(T)-2\varepsilon T.
 \end{aligned}
\]
Choose $\varepsilon=p_k\Phi_{n,k,\rho}(T)/(32T)$, which lies in $(0,1/64)$. This proves the lower bound in \eqref{eq:feedback_law}. The prior bound holds for every policy, even one knowing the instance family and all its parameters, and accounts for regret after the genie trigger. At $\rho=0$, $\varepsilon=p_k(n-k)/(32n)$ is independent of $T$, so the same family works at every horizon. For $\rho>0$, the fixed-horizon minimax definition permits a horizon-dependent choice of stationary instance.

\section{Action constraints and coverage}
\label{app:constraints}
We first characterize bounded regret for downward-closed action families containing all singletons, then extend the budget law to class quotas.

\subsection{The pair-feasibility threshold}
\begin{proposition}[Feasible pairs]
\phantomsection\label{prop:feasible_pairs}
Let $\mathcal F\subseteq2^{[n]}$ be downward closed and contain every singleton. A distribution-uniform horizon-independent regret bound against the best fixed single arm exists under product rewards and \BP if and only if every pair belongs to $\mathcal F$. The same equivalence holds for \AP on fixed arrays.
\end{proposition}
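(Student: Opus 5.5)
The plan is to prove both directions separately, for \BP on product rewards and for \AP on fixed arrays.

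\emph{Sufficiency.} Suppose every pair lies in $\mathcal F$. Every action used by the two-probe engines is then feasible. The pair-probe policy of Theorem~\ref{thm:pair} on $m=n$ arms probes only pairs and singletons, so for $n\ge3$ it gives regret below $236n-360$ under product rewards and \BP, uniformly in the law and the horizon. For $n=2$, the pair $[n]$ is itself feasible and probing it every round gives nonpositive regret. For fixed arrays under \AP, the contrast policy of Theorem~\ref{thm:ap_pair} with $m=n$ probes at most two labels per round. Since \AP simulates \CT, it gives $R_T^{\mathrm{seq}}\le32(n-\sqrt n)$. Both bounds are horizon-independent and distribution-uniform.

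\emph{Necessity.} Suppose some pair $\{i,j\}\notin\mathcal F$. By downward closure, no feasible set contains both $i$ and $j$, so every action omits at least one of them. I will reduce to a two-armed bandit with $k=1$ on arms $i,j$ and show regret growing with $T$. Concretely, set every arm other than $i,j$ to be deterministically $0$, and let $X_i,X_j$ be independent Bernoulli variables with means $\tfrac12\pm\varepsilon$, with the sign hidden. This is a product law. Any feasible action contains at most one of $i,j$ together with zero arms. Its \BP feedback (and even its \AP feedback) is therefore a function of that single arm's reward, plus deterministic information, and its payoff equals that arm's reward or $0$. A policy on $\mathcal F$ thus induces a policy for the ordinary two-armed Bernoulli bandit with the same or larger regret: actions containing neither arm cost at least $\tfrac12-\varepsilon$ and reveal nothing, so they can be replaced by pulling either arm. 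The standard two-armed lower bound with $\varepsilon\asymp T^{-1/2}$ (as cited in Appendix~\ref{app:lower}, \citet[Exercise~15.4]{lattimore2020bandit}, or the elementary $\Omega(\sqrt T)$ KL argument) gives regret tending to infinity with $T$. No horizon-independent bound can exist. For fixed arrays under \AP, I will average the reward arrays drawn from these laws. Because $T\mu^\star\le\E\max_\ell\sum_t X_{\ell,t}$, the expected sequence regret dominates the pseudo-regret, so some fixed array attains the diverging lower bound.

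The main obstacle is the reduction step. I must check that the unselected member and the zero arms leak no information and that the embedded policy's regret is comparable. Tie-breaking against zero arms is harmless because a winning zero label reveals only that the probed Bernoulli arm is $0$, which is already implied by $W_t=0$. Payoffs are exactly the single included arm's reward.
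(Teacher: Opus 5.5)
Your proposal is correct and follows the paper's proof. Sufficiency comes from Theorem~\ref{thm:pair} for $n\ge3$, Theorem~\ref{thm:ap_pair} for fixed arrays, and probing both arms when $n=2$. Necessity embeds a hard two-armed instance at the infeasible pair with zeros elsewhere, applies the two-armed $\Omega(\sqrt T)$ lower bound, and averages reward arrays for the \AP fixed-array case. Your extra details (replacing actions that contain neither arm, and checking that tie labels leak nothing) spell out what the paper's brief argument leaves implicit.
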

\begin{proof}
When every pair is feasible, use Theorem~\ref{thm:pair} for $n\ge3$ or Theorem~\ref{thm:ap_pair}, respectively; if $n=2$ simply probe both. If $\{i,j\}\notin\mathcal F$, downward closure prohibits selecting both in any legal action. Put an ordinary two-arm hard stochastic instance at these positions and zeros elsewhere. Every action selects and observes at most one nonzero arm, even under \AP, and earns at most that arm's reward. The two-arm bandit lower bound gives $\Omega(\sqrt T)$~\citep[Exercise~15.4]{lattimore2020bandit}. Averaging its reward arrays transfers the same obstruction to the fixed-array comparator.
\end{proof}
Thus connectivity of the feasible-pair graph is insufficient. For a knapsack constraint $\sum_{i\in S}c_i\le B$ with $0\le c_i\le B$, the threshold is that the two largest costs sum to at most $B$.

\subsection{A partition-quota budget law}
\label{app:quotas}
Partition the labels into classes $C_c$ of sizes $n_c$, with integer quotas $1\le k_c\le n_c$. An action must satisfy $|S\cap C_c|\le k_c$ for every class. Assume each nonfull class has $k_c\ge2$, and let
\[
 \beta=\min_c\frac{k_c}{n_c},\qquad
 \Phi_\beta(T)=\min\{(1-\beta)T,(1-\beta)/\beta\}.
\]
\begin{proposition}[Heterogeneous quotas]
\phantomsection\label{prop:quotas}
For $\beta<1$, the minimax regret is $\Theta(\Phi_\beta(T))$, with universal constants independent of the number of classes, for product rewards with \BP and for fixed arrays with \AP. One anytime policy attains each upper bound. For $\beta=1$ the value is zero.
\end{proposition}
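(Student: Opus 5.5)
The plan is to treat Proposition~\ref{prop:quotas} as the coverage law of Theorem~\ref{thm:main} localized to the class with the smallest ratio, and to write $\beta=k_c/n_c$ for that class. The case $\beta=1$ is immediate, because then the whole set $[n]$ is a legal action and its maximum dominates every fixed arm pointwise, so regret is nonpositive on every array. For $\beta<1$ the bound has a lower half and an upper half.

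\emph{Lower bound.} Repeat the hidden-unit argument of Appendix~\ref{app:lower} inside a class $c$ attaining $\beta$. Place the unit-reward arm uniformly in $C_c$ and set everything else to zero. Along the all-zero transcript, each action covers at most $k_c$ labels of $C_c$. The prior-average regret is therefore at least $\sum_{t\le T}(1-k_ct/n_c)_+$, and the computation following \eqref{eq:det_coverage_detail} with $x=n_c/k_c=1/\beta$ gives at least $\tfrac12\Phi_\beta(T)$. These instances are product laws and fixed arrays simultaneously, and the full \AP transcript carries no information before discovery. So this single bound covers both feedback models.

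\emph{Upper bound.} Run the grouping constructions of Section~\ref{sec:budgets} inside every class at once, with a common virtual structure. For each nonfull class, partition $C_c$ into $m$ groups of size at most $\lfloor k_c/2\rfloor$ in the sparse case. Choose the common number of groups $m\asymp 1/\beta$ so that every class fits: $m=\max_c\lceil n_c/\lfloor k_c/2\rfloor\rceil$, padding with empty groups where needed. Then form virtual arm $g$ as the union over classes of each class's $g$-th group. Any two virtual arms together use at most $k_c$ labels of each class, so every virtual pair is feasible. Full classes are added to every action as a fixed component; this only increases each virtual arm and still dominates comparators in those classes. Theorems~\ref{thm:pair} and~\ref{thm:ap_pair} then give regret $O(m)=O(1/\beta)$. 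The disjoint unions remain independent marked arms, as in Appendix~\ref{app:streams}. The same ceiling bookkeeping as Appendix~\ref{app:budget_constants} should convert this to $O((1-\beta)/\beta)$ when $\beta\le 2/3$.

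In the dense case $\beta>2/3$, apply the gain-funded gate of Section~\ref{sec:observable} class by class. Each class gets a uniform baseline of size $n_c-2d_c$ and two residual groups of size $d_c=n_c-k_c$. The virtual groups $G_0,G_1,G_2$ are the unions of these pieces across classes, and $G_0\cup G_g$ respects every quota. A comparator in class $c$ misses $G_0$ with probability $2d_c/n_c\le 2(1-\beta)$. Lemma~\ref{lem:gate} and \eqref{eq:ap_dense} then give $O(1-\beta)$. For short horizons $T<1/\beta$, the trivial bound $T$ and the one-round bound suffice, just as in the proof of Theorem~\ref{thm:main}.

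\emph{Main obstacle.} The hard step is making the constants universal, independent of the number of classes, when the classes have very different sizes. In the sparse regime, padded empty groups in small classes are harmless, but the virtual arm count must be governed only by the worst ratio. I would check that $\lceil n_c/\lfloor k_c/2\rfloor\rceil\le 6n_c/k_c+1$ uniformly; this is where $k_c\ge2$ is needed. I would also use the Appendix~\ref{app:budget_constants} estimate with $d/k$ replaced by $(1-\beta)/\beta$. A second point to verify is that mixing dense and sparse classes is handled by a single regime switch on $\beta$. A class with a large ratio can always be treated sparsely at cost $O(1/\beta)$, so one regime choice based on $\beta$ serves all classes.
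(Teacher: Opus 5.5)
Your lower bound, your $\beta=1$ case, your dense-regime split into class-wise baselines and residual pieces, and your sparse global grouping with $m=\max_c\lceil n_c/\lfloor k_c/2\rfloor\rceil<4/\beta$ over nonfull classes all match the paper's proof. The step that fails is how you handle full classes in the sparse regime: ``Full classes are added to every action as a fixed component.''

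Write $F$ for the union of the full classes. Your virtual arms become $V_g=G_g\cup F$, and they all share $F$. They are no longer disjoint, so your next claim, that they remain independent marked arms, is false. Hence Theorem~\ref{thm:pair} cannot be invoked for \BP under product rewards. Its analysis runs through Lemma~\ref{lem:gain_variance} and \eqref{eq:optimal_gain}, which need independent arms and break under a shared component. For example, take two cores with constant reward $1/2$ sharing a Bernoulli$(1/2)$ arm in $F$. Then $V_a=V_j$, each with variance $1/16$, but the surplus is zero.

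The other reading does not work either: running the pair policy on the disjoint cores $G_g$ and probing $F$ alongside. When a label in $F$ wins, \BP does not reveal the pair comparison. And if the recovery singletons omit $F$, put a deterministic unit arm in $F$ and make every core zero. Then every block spends a zero-reward singleton, which gives linear regret.

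Under \AP on fixed arrays your modification is harmless. There you observe $Y_{G_a}$ and $Y_{G_j}$ separately, and $F$ only raises the payoff. So the defect is confined to the \BP half.

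The repair is what the paper does: assign each full class wholesale to a single group. This works for three reasons:
\begin{itemize}
\item A full class never constrains an action, because a pair of groups contains at most $n_f=k_f$ of its labels, so every pair stays feasible.
\item The groups stay disjoint, hence independent marked arms as in Appendix~\ref{app:streams}.
\item A comparator in a full class is dominated by its group.
\end{itemize}
Then $236m<944/\beta\le2832(1-\beta)/\beta$ for $\beta\le2/3$, as you sketched. In your dense construction, full classes have $d_c=0$ and land inside $G_0$, so no overlap arises there.

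One smaller check in the dense regime. Besides $G_0\cup G_g$, the activated three-arm policy also probes $G_1\cup G_2$. This needs $2d_c\le k_c$, i.e.\ $k_c/n_c\ge2/3$. Your regime split guarantees it, but you should state it, together with the nonemptiness of $G_0$.
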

\begin{proof}
Put the deterministic hidden arm in a class attaining $\beta$ and set all other rewards to zero. Its per-round quota gives lower bound $\Phi_\beta(T)/2$ by \eqref{eq:det_coverage}.

If $\beta<2/3$, set
\[
 m=\max_{c:k_c<n_c}\left\lceil\frac{n_c}{\lfloor k_c/2\rfloor}\right\rceil<4/\beta.
\]
For each nonfull class, distribute its labels across $m$ disjoint global groups with at most $\lfloor k_c/2\rfloor$ labels in each. A class attaining $m$ supplies a nonempty piece to every group; assign fully available classes to any one group. Every pair of groups is feasible. The product-reward pair theorem costs less than $236m<944/\beta<2832(1-\beta)/\beta$, and the fixed-array theorem costs less than $32m<384(1-\beta)/\beta$. Before $T=1/\beta$, $R_T\le T<3(1-\beta)T$.

If $\beta\ge2/3$, set $d_c=n_c-k_c$. In each class choose a uniform baseline of size $n_c-2d_c$ and split the remainder into two $d_c$-sets. Combining corresponding pieces produces three disjoint global groups. They are nonempty when $\beta<1$ and every pair is feasible. An arm in class $c$ misses the baseline with probability $2d_c/n_c\le2(1-\beta)$. The recovered-gain gate gives at most $1396(1-\beta)$ regret; the directly observed gate gives at most $164(1-\beta)$. At the only integer horizon below $1/\beta$, namely $T=1$, an arm of class $c$ is included with probability $k_c/n_c\ge\beta$. This proves the full finite-horizon law. Product grouping preserves independence; the fixed-array policy needs none. For $\beta=1$, selecting every arm gives the zero minimax value.
\end{proof}
The global grouping is important: running separate learners and summing their regrets would introduce a class-count factor that is absent here. In the fixed-array case the same construction uses \CT, since all global groups are disjoint.

\needspace{12\baselineskip}
\section{The roles of credit, scouts, and payoff surplus}
\label{app:mechanisms}
The first two examples show that two specific simplifications of Algorithm~\ref{alg:pair} incur linear regret: unfunded switching and runner-only exploration. The third isolates the payoff surplus by removing it from the regret objective while revealing every reward.

\subsection{Unfunded switches incur persistent cost}
\label{app:no_bank}
On deterministic rewards $(1,0.9,0)$, the scores are exact after initialization and the default anchor is optimal. Retaining it yields zero gain, so the banks stay empty and positive switching-cost estimates prevent switches. Every later pair probe earns one.

Replace the bank rule by a fixed switch probability $1/3$. Every pair probe still earns one, but each switch adds a recovery singleton with expected gap $(2/3)(1/10)+(1/3)\cdot1=2/5$. If $p_s$ is the probability that physical round $s$ after initialization is such a singleton, then $p_1=0$ and $p_{s+1}=(1-p_s)/3$. Thus $p_s\to1/4$, so recovery singletons occupy a positive fraction of physical rounds and $R_T^{\mathrm{no\ bank}}=\Omega(T)$. The bank prevents this persistent cost of unnecessary sample advancement.

\subsection{Without scouts, an unseen good arm can remain unseen}
\label{app:no_scout}
Let arm $m\ge3$ be Bernoulli$(\varepsilon)$, $0<\varepsilon<1$, and all other arms be zero, with smaller labels winning ties. If the policy always chooses the runner, then with probability $(1-\varepsilon)^2$ both initial samples of arm $m$ are zero. All scores are then zero, and only labels $1,2$ are revisited. On this event every subsequent physical round costs $\varepsilon$. Every action has nonnegative expected regret on this instance, so
\[
 \liminf_{T\to\infty}\frac{R_T^{\mathrm{no\ scout}}}{T}
 \ge(1-\varepsilon)^2\varepsilon>0.
\]
Scouting gives every remaining arm a positive chance of advancing while comparisons concentrate on the runner.

\subsection{Removing the payoff surplus}
\label{app:weak}
For product rewards, define mean-based weak regret and the surplus of a selected set by
\[
 R_T^{\mathrm{weak},\mu}=\E\sum_{t=1}^T
 \left(\mu^\star-\max_{i\in S_t}\mu_i\right),\qquad
 \sigma(S)=\E\max_{i\in S}X_i-\max_{i\in S}\mu_i.
\]
Then $R_T=R_T^{\mathrm{weak},\mu}-\E\sum_{t\le T}\sigma(S_t)$. This objective extends the utility-based weak regret of \citet[Section~4.4]{chen2017weak}, with utilities $\mu_i$.

Consider $n=3,k=2$ with every round ending in observation of all three rewards. Even with this full feedback, the minimax mean-based weak regret over independent $[0,1]$ arm distributions is at least $\sqrt T/48$.

\paragraph{Lower bound.}
Let $\varepsilon=1/(8\sqrt T)$. For a fixed policy, let $P_0$ be the law of its full observable trajectory when all three arms are $\Bern(1/2)$, and let $P_i$ be the corresponding law when only arm $i$ is changed to $\Bern(1/2+\varepsilon)$. Let $O_i$ count rounds omitting arm $i$; then $\sum_i\E_0O_i\ge T$. The common policy kernels contribute no divergence, so
\[
 \operatorname{KL}(P_0,P_i)
 =T\operatorname{kl}(1/2,1/2+\varepsilon)
 =-\frac T2\log(1-4\varepsilon^2)
 \le\frac83T\varepsilon^2=\frac1{24}.
\]
Pinsker's inequality and $0\le O_i\le T$ give
\[
 \frac13\sum_i\E_iO_i\ge\frac T3-\frac{T}{4\sqrt3}\ge\frac T6.
\]
The prior-average weak regret is therefore at least $\varepsilon T/6=\sqrt T/48$.

The change-of-measure inequalities used here are given in \citet{lattimore2020bandit}. The lower bound isolates the role of the payoff: observing all rewards does not yield bounded regret once the maximum's surplus is removed.

\end{document}